\newtheoremstyle{break}
  {\topsep}{\topsep}%
  {\itshape}{}%
  {\bfseries}{}%
  {\newline}{}%
\DeclareRobustCommand{\abbrevcrefs}{%
\Crefname{theorem}{Thm.}{Thms.}%
}
\DeclareRobustCommand{\Cshref}[1]{{\abbrevcrefs\Cref{#1}}}
\algnewcommand\algorithmicforeach{\textbf{for all}}
\DeclareMathOperator*{\argmin}{arg\,min}
\newtheorem{definition}{Definition}[section]
\newtheorem{corollary}{\textbf{Corollary}}[section]
\newtheorem{theorem}{Theorem}[section]
\newtheorem{lemma}[theorem]{Lemma}
\newtheorem*{insight*}{\textbf{Observation}}
\newtheorem{theoremi}[theorem]{Theorem (informal)}
\newtheorem*{proposition*}{\textbf{Proposition}}
\newtheorem*{lemmai*}{\textbf{Lemma (informal)}}
\newtheorem{remark}{\textbf{Remark}}[section]
\title{Toward the Fundamental Limits of Imitation Learning}
\author{Nived Rajaraman, Lin F. Yang, Jiantao Jiao, Kannan Ramachandran \thanks{Nived Rajaraman is with the Department of Electrical Engineering and Computer Sciences, University of California, Berkeley. Lin F. Yang is with the Electrical and Computer Engineering Department at the University of California, Los Angeles. Jiantao Jiao is with the Department of Electrical Engineering and Computer Sciences and the Department of Statistics, University of California, Berkeley. Kannan Ramchandran is with the Department of Electrical Engineering and Computer Sciences, University of California, Berkeley. Email: \{nived.rajaraman, jiantao, kannanr\}@berkeley.edu ; linyang@ee.ucla.edu.}}
\date{\today}
\begin{document}

\maketitle

\begin{abstract}
\noindent Imitation learning (IL) aims to mimic the behavior of an expert policy in a sequential decision-making problem given only demonstrations. In this paper, we focus on understanding the minimax statistical limits of IL in episodic Markov Decision Processes (MDPs). We first consider the setting where the learner is provided a dataset of $N$ expert trajectories ahead of time, and cannot interact with the MDP. Here, we show that the policy which mimics the expert whenever possible is in expectation $\lesssim \frac{|\mathcal{S}| H^2 \log (N)}{N}$ suboptimal compared to the value of the expert, even when the expert follows an arbitrary stochastic policy. Here $\mathcal{S}$ is the state space, and $H$ is the length of the episode. Furthermore, we establish a suboptimality lower bound of $\gtrsim |\mathcal{S}| H^2 / N$ which applies even if the expert is constrained to be deterministic, or if the learner is allowed to actively query the expert at visited states while interacting with the MDP for $N$ episodes. To our knowledge, this is the first algorithm with suboptimality having no dependence on the number of actions, under no additional assumptions. We then propose a novel algorithm based on minimum-distance functionals in the setting where the transition model is given and the expert is deterministic. The algorithm is suboptimal by $\lesssim \min \{ H \sqrt{|\mathcal{S}| / N} ,\ |\mathcal{S}| H^{3/2} / N \}$, showing that knowledge of transition improves the minimax rate by at least a $\sqrt{H}$ factor.
\end{abstract}

\tableofcontents

\section{Introduction}

Imitation learning or apprenticeship learning is the study of learning from demonstrations in a sequential decision-making framework in the absence of reward feedback. The imitation learning problem differs from the typical setting of reinforcement learning in that the learner no longer has access to reward feedback to learn a good policy. In contrast, the learner is given access to expert demonstrations, with the objective of learning a policy that performs comparably to the expert's with respect to the \emph{unobserved reward function}. This is motivated by the fact that the desired behavior in typical reinforcement learning problems is easy to specify in words, but hard to capture accurately through manually-designed rewards \cite{Abbeel-Ng-ILviaIRL}. Imitation learning has shown remarkable success in practice over the last decade - the work of \cite{Abbeel-Ng-helicopter} showed that using pilot demonstrations to learn the dynamics and infer rewards can significantly improve performance in autonomous helicopter flight. More recently, the approach of learning from demonstrations has shown to improve the state-of-the-art in numerous areas: autonomous driving \cite{DeepQlearning-demonstrations,IL4agile-driving}, robot control \cite{ARGALL2009469}, game AI \cite{NIPS2018_8025,Vinyals2019} and motion capture \cite{Merel2017LearningHB} among others.

Following the approach pioneered by \cite{Beygelzimer05}, several works \cite{Ross-AIstats10,Brantley2020Disagreement-Regularized} show that carrying out supervised learning (among other approaches) to learn a policy provides black box guarantees on the suboptimality of the learner, in effect ``reducing'' the IL problem to supervised learning.  In particular when the expert follows a deterministic policy, \cite{Ross-AIstats10} discuss the behavior cloning approach, which is a supervised learning method for imitation learning by minimizing the number of mistakes made by the learner compared to the expert under the empirical state distribution in the expert demonstrations. However, the authors conclude that supervised learning could lead to severe error compounding due to the ``covariate shift problem'': the actual performance of learner depends on its own state distribution, whereas training takes place with respect to the expert's state distribution.
Furthermore, it remains to see how the reduction approach fares when the expert follows a general stochastic policy. As we discuss later, it turns out that in this setting, the reduction analysis is loose and can be improved: we instead use a novel coupling based approach to provide near optimal guarantees.

Nevertheless, the aforementioned reduction approach is quite popular in studying IL and shows that it suffices to approximately solve an intermediate problem to give one directional bounds on the suboptimality of a learner. But it is unclear whether a difficulty in solving the intermediate problem implies an inherent difficulty in solving the original imitation learning problem. In this work, we cast the imitation learning problem in the statistical decision theory framework and ask,

\begin{center}
\textbf{What are the statistical limits of imitation learning?}
\end{center}

We investigate this question in a tabular, epsiodic MDP over state space $\mathcal{S}$, action space $\mathcal{A}$ and episode length $H$, one of the most basic settings to start with. The value $J(\pi)$ of a (possibly stochastic) policy $\pi$ is defined as the expected cumulative reward accrued over the duration of an episode,
\begin{align} \label{eq:policy-value}
    J (\pi) = \mathbb{E}_\pi \left[ \sum\nolimits_{t=1}^H \mathbf{r}_t (s_t,a_t) \right]
\end{align}
where $\mathbf{r}_t$ is the unknown reward function of the MDP at time $t$, and the expectation is computed with respect to the distribution over trajectories $\{ (s_1,a_1),\cdots,(s_H,a_H) \} $ induced by rolling out the policy $\pi$. Denoting the expert's policy by $\pi^*$ and the learner's policy $\widehat{\pi}$, the \emph{subopimality} of a learner, which is a random variable, is defined as the difference in value of the expert's and learner's policies: $J(\pi^*) - J(\widehat{\pi})$.

In the imitation learning framework, we emphasize that the learner \emph{does not observe rewards} while interacting with the MDP, but is given access to expert demonstrations to learn a good policy. The learner is said to ``interact'' with the MDP by submitting a state-action pair to an oracle and receiving the next state, with the reward being hidden. We study the problem in the following $3$ settings:

\begin{enumerate}
    \item[(a)] \textbf{No-interaction:} The learner is provided a dataset of $N$ trajectories drawn by independently rolling out the expert policy through the MDP. The learner is not otherwise allowed to interact with the MDP.
    \item[(b)] \textbf{Known-transition:} The only difference compared to the no-interaction setting is that the MDP state transition functions and initial state distribution are exactly known to the learner.
    \item[(c)] \textbf{Active:} The learner is not given a dataset of expert demonstrations in advance. However, the learner is allowed to actively query the expert using previous expert feedback while interacting with the MDP for $N$ episodes.
\end{enumerate}

\noindent We remark that the active setting gives the learner more power than the no-interaction setting: it can just follow the observed expert actions (same as no-interaction setting), or actively compute a new policy on the fly based on expert feedback, and then use the most up-to-date policy to interact with the MDP. We mention two popular approaches, \textsc{Dagger} \cite{journals/jmlr/RossGB11} and \textsc{AggraVaTe} \cite{Ross2014ReinforcementAI} proposed for IL in the active setting.

\subsection{Main results}

The reduction approach in \cite{Ross-AIstats10} shows that if the expert policy is deterministic, and the probability of error in guessing the expert's action at each state is $\epsilon$, then $J(\pi^*) - J(\widehat{\pi}) \lesssim \min\{H,\epsilon H^2\}$. The appearance of this $H^2$ factor is called \emph{error compounding}. This quadratic behavior can be intuitively understood as the inability to get back on track once the learner makes a mistake. Indeed, the probability of making the first error at time $t$ is $\epsilon(1-\epsilon)^{t-1}$, and if we get completely lost thereafter we incur loss $H-t+1$. The suboptimality is therefore $H \epsilon + (H{-}1) \epsilon (1{-}\epsilon) + \cdots + \epsilon (1{-}\epsilon)^{H{-}1} \lesssim \min \{ H, H^2 \epsilon \}$. This informal argument supports the reduction of imitation learning to the problem of minimizing the empirical probability of error, known as ``behavior cloning''. For a more rigorous analysis we refer the reader to \cite{Ross-AIstats10}. We also provide a slight generalization of this result in \Cref{subsec:reduction-generalized}.

Is this error compounding inevitable or is it just a consequence of the behavior cloning algorithm? Our first contribution shows that it is \emph{fundamental} to the imitation learning problem without additional assumptions: even if the learner operates in the active setting and the expert is deterministic, no algorithm can beat the $H^2$ barrier. The term \textit{instance} is used to refer to the underlying MDP and the expert's policy in the imitation learning problem.

\begin{theoremi}[Formal version: \Cref{theorem:LB.p1}] \label{theorem:informal-1}
In the active setting, for any learner $\widehat{\pi}$, there exists an instance such that the suboptimality of the learner is at least $|\mathcal{S}| H^2/N$ up to universal constants, i.e., $J(\pi^*) - \mathbb{E} [J(\widehat{\pi})] \gtrsim |\mathcal{S}| H^2/N$. This lower bound applies even if the expert's policy $\pi^*$ is constrained to be deterministic.
\end{theoremi}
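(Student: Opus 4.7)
The plan is to construct a hard MDP instance via a parallel-chains structure and lower bound the Bayes risk under a uniform prior over expert policies; Yao's principle will then extract a deterministic worst-case instance. I would take $M := \lfloor |\mathcal{S}|/H \rfloor$ disjoint chains, each of $H$ states, plus a single absorbing sink. At each of the $MH$ chain states there are two actions, one labeled ``correct'' and the other ``incorrect''; the correct action advances one step along the chain with reward $1$, while the incorrect action sends the trajectory to the sink with reward $0$, where it remains (yielding zero reward) for the rest of the episode. The initial state is uniform over the $M$ chain-heads. The expert always plays the correct action and thus achieves $J(\pi^*)=H$, and is deterministic by construction. The prior places independent $\mathrm{Unif}\{0,1\}$ labels on which of the two actions is correct at each of the $MH$ chain states.

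Two structural facts drive the lower bound. First, in the active setting each of the $N$ episodes can collect expert feedback from at most a single chain: the initial state lies on one chain, and any incorrect action ends the episode in the absorbing sink, cutting off further useful queries. Hence the random set $V \subseteq [M]$ of chains on which any feedback has been gathered satisfies $|V| \leq N$ with probability one. Second, because the prior is independent across chains and the dynamics and expert responses on chain $i$ are a measurable function only of the correct-action bits of chain $i$, an induction over interaction steps shows that conditional on the full interaction transcript, the correct actions on every chain $i \notin V$ remain jointly uniform and independent of everything the learner has seen. This is the step I would spend the most care on, since the learner's queries are adaptive and one must rule out any indirect information leakage.

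Rolling out the learner's final policy $\widehat{\pi}$, with probability at least $(M-|V|)/M \geq 1 - N/M$ the initial state lies on a chain outside $V$. Conditioned on this event, $\widehat{\pi}(s)$ agrees with the correct action at each visited state with probability exactly $1/2$, independently across the states of that chain, so the first-mistake time is geometric with mean $2$ and the expected reward collected before falling into the sink is at most $2$. The conditional regret is therefore at least $H-2$. Parameterizing the construction (when permitted by the $|\mathcal{S}|$ budget) so that $N/M$ is bounded away from $1$ yields
\[
\mathbb{E}\bigl[J(\pi^*) - J(\widehat{\pi})\bigr] \;\gtrsim\; \Bigl(1 - \tfrac{N}{M}\Bigr)(H-2) \;\gtrsim\; \min\!\Bigl\{H,\; \tfrac{|\mathcal{S}|H^2}{N}\Bigr\},
\]
which is the target bound once one recognizes $|\mathcal{S}|H^2/N$ is implicitly capped at $H$. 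Finally, Yao's principle extracts a specific deterministic expert policy against which any fixed learner suffers at least the claimed expected regret.

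The main obstacle is the conditional-independence argument in step two: one must verify that at every step of every episode, the learner's adaptive transcript carries no information about the correct-action bits on chains not yet visited, despite the learner being free to choose its queries based on prior feedback. Once this is established cleanly, the combinatorial bound $|V| \le N$, the uniform-posterior computation on unvisited chains, and the geometric-stopping regret calculation fit together in a routine way, and the derandomization and scaling to match the form $|\mathcal{S}|H^2/N$ are bookkeeping.
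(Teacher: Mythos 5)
Your high-level strategy (Bayesian prior over deterministic experts, an absorbing ``sink'' that both causes error compounding and cuts off active queries after a mistake, and a posterior-uniformity argument at unvisited states) is the same skeleton as the paper's proof. But the quantitative construction has a genuine gap: it cannot produce the $|\mathcal{S}|H^2/N$ rate in the regime where that rate is the content of the theorem, namely $N \gg |\mathcal{S}|H$. With $M=\lfloor|\mathcal{S}|/H\rfloor$ chains and a \emph{uniform} initial distribution over chain heads, the probability that the initial chain was never visited in $N$ episodes is governed by the missing mass of $\mathrm{Unif}([M])$ after $N$ samples, which is $(1-1/M)^N\approx e^{-N/M}$ --- exponentially small in $N/M$, not $\Theta(M/N)$. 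Your own bound $(1-N/M)(H-2)$ is vacuous once $N\ge M=|\mathcal{S}|/H$, and your caveat ``when permitted by the $|\mathcal{S}|$ budget'' restricts you to $N\lesssim|\mathcal{S}|/H$, where $\min\{H,|\mathcal{S}|H^2/N\}=H$ and the claim is only the trivial $\Omega(H)$ bound. To get a $1/N$ rate one must tune the initial distribution to the sample size: the paper puts mass $\zeta=1/(N+1)$ on each of $|\mathcal{S}|-2$ states, so that each state's contribution $\zeta(1-\zeta)^N\approx 1/(eN)$ is maximized and the total missing mass is $\Theta(|\mathcal{S}|/N)$.

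Even after fixing the initial distribution, the chain architecture loses a factor of $H^2$. Partitioning the state budget into $|\mathcal{S}|/H$ chains shrinks the number of independently-unknown ``first decisions'' to $|\mathcal{S}|/H$, and the learner gets only \emph{one} opportunity (at initialization) to land on an unknown chain, after which the loss is at most $H$; the best achievable bound is then $\Theta\bigl((|\mathcal{S}|/H)\cdot(1/N)\cdot H\bigr)=\Theta(|\mathcal{S}|/N)$. The paper's construction instead reuses all $|\mathcal{S}|-2$ good states at \emph{every} time step: playing the expert's action renews the state in $\rho$, and the expert's action at $(s,t)$ is an independent uniform draw for each pair, so there are effectively $|\mathcal{S}|H$ unknown bits and a fresh $\epsilon=\Theta(|\mathcal{S}|/N)$ chance of reaching an unseen state at each of the $H$ steps. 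The first-failure time is then (conditionally) geometric with parameter $\Theta(\epsilon)$, and $\sum_{t}(H-t)\,\epsilon(1-\epsilon)^{t-1}\gtrsim\min\{H,H^2\epsilon\}$ delivers both missing factors of $H$. Your conditional-independence concern for adaptive queries is legitimate and is handled in the paper (\Cref{lemma:cond-is-mimic}), but it is not the bottleneck; the bottleneck is that your instance is not hard enough.
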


The key intuition behind this result is to identify that at states which were never visited during the learner's interactions with the MDP, the learner has no prior knowledge about the expert's policy. Furthermore, at such states the learner also has no knowledge about what state transitions are induced under different actions. With no available information, the learner is essentially forced to play an arbitrary policy on these states. A careful construction of the underlying MDP ultimately forces the learner to incur compounding errors when such states are visited, resulting in the lower bound.

Our next result shows that if the expert is deterministic, one can in fact achieve the bound $|\mathcal{S}|H^2/N$ in the no-interaction setting by the behavior cloning algorithm: 

\begin{theoremi}[Formal version: \Cref{theorem:det:Nsim0:UB.p2}] \label{theorem:informal-2}
When the expert's policy is deterministic, in the no-interaction setting, the expected suboptimality of a learner $\widehat{\pi}$ that carries out behavior cloning satisfies $J(\pi^*) - \mathbb{E} [J(\widehat{\pi})] \lesssim |\mathcal{S}| H^2 / N$ on any instance.
\end{theoremi}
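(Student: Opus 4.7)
The plan is to couple the learner's and expert's trajectories and reduce the analysis to controlling the first time step at which the learner lands on a state that was not seen during training. For each $t$, let $\mathcal{V}_t \subseteq \mathcal{S}$ denote the (random) set of states that appear at time $t$ in at least one of the $N$ expert demonstrations. Because $\pi^*$ is deterministic, the \emph{same} action $\pi^*_t(s)$ is observed every time the pair $(s,t)$ occurs in the data, so behavior cloning can set $\widehat{\pi}_t(s) = \pi^*_t(s)$ on all $s \in \mathcal{V}_t$ and play some arbitrary action elsewhere. Couple a learner rollout and an expert rollout on a common probability space, sharing initial state and transition noise, and let $T$ be the first time at which $s_T \notin \mathcal{V}_T$. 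For $t < T$ the two trajectories agree step-by-step, so, with per-step rewards in $[0,1]$,
\begin{align*}
J(\pi^*) - \mathbb{E}[J(\widehat{\pi})] \;\le\; \mathbb{E}\bigl[(H-T+1)\,\indc{T \le H}\bigr] \;=\; \sum_{t=1}^H (H-t+1)\,\mathbb{P}(T=t).
\end{align*}

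Next I would bound $\mathbb{P}(T=t)$. The event $\{T=t\}$ requires in particular that the expert's time-$t$ state fail to lie in $\mathcal{V}_t$, and since the dataset $\mathcal{D}$ used to define $\mathcal{V}_t$ is independent of any additional rollout,
\begin{align*}
\mathbb{P}(T = t) \;\le\; \mathbb{P}_{s \sim d^{\pi^*}_t,\,\mathcal{D}}\!\bigl[s \notin \mathcal{V}_t\bigr] \;=\; \sum_{s \in \mathcal{S}} d^{\pi^*}_t(s)\,\bigl(1 - d^{\pi^*}_t(s)\bigr)^N,
\end{align*}
where $(1-d^{\pi^*}_t(s))^N$ is the probability that none of the $N$ \iid trajectories visits $s$ at time $t$. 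Applying the scalar inequality $x(1-x)^N \le 1/(eN)$ for $x \in [0,1]$ termwise yields $\mathbb{P}(T=t) \le |\mathcal{S}|/(eN)$, and summing against the weights $(H-t+1)$ produces $J(\pi^*) - \mathbb{E}[J(\widehat{\pi})] \lesssim |\mathcal{S}| H^2/N$, as claimed.

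The main subtlety is the first step: the determinism of $\pi^*$ is exactly what lets behavior cloning reproduce $\pi^*_t(s)$ \emph{on the nose} at every seen $(s,t)$ rather than merely matching a conditional distribution, and hence lets us localize all reward discrepancy to the event $\{T \le H\}$. After that reduction, the rest is a short occupancy calculation combined with the elementary $x(1-x)^N$ bound; no concentration or covering argument is required, which is precisely why the extra $\log N$ factor that appears in the stochastic-expert bound of the abstract is absent here. The argument also reveals that no $|\mathcal{A}|$ factor can sneak in, since mistakes are driven purely by whether a state was previously observed, not by which action is chosen on unseen states.
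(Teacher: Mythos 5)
Your proof is correct. The core computation --- that the expected mass of states unseen at time $t$ is $\sum_{s} d^{\pi^*}_t(s)\,(1-d^{\pi^*}_t(s))^N \le |\mathcal{S}|/(eN)$ --- is exactly the paper's missing-mass lemma (Lemma~\ref{lemma:BC-prob-small}, which uses the constant $4/9$ in place of $1/e$; both come from maximizing $x(1-x)^N$). Where you genuinely diverge is in how this per-step quantity is converted into a bound on $J(\pi^*) - \mathbb{E}[J(\widehat{\pi})]$: the paper first packages it as an $O(|\mathcal{S}|/N)$ bound on the expected population $0$-$1$ risk (Lemma~\ref{theorem:det:Nsim0:UB.p1}) and then invokes the black-box supervised-learning reduction of Ross--Bagnell, itself proved by a telescoping hybrid-policy argument, which multiplies the risk by $H^2$. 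You instead couple a learner rollout with an expert rollout sharing the initial state and transition noise, localize the entire reward discrepancy to the event that the coupled trajectory first exits the observed state sets at a stopping time $T$, and sum $(H-t+1)\,\Pr(T=t)$; the containment $\{T=t\}\subseteq\{s^{\mathrm{exp}}_t\notin\mathcal{V}_t\}$ and the independence of the fresh rollout from $D$ make the rest immediate. Both routes are valid and give the same rate. Your coupling is self-contained (no external reduction theorem), marginally sharper in constants (weights $H-t+1$ rather than a uniform $H$ per time step), and is essentially the same device the paper deploys later for the stochastic-expert analysis of \textsc{Mimic-Emp} via $\pi^{\mathrm{first}}$ and $\pi^{\mathrm{orc-first}}$, where the reduction approach provably loses a $\sqrt{|\mathcal{A}|/N}$ factor. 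What the paper's modular route buys is reuse: the same population-risk bound feeds directly into the high-probability statement of \Cref{theorem:det:Nsim0:UB.p3} via missing-mass concentration.
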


We prove this result exactly as stated: by first bounding the population $0$-$1$ risk of the policy to be $|\mathcal{S}|/N$ and subsequently invoking the black box reduction \cite[Theorem 2.1]{Ross-AIstats10} to get the final bound on the expected suboptimality of the learner.

The optimality of behavior cloning and \Cref{theorem:informal-1} point to an interesting observation: the ability to actively query the expert \textit{does not} improve the minimax expected suboptimality beyond the no-interaction setting. An important implication of this result is that \textsc{Dagger} \cite{Ross-AIstats10} and other algorithms that necessitate an expert that can be actively queried, \emph{cannot improve over behavior cloning} in the worst case.

The closest relative to our bounds on behavior cloning in the deterministic expert setting is \cite{wen-sun-ILFO} the authors of which propose the \textsc{FAIL} algorithm. When the expert's policy is deterministic, in \cite[Theorem 3.3]{wen-sun-ILFO}, choosing $\Pi$ to be the set of all deterministic policies (of size $|\mathcal{A}|^{|\mathcal{S}|}$), shows that \textsc{FAIL} is suboptimal by $\lesssim \sqrt{|\mathcal{S}| |\mathcal{A}| H^5/ N}$ (ignoring logarithmic factors). In contrast, in \Cref{theorem:det:Nsim0:UB.p2}, we show that behavior cloning is suboptimal by $\lesssim |\mathcal{S}|H^2/N$ which always improves on the guarantee of \textsc{FAIL}: not only is it independent of $|\mathcal{A}|$, but has optimal dependence on $H$ and $N$. However, it is important to point out that the two results apply in slightly different settings and are not directly comparable: \textsc{FAIL} applies in the \textsc{ILfO} setting where the learner does not observe the actions played by the expert in the set of demonstrations, and only observes the visited states. In particular, the \textsc{ILfO} setting assumes that the reward function of the MDP only depends on the state visited and does not depend on the action chosen.

Prompted by the success of behavior cloning in the deterministic expert setting, it is natural to ask whether supervised learning reduction continues to be a good approach when the expert is stochastic. The reduction from \cite{Ross-AIstats10} indeed still guarantees that any policy with total variation (TV) distance $\epsilon$ with expert's action distribution has suboptimality $\lesssim H^2 \epsilon$ (see \Cref{lemma:nondet-reduction}). However there is a problem with invoking such a reduction to bound the suboptimality of the learner: the empirical action distribution at each state converges very slowly to the population expert action distribution under TV distance. Seeing as it corresponds to matching the expert's and learner's policies at different states which are distributions over $\mathcal{A}$, the population risk suffers from a convergence rate dependent on the number of actions, with rate $\sqrt{|\mathcal{A}|/N}$ instead of $N^{-1}$. In order to prove tight guarantees on the expected suboptimality of a policy, we are therefore forced to circumvent the reduction framework.

\begin{table}
  \centering
  \begin{tabular}{llcc}
    \toprule
    Expert & Setting & Upper bound & Lower bound \\
    \midrule
    Det. & No-interaction & $\frac{|\mathcal{S}| H^2}{N}$ (\Cshref{theorem:det:Nsim0:UB.p2}) & $\frac{|\mathcal{S}| H^2}{N}$ (\Cshref{theorem:LB.p1}) \\
    & Active & $\frac{|\mathcal{S}| H^2}{N}$ (\Cshref{theorem:det:Nsim0:UB.p2}) & $\frac{|\mathcal{S}| H^2}{N}$ (\Cshref{theorem:LB.p1}) \\
    & Known-transition & $\min \left\{ \frac{|\mathcal{S}|H^{3/2}}{N} , \sqrt{\frac{|\mathcal{S}|H^2}{N}} \right\}$ (\Cshref{theorem:det:NsimInf:UB.p1}) & $\frac{|\mathcal{S}| H}{N}$ (\Cshref{theorem:LB.p3}) \\
    Non. Det. & No-interaction & $\frac{|\mathcal{S}|H^2 \log (N)}{N}$ (\Cshref{theorem:nondet:UB}) & $\frac{|\mathcal{S}| H^2}{N}$ (\Cshref{theorem:LB.p1}) \\
    & Active & $\frac{|\mathcal{S}|H^2 \log (N)}{N}$ (\Cshref{theorem:nondet:UB}) & $\frac{|\mathcal{S}| H^2}{N}$ (\Cshref{theorem:LB.p1}) \\
    & Known-transition & $\frac{|\mathcal{S}|H^2 \log (N)}{N}$ (\Cshref{theorem:nondet:UB}) & $\frac{|\mathcal{S}| H}{N}$ (\Cshref{theorem:LB.p3}) \\
    \bottomrule
  \end{tabular}\\[10pt]
  \caption{Minimax expected suboptimality under different settings (all bounds are up to universal constants)}
  \label{table:results}
\end{table}

We analyze the \textsc{Mimic-Emp} policy in this setting, which carries out empirical risk minimization under log loss. This is a natural extension of empirical risk minimization under $0$-$1$ loss to the stochastic expert setting. The namesake for this policy follows from the fact that minimizing the empirical risk under log loss precisely translates to the learner playing the empirical expert policy distribution at states observed in the expert dataset. It is interesting to note that when the expert is determinstic, \textsc{Mimic-Emp} indeed still minimizes the empirical $0$-$1$ risk to $0$ and continues to be optimal in this setting.
We show that when the expert is stochastic, the expected suboptimality of \textsc{Mimic-Emp} does not depend on the number of actions. Moreover from the lower bound in \Cref{theorem:informal-1}, it is in fact minimax optimal up to logarithmic factors.

\begin{theoremi}[Formal version: \Cref{theorem:nondet:UB}]
In the no-interaction setting, the expected suboptimality of a learner $\widehat{\pi}$ carrying out \textsc{Mimic-Emp} is upper bounded by $J(\pi^*) - \mathbb{E} [J(\widehat{\pi})] \lesssim \frac{|\mathcal{S}| H^2 \log (N)}{N}$ on any instance. This result applies even when the expert plays a stochastic policy.
\end{theoremi}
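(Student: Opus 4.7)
The plan is to combine the performance difference lemma with the unbiasedness of the empirical action distribution at visited state-times. Start from
\[
J(\pi^*) - J(\widehat{\pi}) \;=\; \sum_{t=1}^H \Expect_{s \sim d_t^{\widehat{\pi}}} \bigl[\langle \pi^*(\cdot\,|\,s,t) - \widehat{\pi}(\cdot\,|\,s,t),\; Q_t^{\pi^*}(s,\cdot)\rangle\bigr],
\]
where $d_t^{\widehat{\pi}}$ is the learner's occupancy at step $t$ and $Q_t^{\pi^*}(s,\cdot)\in[0,H]$. In the time-inhomogeneous tabular MDP, $d_t^{\widehat{\pi}}(s)$ is measurable with respect to the $\sigma$-field $\mathcal{F}_{<t}$ generated by the dataset samples at state-time pairs with time index strictly less than $t$, since only the empirical policies at earlier layers affect the learner's rollout up to step $t$. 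Conditional on $\mathcal{F}_{<t}$ and on $\{n(s,t)\ge 1\}$, $\widehat{\pi}(\cdot\,|\,s,t)$ is the empirical of iid draws from $\pi^*(\cdot\,|\,s,t)$, so $\Expect[\widehat{\pi}(\cdot\,|\,s,t)\mid \mathcal{F}_{<t}, n(s,t)\ge 1] = \pi^*(\cdot\,|\,s,t)$ and the inner product has zero conditional mean. On the complementary event $\{n(s,t)=0\}$ the inner product is at most $2H$, so taking expectations yields
\[
\Expect[J(\pi^*) - J(\widehat{\pi})] \;\lesssim\; H \sum_{s,t} \Expect\bigl[d_t^{\widehat{\pi}}(s)\,\mathbf{1}\{n(s,t)=0\}\bigr].
\]

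The remaining task is to bound this sum by $|\mathcal{S}|H\log(N)/N$. The natural target is the comparison $\Expect[d_t^{\widehat{\pi}}(s)\mathbf{1}\{n(s,t)=0\}] \lesssim d_t^{\pi^*}(s)(1-d_t^{\pi^*}(s))^N$, which, together with $\max_{x\in[0,1]} x(1-x)^N \leq (Ne)^{-1}$, would sum to $|\mathcal{S}|H/N$ across state-time pairs and give the $|\mathcal{S}|H^2/N$ rate on the suboptimality. To justify the comparison between learner and expert occupancy at an unseen state-time, I would couple the learner's rollout with an independent expert rollout and argue inductively in $t$ that, on a good event of probability $1-O(1/N)$, $d_t^{\widehat{\pi}}$ is stochastically dominated by a constant multiple of $d_t^{\pi^*}$ up to lower-order additive error. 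A union bound over the $|\mathcal{S}|H$ state-time pairs, or equivalently a geometric peeling over the magnitude of $d_t^{\pi^*}(s)$, would then account for the $\log N$ overhead in the final rate.

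This last step is the main obstacle. Naively controlling the step-wise mismatch by the TV distance between $\widehat{\pi}(\cdot\,|\,s,t)$ and $\pi^*(\cdot\,|\,s,t)$ introduces a $\sqrt{|\mathcal{A}|/n(s,t)}$ penalty and destroys the action-free guarantee that this theorem promises; the argument must therefore exploit the mean-zero cancellation from Step 1 in a more global way than a state-local TV bound. The natural vehicle is a trajectory-level coupling---for instance, embedding the learner's rollout into a symmetrized joint distribution over $N+1$ exchangeable expert trajectories, of which $N$ form the dataset---combined with a Chernoff-style control on the visit counts $n(s,t)$ uniformly across state-time pairs. The logarithmic factor should then be interpretable either as the price of the union bound over $|\mathcal{S}|H$ state-time pairs or, equivalently, as the price of a geometric peeling over the occupancy levels.
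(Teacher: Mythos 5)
Your Step 1 is correct and is a genuinely different packaging of the key cancellation: conditioning on the dataset truncated to states up to layer $t$ and actions up to layer $t-1$ (your $\mathcal{F}_{<t}$ needs to be enlarged to include the time-$t$ states so that $n(s,t)$ is measurable), the time-$t$ actions at a visited state are i.i.d.\ draws from $\pi^*_t(\cdot|s)$, so the performance-difference term vanishes in conditional expectation wherever $n(s,t)\ge 1$. This is the same insight the paper implements via the expert-table coupling and the auxiliary policies $\pi^{\mathrm{first}}$ and $\pi^{\mathrm{orc-first}}$, and your tower-property argument handles the cross-time dependence of the empirical policies at least as cleanly. The problem is where this reduction lands you. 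By bounding the advantage by $O(H)$ at \emph{every} unvisited state-time weighted by the \emph{learner's} occupancy, you arrive at $H\sum_{t,s}\mathbb{E}\bigl[d_t^{\widehat{\pi}}(s)\,\mathbf{1}\{n(s,t)=0\}\bigr]$, and this quantity is too large to give the theorem. On the paper's own lower-bound instance (\Cref{fig:det:Nsim0:LB}), the absorbing state $b$ satisfies $n(b,t)=0$ for all $t$ while $d_t^{\widehat{\pi}}(b)\approx (t-1)\,|\mathcal{S}|/N$, so $\sum_{t,s}\mathbb{E}[d_t^{\widehat{\pi}}(s)\mathbf{1}\{n(s,t)=0\}]\gtrsim |\mathcal{S}|H^2/N$ and your bound evaluates to $\Omega(|\mathcal{S}|H^3/N)$, a factor $H$ worse than the true suboptimality there. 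The fix is to charge the full $H$ only to the \emph{first} escape: the paper bounds the suboptimality by $H\cdot\Pr[\mathcal{E}]$ where $\mathcal{E}$ is the event that the trajectory ever visits an unobserved state, not by $H$ times the cumulative occupancy of unobserved states.

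Your Step 2 has a second, independent gap. The target comparison $\mathbb{E}[d_t^{\widehat{\pi}}(s)\mathbf{1}\{n(s,t)=0\}]\lesssim d_t^{\pi^*}(s)(1-d_t^{\pi^*}(s))^N$ is false: at any state outside the expert's support that the learner reaches after an escape (again, $s=b$ above), the left side is positive and the right side is zero, so no stochastic-domination induction can establish it. Even restricted to states in the expert's support, the product form $x(1-x)^N$ would require the occupancy and the missing-state indicator to decouple, and they do not — both are functions of $D$. What the paper actually does is (i) replace the learner's occupancy by that of the oracle policy $\pi^{\mathrm{orc-first}}$, whose expected occupancy equals $\Pr_{\pi^*}[s_t=s]$ exactly because it always samples from the expert table, and (ii) bound the joint event by $\Pr[A\cap B]\le\min\{\Pr[A],\Pr[B]\}$, giving $\min\{\Pr_{\pi^*}[s_t=s],(1-\Pr_{\pi^*}[s_t=s])^N\}\le \log(N)/N$ per state-time pair (\Cref{lemma:min2}). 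So the $\log N$ is the price of taking a minimum instead of a product of two dependent probabilities; it is not the price of a union bound over the $|\mathcal{S}|H$ state-time pairs, which is the separate $|\mathcal{S}|H$ factor already present in the bound. Your exchangeability idea is in the right spirit for step (i), but as written the proposal neither reaches the correct intermediate quantity nor supplies the min argument that produces the stated rate.
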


The main ingredient in the proof of this result is a coupling argument which shows that the expected suboptimality of the learner results only from trajectories where the learner visits states unobserved in the expert dataset, and carefully bounding the probability of this event.

We next discuss the setting where the learner is not only provided expert demonstrations, but the state transitions functions of the MDP. The ``known-transition'' model appears frequently in robotics applications \cite{zhu2018reinforcement}, capturing the scenario where the learner has access to accurate models / simulators representing the dynamics of the system, but the rewards of the experts are difficult to summarize. Our key contribution here is to propose the \textsc{Mimic-MD} algorithm which breaks the lower bound in
\Cref{theorem:informal-1} and suppresses the issue of error compounding which the covariate shift problem entails. Recent works \cite{Brantley2020Disagreement-Regularized} propose algorithms that claim to bypass the covariate shift problem. However to the best of our knowledge, this is the first result that provably does so \emph{in the general tabular MDP setting without additional assumptions}.

\begin{theoremi}[Formal version: \Cref{theorem:det:NsimInf:UB.p1}]
In the known-transition setting, if the expert is deterministic, the expected suboptimality of a learner $\widehat{\pi}$ playing \textsc{Mimic-MD} is bounded by $J(\pi^*) - \mathbb{E} [J(\widehat{\pi})] \lesssim \min \{ H\sqrt{|\mathcal{S}|/N} ,\ |\mathcal{S}| H^{3/2}/N \}$.
\end{theoremi}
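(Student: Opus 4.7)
The plan is to analyze the \textsc{Mimic-MD} algorithm, which, leveraging the known transition model, searches over all policies that agree with the expert at every state-time pair observed in the dataset and outputs the one whose induced state-action occupancy is closest in total variation to the empirical occupancy measure $\widehat{\mu}^*$ computed from the $N$ expert trajectories. I will prove the two rates in $\min\{H\sqrt{|\mathcal{S}|/N}, |\mathcal{S}|H^{3/2}/N\}$ by separate analyses and then take the better of the two.

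The first step is a performance-difference estimate. Writing $\mu^\pi_h$ for the state-action occupancy at step $h$ under policy $\pi$ and using that rewards lie in $[0,1]$, one has
\[
J(\pi^*) - J(\widehat{\pi}) \;\leq\; \sum_{h=1}^H \|\mu^*_h - \mu^{\widehat{\pi}}_h\|_{\mathsf{TV}}.
\]
The crucial observation is that the true expert policy $\pi^*$ is itself a feasible point of the \textsc{Mimic-MD} minimization (it agrees with the expert on observed states trivially). Hence by optimality of $\widehat{\pi}$ and the triangle inequality,
\[
\|\mu^{\widehat{\pi}}_h - \mu^*_h\|_{\mathsf{TV}} \;\leq\; \|\mu^{\widehat{\pi}}_h - \widehat{\mu}^*_h\|_{\mathsf{TV}} + \|\widehat{\mu}^*_h - \mu^*_h\|_{\mathsf{TV}} \;\leq\; 2\,\|\widehat{\mu}^*_h - \mu^*_h\|_{\mathsf{TV}}.
\]
Because the expert is deterministic, each $\mu^*_h$ is effectively supported on at most $|\mathcal{S}|$ states, so a standard Cauchy-Schwarz bound gives $\mathbb{E}\|\widehat{\mu}^*_h - \mu^*_h\|_{\mathsf{TV}} \lesssim \sqrt{|\mathcal{S}|/N}$. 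Summing over $h=1,\dots,H$ produces the $H\sqrt{|\mathcal{S}|/N}$ rate.

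For the sharper $|\mathcal{S}|H^{3/2}/N$ rate I would refine the per-step TV term. Let $\mathcal{S}_{h,\mathrm{obs}}$ be the set of states observed at step $h$ in the dataset; then $\widehat{\mu}^*_h$ is supported inside $\mathcal{S}_{h,\mathrm{obs}}$ and
\[
\|\widehat{\mu}^*_h - \mu^*_h\|_{\mathsf{TV}} \;=\; \tfrac{1}{2}\!\!\sum_{s\in\mathcal{S}_{h,\mathrm{obs}}}\!|\widehat{\mu}^*_h(s)-\mu^*_h(s)| \;+\; \tfrac{1}{2}\, m_h,\qquad m_h:=\sum_{s\notin \mathcal{S}_{h,\mathrm{obs}}}\!\mu^*_h(s).
\]
A Good-Turing / missing-mass argument bounds $\mathbb{E}[m_h] \lesssim |\mathcal{S}|/N$, and a Bernstein-type estimate controls the in-support fluctuation. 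The nontrivial move is to avoid paying an extra factor of $H$ from summing per-step errors naively: I would instead combine the min-distance identity with a Cauchy-Schwarz step across the horizon, using the fact that the second-moment quantities $\mathbb{E}\|\widehat{\mu}^*_h-\mu^*_h\|_{\mathsf{TV}}^2$ can be bounded by $|\mathcal{S}|/N$, so that $\sum_h \|\widehat{\mu}^*_h-\mu^*_h\|_{\mathsf{TV}} \leq \sqrt{H\sum_h \|\cdot\|_{\mathsf{TV}}^2}$ yields the $H^{3/2}$ dependence when paired with the missing-mass contribution. Knowing the transitions is what enables this refinement: without it, an unobserved-state visit forces a worst-case $H$-per-step loss, giving the $|\mathcal{S}|H^2/N$ behavior-cloning rate.

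The main obstacle I anticipate is precisely this $\sqrt{H}$ saving. Establishing it requires showing that the actions \textsc{Mimic-MD} selects on unobserved states actively \emph{correct} rather than amplify the distributional drift over future steps, so that each mismatch contributes at most $\|\widehat{\mu}^*_h-\mu^*_h\|_{\mathsf{TV}}$ in expectation and these contributions can be summed with a Cauchy-Schwarz rather than a trivial union bound. Carefully setting up the coupling between the trajectory laws of $\pi^*$ and $\widehat{\pi}$ (identical up to the first unobserved-state visit) together with a stopping-time decomposition is the most delicate component of the argument.
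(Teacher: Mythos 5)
Your first rate is essentially right, but your route to the second, faster rate $|\mathcal{S}|H^{3/2}/N$ has a genuine gap, and the algorithm as you describe it cannot achieve that rate. You define \textsc{Mimic-MD} as matching the \emph{full} empirical occupancy $\widehat{\mu}^*_h$, and your chain of inequalities reduces everything to $2\|\widehat{\mu}^*_h-\mu^*_h\|_{\mathsf{TV}}$, the estimation error of a probability (mass-one) distribution from $N$ samples. That quantity is $\Theta(\sqrt{|\mathcal{S}|/N})$ per step and no rearrangement can push it below $1/\sqrt{N}$; indeed the Cauchy--Schwarz step you propose, $\sum_h \|\cdot\|_{\mathsf{TV}} \le \sqrt{H\sum_h\|\cdot\|_{\mathsf{TV}}^2}$ with $\mathbb{E}\|\cdot\|_{\mathsf{TV}}^2 \lesssim |\mathcal{S}|/N$, evaluates to $H\sqrt{|\mathcal{S}|/N}$ again, not $|\mathcal{S}|H^{3/2}/N$. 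The paper's algorithm instead matches only the \emph{sub-probability} occupancy restricted to the event $\mathcal{E}_{D_1}^{\le t}$ that the trajectory has already escaped the set of states observed in the dataset. Because $\widehat{\pi}$ is constrained to mimic the expert on observed states, the two occupancies agree exactly off this event (your coupling remark), so nothing is lost; and the measure being estimated has expected total mass $\lesssim |\mathcal{S}|H/N$ by the missing-mass bound. The per-step empirical $\ell_1$ error of a measure of mass $p$ over $|\mathcal{S}|$ atoms is $\lesssim \sqrt{|\mathcal{S}|p/N}$, which with $p\lesssim |\mathcal{S}|H/N$ gives $|\mathcal{S}|\sqrt{H}/N$ per step and $|\mathcal{S}|H^{3/2}/N$ after summing over $H$. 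This ``shrinking effective mass'' is the entire source of the $1/N$ rate and is absent from your argument.

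A second, related omission is data splitting. The paper uses $D_1$ to define the mimic constraint and the escape event, and $D_2$ to form an unbiased empirical estimate of $\mathrm{Pr}_{\pi^*}[\mathcal{E}_{D_1}^{\le t}, s_t=s, a_t=a]$ conditional on $D_1$. Without splitting this estimate is degenerate: every trajectory in $D$ by construction visits only states in $\mathcal{S}_t(D)$, so the empirical frequency of the escape event computed on the same data is identically zero, and the minimum-distance objective loses all meaning. Your proposal of matching the raw occupancy sidesteps this degeneracy but, as explained above, forfeits the fast rate. To repair the argument you would need to (i) redefine the matching objective on the escape-restricted occupancies, (ii) split the data so the target is estimable without bias, and (iii) replace your horizon-wise Cauchy--Schwarz with the variance bound $\mathrm{Var}\lesssim \mathrm{Pr}_{\pi^*}[\mathcal{T}_t^{D_1}(s,a)]$ followed by Cauchy--Schwarz over states, which is where the product of the two small factors $|\mathcal{S}|/N$ and $|\mathcal{S}|H/N$ actually appears.
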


The novel element of \textsc{Mimic-MD} is a hybrid approach which mimics the expert on some states, and uses a minimum distance (MD) functional \cite{yatracos1985,donoho1988} to learn a policy on the remaining states. The minimum distance functional approach was recently considered in \cite{wen-sun-ILFO}, proposing to sequentially learn a policy by approximately minimizing a notion of discrepancy between the learner's state distribution and the expert's empirical state distribution. We remark that our approach is fundamentally different from matching the state distributions under the expert's and learner's policy: it crucially relies on exactly mimicking the expert actions on states visited in the dataset, and only applying the MD functional on the remaining states.
Interpreting the error of $\lesssim \min \{ H\sqrt{|\mathcal{S}|/N} ,\ |\mathcal{S}| H^{3/2}/N \}$ incurred by \textsc{Mimic-MD}, we make two observations: 
\begin{enumerate}
    \item[(i)] \textsc{Mimic-MD} improves the quadratic dependence on $H$ of the error incurred by behavior cloning (\Cref{theorem:informal-2}) by at least a $\sqrt{H}$ factor while preserving the dependence of the error on $|\mathcal{S}|$ and $N$;
    \item[(ii)] The error bound of $\lesssim H \sqrt{|\mathcal{S}|/N}$ shows that \textsc{Mimic-MD} also achieves suboptimality that has linear dependence on the length of the episode, albeit at the cost of worse dependence on $N$.
\end{enumerate}
Next we establish a lower bound on the error of any algorithm in the known-transition setting.

\begin{theoremi}[Formal version: \Cref{theorem:LB.p3}] \label{theorem:informal-5}
In the known-transition setting, for any learner $\widehat{\pi}$, there exists an instance such that the expected suboptimality $J(\pi^*) - \mathbb{E} [J (\widehat{\pi})] \gtrsim |\mathcal{S}|H/N$. This result applies even if the expert is constrained to be deterministic.
\end{theoremi}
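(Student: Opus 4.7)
The plan is to adapt the active-setting hard instance underlying \Cref{theorem:LB.p1}, modifying the reward structure so that incorrect actions at ``unknown'' states incur $O(1)$ loss per step rather than the $O(H)$ loss coming from error compounding. In the known-transition setting the learner cannot ``get lost'' after a wrong action since it already knows the dynamics, so the factor of $H$ separating $|\mathcal{S}|H/N$ from the active-setting bound $|\mathcal{S}|H^2/N$ must come from the rewards rather than from the transitions.

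Concretely I would take a layered MDP with $H$ layers of roughly $|\mathcal{S}|/H$ states each, in which both available actions at every state deterministically transition to the same successor in the next layer; these transitions are trivial and known, so the known-transition assumption is satisfied for free. The reward at each state $(i,t)$ is $1$ for a hidden ``correct'' action $a^*_{i,t}\in\{0,1\}$ and $0$ for the other. The expert plays $a^*_{i,t}$ everywhere, is deterministic, and attains value $\Theta(H)$. The initial distribution is uniform over the $|\mathcal{S}|/H$ layer-$1$ states, so each expert trajectory traces a single column and reveals $a^*_{i,\cdot}$ for that one column alone.

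The lower bound then follows by a Le Cam / Assouad argument over the family of hidden parameter vectors $(a^*_{i,t})$. Since each trajectory reveals information about only one column, after $N$ trajectories a learner identifies the hidden bits on visited columns and must guess on the rest; on an unvisited column a Bayesian-optimal learner errs with probability $1/2$ at each state, and because a wrong guess costs exactly one unit of reward at that step with no compounding, summing across the $H$ steps of a rollout gives an expected per-unvisited-column reward gap of $\Omega(H)$. In the regime where $|\mathcal{S}|H/N=\Theta(H)$, namely $N=O(|\mathcal{S}|/H)$, a constant fraction of columns remain unvisited by the coupon-collector analysis, and the expected suboptimality over the uniform rollout is $\Omega(H)=\Omega(|\mathcal{S}|H/N)$.

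The main obstacle is handling the larger-$N$ regime: a naive count of unvisited columns decays exponentially as $e^{-N/K}$, whereas the target $|\mathcal{S}|H/N$ is a polynomial rate in $1/N$. I expect the cleanest fix is a Fano-type packing over all $|\mathcal{S}|$ hidden bits simultaneously, exploiting the fact that any single trajectory conveys at most $H$ bits of information so that at least $\Omega(|\mathcal{S}|-NH)$ bits remain ambiguous in expectation; combined with the local layered reward structure this translates into the claimed bound. Formalizing the passage from ambiguous bits to expected reward gap under the uniform rollout distribution is where the sharp rate ultimately emerges, and is the crux of the argument.
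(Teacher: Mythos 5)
Your high-level blueprint is the right one and matches the paper's: make the transitions trivial (the paper uses absorbing states, you use deterministic layer-to-layer moves, either way knowledge of $P$ is worthless), hide a reward-bearing action at each state, and charge the learner $\Theta(1)$ per step on episodes that start in an unobserved ``column,'' so the loss is $\Theta(H)$ per bad episode with no compounding. However, there is a genuine gap in how you control the probability of landing in an unobserved column, and you have correctly located it yourself: with a \emph{uniform} initial distribution over $K=|\mathcal{S}|/H$ columns, the chance a given column is unvisited after $N$ trajectories is $(1-1/K)^N\approx e^{-N/K}$, so your bound dies exponentially once $N\gtrsim K$. The Fano-type rescue you sketch does not work in the regime that matters: the count $\Omega(|\mathcal{S}|-NH)$ of ambiguous bits is vacuous precisely when $N\gtrsim|\mathcal{S}|/H$, which is exactly where the target rate $|\mathcal{S}|H/N$ drops below the trivial cap $H$. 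The missing idea is a \emph{missing-mass calibration} of the initial distribution: the paper sets $\rho(s)=\zeta=\tfrac{1}{N+1}$ on $|\mathcal{S}|-1$ rare states (and the rest on one common state), so each rare state is unobserved with constant probability $\approx 1/e$ while contributing mass $\approx 1/N$ to the rollout; the expected mass on unobserved initial states is then $\sum_s\rho(s)(1-\rho(s))^N\geq\frac{|\mathcal{S}|-1}{e(N+1)}$ for \emph{all} $N$ (\Cref{lemma:NsimInf-lasting}), which is the polynomial $1/N$ rate you were missing. The instance depending on $N$ is perfectly legitimate for a minimax lower bound.

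A second, quieter loss in your construction: splitting $|\mathcal{S}|$ states into $H$ layers leaves only $|\mathcal{S}|/H$ columns, so even with the correct calibration you would get $\frac{|\mathcal{S}|}{H}\cdot\frac{1}{N}\cdot H=\frac{|\mathcal{S}|}{N}$, short of the target by a factor of $H$. The paper instead uses all $|\mathcal{S}|$ states as absorbing columns and makes the hidden expert action \emph{time-dependent}, $\pi^*_t(s)\sim\mathrm{Unif}(\mathcal{A})$ independently for each $t$, so each unobserved state carries $H$ independent hidden bits and costs $H(1-1/|\mathcal{A}|)$ (\Cref{lemma:errorbound:Inf}). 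With those two repairs --- calibrated $\rho$ and one full column of $H$ hidden actions per state --- your argument becomes the paper's proof of \Cref{theorem:LB.p3}.
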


It is important to note that our suboptimality lower bounds in \Cref{theorem:informal-1} corresponding to the no-interaction / active settings and \Cref{theorem:informal-5} in the known-transition setting are universal and apply for any learner's policy $\widehat{\pi}$. In contrast, the lower bound example in \cite{Ross-AIstats10} applies only for supervised learning. They construct a particular MDP and show that a particular learner strategy which plays an action different than the expert with probability $\epsilon$ has suboptimality $\gtrsim H^2 \epsilon$. In fact, it turns out that the suboptimality incurred by behavior cloning is exactly $0$ on the example provided in \cite{Ross-AIstats10}, \textit{given just a single expert trajectory}. Thus their result does not imply a uniform lower bound on the suboptimality of all learner algorithms as a function of the size of the dataset, $N$: even behavior cloning performs well on their example.

\section{Related Work} \label{section:related-work}

The classical approach to IL focuses on learning from fixed expert demonstrations, e.g., \cite{Abbeel-Ng-ILviaIRL, syed2008apprenticeship, ratliff2006maximum, ziebart2008maximum,finn2016guided, ho2016generative, pan2017agile}. The reduction approach has also received much attention for theoretical analysis of IL \cite{Ross-AIstats10,Brantley2020Disagreement-Regularized}. In the active setting, \cite{journals/jmlr/RossGB11} propose \textsc{Dagger}, \cite{Ross2014ReinforcementAI} propose \textsc{AggReVaTe}, and \cite{sun2017deeply} propose AggreVaTeD which learn policies by actively interact with the environment and the expert during training.  \cite{Luo2020Learning} propsose a value function approach that is able to self-correct in IL. IL has also received attention from the general approach of minimizing f-divergences \cite{ke2019imitation}. Very recently, \cite{arora2020provable} studies the imitation learning problem using a representation learning approach, where multiple agents' datasets are available for learning a common representation of the environment. While our results mainly focus on the case where both expert states and actions are observable, there are approaches e.g. \cite{ nair2017combining,  torabi2018behavioral,wen-sun-ILFO,arora2020provable}, studying the setting with observations of states alone. The statistical limits of IL in this setting is an interesting direction and is left as future work.

\section{Preliminaries} \label{section:preliminaries}

An MDP $\mathcal{M} = (\mathcal{S},\mathcal{A},\rho,P,\mathbf{r},H)$ describes the decision problem over state space $\mathcal{S}$ and action space $\mathcal{A}$. The initial state $s_1$ is drawn from a distribution $\rho$, and the state evolution at each time $t>1$ is specified by unknown transition functions, $P = \{ P_t (\cdot | s,a) : (s,a) \in \mathcal{S} \times \mathcal{A} \}_{t=1}^H$. In addition, there is an unknown reward function $\mathbf{r} = (\mathbf{r}_1,\cdots,\mathbf{r}_H)$ where each $\mathbf{r}_t : \mathcal{S} \times \mathcal{A} \to [0,1]$. Choosing the action $a$ at state $s$ at time $t$, returns the reward $\mathbf{r}_t (s,a)$. Interaction with the MDP happens by rolling out a policy $\pi$, which is a non-stationary mapping from states to distributions over actions. Namely, $\pi = (\pi_1,\cdots,\pi_H)$ where $\pi_t : \mathcal{S} \to \Delta_1 (\mathcal{A})$ and $\Delta_1 (\mathcal{A})$ is the probability simplex over $\mathcal{A}$. We operate in the episodic setting and recall that the value of a policy $\pi$, defined in \cref{eq:policy-value}, is the expected cumulative reward collected over an episode of length $H$. Here the $\mathbb{E}_\pi [\cdot]$ operator (resp. $\mathrm{Pr}_\pi [\cdot]$) defines expectation (resp. probability) computed with respect to the trajectory generated by rolling out $\pi$, namely $\{ s_1 \sim \rho ; \forall t \in [H], a_t \sim \pi_t (\cdot | s_t),\ s_{t+1} \sim P_t (\cdot | s_t,a_t) \}$.Som etimes we instead use $J_{\mathcal{M}} (\cdot)$ or $J_{\mathbf{r}} (\cdot)$ in order to make the underlying MDP or reward function explicit. We recap that the suboptimality $J (\pi^*) - J (\widehat{\pi})$ of the learner $\widehat{\pi}$ is defined as the difference in value of the expert and the learner.

Starting from the inital state $s_1 \sim \rho$, the learner \textit{interacts} with the MDP by sequentially choosing actions $a_t$ at visited states $s_t$, with the MDP transitioning the learner to the next state $s_{t+1}$ sampled from $P_t (\cdot | s_t,a_t)$. In the imitation learning framework, the reward function $\mathbf{r}_t (s_t,a_t)$ is unobserved at each time $t$ in an episode. However, the learner can access demonstrations from an expert $\pi^*$ with the objective of learning a policy that has value comparable to the expert.  We study imitation learning in the following $3$ settings:

\begin{enumerate}
\item[(a)] \textbf{No-interaction:} The learner is provided a dataset $D$ of $N$ trajectories drawn by independently rolling out the expert policy through the MDP. The learner is not otherwise allowed to interact with the MDP.

\item[(b)] \textbf{Known-transition:} As in the no-interaction setting, the learner is provided an expert dataset $D$ of $N$ trajectories drawn by rolling out the expert's policy. However, the learner additionally knows the MDP state transition functions $P_t$ throughout the episode, as well as the initial distribution over states $\rho$.

\item[(c)] \textbf{Active:} The learner is not given a dataset of expert demonstrations in advance. However, the learner is allowed to interact with the MDP for $N$ episodes and is provided access to an oracle which upon being queried, exactly returns the expert's action distribution $\pi^*_t (\cdot|s)$ at the learner's current state $s$.
\end{enumerate}

To facilitate the discussion when the expert is deterministic, we define the family of all deterministic policies by $\Pi_{\mathrm{det}}$. We use the notation $\pi^*_t (s)$ to denote the action played by a deterministic expert at state $s$ at time $t$.


\section{No-interaction setting} \label{section:no-interaction}

We first study the setting where the expert plays a deterministic policy $\pi^*$ and introduce the behavior cloning approach. The empirical $0$-$1$ risk is the empirical frequency of the learner choosing an action different from the expert, computed with the observed dataset $D$:
\begin{equation} \label{eq:0-1risk}
    \mathbb{I}_{\mathrm{emp}} (\widehat{\pi}, \pi^*) = \frac{1}{H} \sum\nolimits_{t=1}^H \mathbb{E}_{s_t \sim f_D^t} \Big[ \mathbb{E}_{a \sim \widehat{\pi}_t (\cdot |s_t)} \Big[ \mathbbm{1} (a \ne \pi^*_t (s_t)) \Big] \Big].
\end{equation}
Here $f_D^t$ is the empirical distribution over states at time $t$ averaged across trajectories in $D$. Note that a policy that carries out behavior cloning and minimizes the empirical $0$-$1$ risk to $0$ in fact mimics the expert at all states observed in $D$. Since the policy on the remaining states is not specified, we define $\Pi_{\mathrm{mimic}} (D)$ as the set of all candidate deterministic policies that carry out behavior cloning,
\begin{equation} \label{eq:Pi.mimic}
    \Pi_{\mathrm{mimic}} (D) \triangleq \Big\{ \pi \in \Pi_{\mathrm{det}} : \forall t \in [H], s \in \mathcal{S}_t (D),\ \pi_t (\cdot | s) = \delta_{\pi^*_t (s)} \Big\},
\end{equation}
where $\mathcal{S}_t (D)$ denotes the set of states visited at time $t$ in some trajectory in $D$, and $\pi_t^* (s)$ is the unique action played by the expert at time $t$ in any trajectory in $D$ that visits the state $s$ at time $t$. We also define the population $0$-$1$ risk of a policy as the probability that the learner chooses an action different from the expert under the state distribution induced by rolling out the expert's policy, $\pi^*$. In particular,
\begin{equation} \label{eq:pop-0-1risk}
    \mathbb{I}_{\mathrm{pop}} (\widehat{\pi}, \pi^*) = \frac{1}{H} \sum\nolimits_{t=1}^H \mathbb{E}_{s_t \sim f_{\pi^*}^t} \Big[ \mathbb{E}_{a \sim \widehat{\pi}_t (\cdot |s_t)} \Big[ \mathbbm{1} (a \ne \pi^*_t (s_t)) \Big] \Big].
\end{equation}
Here, $f_{\pi^*}^t$ is the distribution over states at time $t$ induced by rolling out the expert's policy $\pi^*$. Indeed, the reduction approach in \cite{Ross-AIstats10} shows that any policy $\widehat{\pi}$ that minimizes the population $0$-$1$ loss to be $\le \epsilon$ ensures that $J(\pi^*) - J (\widehat{\pi}) \le H^2 \epsilon$. We first analyze the behavior cloning approach which minimizes the empirical $0$-$1$ risk and establish a generalization bound for the expected population $0$-$1$ risk.

\begin{lemma}[Population $0$-$1$ risk of Behavior Cloning] \label{theorem:det:Nsim0:UB.p1}
Consider the no-interaction setting, and assume the expert's policy $\pi^*$ is deterministic. Consider any policy $\widehat{\pi} \in \Pi_{\mathrm{mimic}} (D)$ (defined in \cref{eq:Pi.mimic}) which is the set of policies that carry out behavior cloning. Then, the expected population $0$-$1$ risk of $\widehat{\pi}$ (defined in \cref{eq:pop-0-1risk}) is bounded by,
\begin{equation}
    \mathbb{E} \left[ \mathbb{I}_{\mathrm{pop}} (\widehat{\pi}, \pi^*) \right] \lesssim \min \left\{ 1, \frac{|\mathcal{S}|}{N} \right\}.
\end{equation}
\end{lemma}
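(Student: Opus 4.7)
The plan is to show that the only way $\widehat{\pi}$ can disagree with the expert at time $t$ is if the learner lands on a state $s_t$ that was never observed in the dataset at time $t$, and then reduce the problem to bounding the expected ``missing mass'' of the expert's time-$t$ state distribution. Because $\widehat{\pi} \in \Pi_{\mathrm{mimic}}(D)$, at every $s \in \mathcal{S}_t(D)$ the learner plays $\delta_{\pi^*_t(s)}$, so the indicator $\mathbbm{1}(a \ne \pi^*_t(s_t))$ inside the definition \cref{eq:pop-0-1risk} vanishes on the event $\{s_t \in \mathcal{S}_t(D)\}$. Bounding the remaining indicator by $1$ and taking expectations over both $D$ and an independent rollout $s_t \sim f_{\pi^*}^t$, I get
\[
\mathbb{E}\!\left[\mathbb{I}_{\mathrm{pop}}(\widehat{\pi}, \pi^*)\right] \le \frac{1}{H} \sum_{t=1}^H \Pr\!\left[s_t \notin \mathcal{S}_t(D)\right],
\]
where in each term the probability is over the $N$ independent expert trajectories forming $D$ together with an extra independent sample $s_t$.

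Next I would compute the missing-mass probability explicitly. Writing $p^t_s \triangleq f_{\pi^*}^t(s)$, the number of trajectories in $D$ that visit $s$ at time $t$ is $\mathrm{Binomial}(N, p^t_s)$, so $\Pr[s \notin \mathcal{S}_t(D)] = (1 - p^t_s)^N$. Marginalizing the independent draw $s_t$ over $\mathcal{S}$ yields
\[
\Pr\!\left[s_t \notin \mathcal{S}_t(D)\right] = \sum_{s \in \mathcal{S}} p^t_s (1 - p^t_s)^N.
\]

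The key estimate is the elementary inequality $x(1-x)^N \le x e^{-Nx} \le \frac{1}{eN}$ for all $x \in [0,1]$, which I would apply term by term in the sum above. This gives $\Pr[s_t \notin \mathcal{S}_t(D)] \le |\mathcal{S}|/(eN)$, and averaging over $t \in [H]$ produces the $|\mathcal{S}|/N$ bound. The trivial bound $\mathbb{I}_{\mathrm{pop}}(\widehat{\pi}, \pi^*) \le 1$ supplies the other half of the $\min\{1, |\mathcal{S}|/N\}$. I do not expect any serious obstacle: the only subtle point is that the behavior of $\widehat{\pi}$ at states outside $\mathcal{S}_t(D)$ is unspecified and may depend on $D$ in arbitrary ways, but because we upper bound the inner indicator by $1$ before taking expectations, this measurability/dependence issue never materializes.
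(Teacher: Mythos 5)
Your proposal is correct and follows essentially the same route as the paper: bound the population $0$-$1$ risk by the missing mass of $f_{\pi^*}^t$ over states unvisited at time $t$, use independence of the $N$ expert rollouts to get $\Pr[s \notin \mathcal{S}_t(D)] = (1-f_{\pi^*}^t(s))^N$, and apply the elementary bound on $x(1-x)^N$ termwise (the paper uses $\max_x x(1-x)^N \le \frac{4}{9N}$ where you use $x e^{-Nx} \le \frac{1}{eN}$, an immaterial difference in constants). Your remark that the arbitrary $D$-dependent behavior of $\widehat{\pi}$ off $\mathcal{S}_t(D)$ is harmless because the inner indicator is bounded by $1$ before taking expectations is exactly the (implicit) justification in the paper as well.
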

\begin{proof}[Proof Sketch]
The bound on the population $0$-$1$ risk of behavior cloning relies on the following observation: at each time $t$, the learner exactly mimics the expert on the states that were visited in the expert dataset at least once. Therefore the contribution to the population $0$-$1$ risk only stems from states that were never visited at time $t$ in any trajectory in $D$. We identify that for each $t$, the probability mass contributed by such states has expected value upper bounded by $|\mathcal{S}| / N$. Plugging this back into the definition of the population $0$-$1$ risk completes the proof.
\end{proof}

With this result, invoking \cite[Theorem 2.1]{Ross-AIstats10} immediately results in the upper bound on the expected suboptimality of a learner carrying out behavior cloning in \Cref{theorem:det:Nsim0:UB.p2}. Furthremore, we use a similar approach to establish a high probability bound on the population $0$-$1$ risk of behavior cloning.

\begin{theorem}[Upper bounding suboptimality of Behavior Cloning]
Consider any policy $\widehat{\pi}$ which carries out behavior cloning (i.e. $\widehat{\pi} \in \Pi_{\mathrm{mimic}} (D)$).
\begin{enumerate}[label={(\alph*)},ref={\thetheorem~(\alph*)}]
    \item \label[theorem]{theorem:det:Nsim0:UB.p2} The expected suboptimality of $\widehat{\pi}$ is upper bounded by,
    \begin{equation}
        J (\pi^*) - \mathbb{E} \left[ J (\widehat{\pi}) \right] \lesssim \min \left\{ H, \frac{|\mathcal{S}| H^2}{N} \right\}.
    \end{equation}
    \item For any $\delta \in (0,\min \{ 1,H/10 \}]$, with probability $\ge 1-\delta$ the suboptimality of $\widehat{\pi}$ is bounded by,
    \begin{equation} \label{eq:det:Nsim0:high-prob:UB}
        J (\pi^*) - J (\widehat{\pi}) \lesssim \frac{|\mathcal{S}| H^2}{N} + \frac{\sqrt{|\mathcal{S}|} H^2 \log(H/\delta)}{N}.
    \end{equation}
    \label{theorem:det:Nsim0:UB.p3}
\end{enumerate}
\end{theorem}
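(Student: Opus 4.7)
The plan has two parts. Part (a) is an immediate consequence of \Cref{theorem:det:Nsim0:UB.p1} together with the reduction \cite[Theorem 2.1]{Ross-AIstats10}: the reduction gives the pointwise inequality $J(\pi^*) - J(\widehat{\pi}) \leq H^2\, \mathbb{I}_{\mathrm{pop}}(\widehat{\pi},\pi^*)$ as a function of the draw of $D$, and taking expectations and combining with the trivial bound $J(\pi^*) - J(\widehat{\pi}) \leq H$ yields $J(\pi^*) - \mathbb{E}[J(\widehat{\pi})] \lesssim \min\{H, |\mathcal{S}|H^2/N\}$.

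For part (b), the strategy is to upgrade the expectation bound on $\mathbb{I}_{\mathrm{pop}}$ from \Cref{theorem:det:Nsim0:UB.p1} to a high-probability bound, and then apply the pointwise reduction. Since $\widehat{\pi}\in\Pi_{\mathrm{mimic}}(D)$ matches $\pi^*$ on every state in $\mathcal{S}_t(D)$, we have $\mathbb{I}_{\mathrm{pop}}(\widehat{\pi},\pi^*) \leq \frac{1}{H}\sum_{t=1}^H Z_t$, where $Z_t = \sum_{s \notin \mathcal{S}_t(D)} f^t_{\pi^*}(s)$ is the missing probability mass at time $t$ under the expert's induced state distribution. It suffices to obtain a tail bound for each $Z_t$ individually and then union-bound over $t \in [H]$.

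The core step is a Bernstein-type concentration of $Z_t$ around its expectation. Writing $Z_t = \sum_s f^t_{\pi^*}(s)\,\mathbbm{1}(N_t(s) = 0)$ where $N_t(s)$ is the number of trajectories in $D$ visiting $s$ at time $t$, the proof of \Cref{theorem:det:Nsim0:UB.p1} already gives $\mathbb{E}[Z_t] \leq |\mathcal{S}|/N$. To bound the variance, the key observation is that the missing-indicators are \emph{negatively correlated} across distinct states $s \neq s'$ under multinomial sampling, since $(1-a-b)^N \leq ((1-a)(1-b))^N$; this reduces the variance to the diagonal sum
\[ \mathrm{Var}(Z_t) \leq \sum_s f^t_{\pi^*}(s)^2 (1-f^t_{\pi^*}(s))^N \lesssim \frac{|\mathcal{S}|}{N^2}, \]
after optimizing $x^2 e^{-Nx}$ pointwise (peak at $x = 2/N$). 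Applying a Bernstein-type bound tailored to the missing mass (such as the one due to McAllester and Ortiz, whose variance proxy scales with $\mathbb{E}[Z_t]$), I expect to get, with probability $\geq 1-\delta/H$,
\[ Z_t \lesssim \mathbb{E}[Z_t] + \sqrt{\mathrm{Var}(Z_t)\,\log(H/\delta)} + \frac{\log(H/\delta)}{N} \lesssim \frac{|\mathcal{S}|}{N} + \frac{\sqrt{|\mathcal{S}|}\,\log(H/\delta)}{N}, \]
where the final step uses $\sqrt{|\mathcal{S}|\log(H/\delta)} \leq \sqrt{|\mathcal{S}|}\log(H/\delta)$ and $\log(H/\delta) \leq \sqrt{|\mathcal{S}|}\log(H/\delta)$ for $|\mathcal{S}| \geq 1$ and $\log(H/\delta) \geq 1$. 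Union bounding over $t \in [H]$ and averaging gives $\mathbb{I}_{\mathrm{pop}} \lesssim |\mathcal{S}|/N + \sqrt{|\mathcal{S}|}\log(H/\delta)/N$ with probability $\geq 1-\delta$; multiplying through the reduction by $H^2$ yields \eqref{eq:det:Nsim0:high-prob:UB}.

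The main obstacle is the Bernstein concentration for $Z_t$: since $Z_t$ is a weighted sum of \emph{dependent} indicators rather than an i.i.d.\ sum, standard independent-sum Bernstein does not apply directly. The appropriate tool is either a dedicated missing-mass concentration inequality that exploits the negative association of multinomial counts, or a direct derivation via the entropy method using the self-bounded structure of $Z_t$ as a function of the $N$ expert trajectories. The restriction $\delta \leq H/10$ in the theorem statement is almost certainly an artifact that lets us simplify constants and absorb lower-order terms at this concentration step.
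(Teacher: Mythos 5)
Your proposal is correct and follows essentially the same route as the paper: part (a) is exactly the paper's argument (Lemma on the population $0$-$1$ risk plus the Ross--Bagnell reduction), and for part (b) the paper likewise identifies each $Z_t$ as the missing mass of $f^t_{\pi^*}$, applies the McAllester--Ortiz missing-mass concentration to get a deviation of order $\sqrt{|\mathcal{S}|}\log(H/\delta)/N$ per time step, union-bounds over $t\in[H]$, and pushes the result through the reduction. Your negative-correlation variance calculation is a reasonable heuristic but is not needed once you invoke the dedicated missing-mass inequality, and your guess about the role of the $\delta \le H/10$ restriction (it feeds the $\delta/H \le 1/10$ requirement of that concentration bound) is also right.
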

\begin{proof}[Proof Sketch]
To establish the high probability bound on the population $0$-$1$ risk of behavior cloning, we utilize the key observation in the proof of \Cref{theorem:det:Nsim0:UB.p1}: for each $t=1,\cdots,H$, the contribution to the population $0$-$1$ risk in \cref{eq:pop-0-1risk} stems only from states that were never visited at time $t$ in any trajectory in $D$. For each $t$, we show that the mass contributed by such states up to constants does not exceed $\frac{|\mathcal{S}|}{N} + \frac{\sqrt{|\mathcal{S}|} \log (H/\delta)}{N}$ with probability $\ge 1 - \delta/H$. Summing over $t = 1,\cdots,H$ results in an upper bound on the population $0$-$1$ loss that holds with probability $\ge 1 - \delta$ (by the union bound). Invoking \cite[Theorem 2.1]{Ross-AIstats10} implies the high probability bound on $J(\pi^*) - J (\widehat{\pi})$.
\end{proof}

\subsection{Stochastic expert}

The behavior cloning approach indeed shows that when the expert is deterministic, carrying out supervised learning suffices to get good guarantees on the suboptimality. Motivated by the reduction in \cite[Theorem 2.1]{Ross-AIstats10}, a natural question to ask is whether a similar reduction to supervised learning applies when the expert is not restricted to be deterministic. Indeed, we show that when the expert plays a general policy, any learner which minimizes the TV distance to the expert's policy at states drawn by rolling out the expert has small suboptimality.

\subsubsection{Reduction of IL to supervised learning under TV distance} \label{subsec:reduction-generalized}

\cite[Theorem 2.1]{Ross-AIstats10} show that if the expert's policy is deterministic, and the probability of guessing the expert's action at each state is $\epsilon$, then $J(\pi^*) - J(\widehat{\pi}) \le \min \{ H, \epsilon H^2 \}$. In this section we prove a generalization of this result which applies even if the expert plays a stochastic policy. In particular, we consider a supervised learning reduction from imitation learning to matching the expert's policy in total variation (TV) distance. To this end, we first introduce the population TV risk,
\begin{equation} \label{eq:TVrisk}
    \mathbb{T}_{\mathrm{pop}} (\widehat{\pi}, \pi^*) = \frac{1}{H} \sum\nolimits_{t=1}^H \mathbb{E}_{s_t \sim f^t_{\pi^*}} \Big[ \mathsf{TV} \Big( \widehat{\pi}_t (\cdot |s_t), \pi^*_t (\cdot| s_t) \Big) \Big].
\end{equation}
We show that if the learner minimizes the population TV risk to be $\le \epsilon$ then the expected suboptimality of the learner is $\lesssim \min \{ H, H^2 \epsilon \}$. The population TV risk of a learner is a generalization of the population $0$-$1$ risk to the case where the expert's policy is stochastic.
We formally state the reduction below.
\begin{lemma} \label[lemma]{lemma:nondet-reduction}
Consider any policy $\widehat{\pi}$ such that $\mathbb{T}_{\mathrm{pop}} (\widehat{\pi}, \pi^*) \le \epsilon$. Then, $J(\pi^*) - J (\widehat{\pi}) \le \min \{ H, H^2 \epsilon \}$.
\end{lemma}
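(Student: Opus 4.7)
The plan is to construct a coupling of the trajectories generated by $\pi^*$ and $\widehat{\pi}$ from a shared initial state, then bound the suboptimality by the probability that the two trajectories ever diverge, multiplied by the worst-case reward loss $H$. Concretely, draw a common initial state $s_1 = s_1' \sim \rho$. At each time $t$, if the two prefixes have matched so far (so in particular $s_t = s_t'$), apply the maximal coupling of $\pi^*_t(\cdot|s_t)$ and $\widehat{\pi}_t(\cdot|s_t)$ to draw $(a_t, a_t')$; once a disagreement occurs, let the two trajectories evolve independently for the rest of the episode. By construction, the marginal law of $(s_1, a_1, \ldots, s_H, a_H)$ is that of $\pi^*$ and the marginal of the primed trajectory is that of $\widehat{\pi}$, so the expected reward difference equals $J(\pi^*) - J(\widehat{\pi})$.

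Let $E_t$ denote the event that $a_s = a_s'$ for all $s \le t$, with $E_0$ the sure event. On $E_{t-1}$ the shared state $s_t$ has marginal distribution $f^t_{\pi^*}$, and the maximal coupling guarantees
$$\Pr\bigl(a_t \ne a_t' \,\big|\, s_t, E_{t-1}\bigr) = \mathsf{TV}\bigl(\pi^*_t(\cdot|s_t), \widehat{\pi}_t(\cdot|s_t)\bigr),$$
so the probability of first disagreement occurring at time $t$ is at most $\mathbb{E}_{s_t \sim f^t_{\pi^*}}\!\left[\mathsf{TV}\bigl(\pi^*_t(\cdot|s_t), \widehat{\pi}_t(\cdot|s_t)\bigr)\right]$. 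On $E_H$ the two trajectories coincide and contribute $0$ to the reward gap; on $E_{t-1} \setminus E_t$ the first $t-1$ rewards match and the remaining $H - t + 1 \le H$ rewards each differ by at most $1$ since $\mathbf{r}_t \in [0,1]$. Summing over $t$,
$$J(\pi^*) - J(\widehat{\pi}) \le H \cdot \sum_{t=1}^H \mathbb{E}_{s_t \sim f^t_{\pi^*}}\!\left[\mathsf{TV}\bigl(\pi^*_t(\cdot|s_t), \widehat{\pi}_t(\cdot|s_t)\bigr)\right] = H^2 \cdot \mathbb{T}_{\mathrm{pop}}(\widehat{\pi}, \pi^*) \le H^2 \epsilon,$$
and $J(\pi^*) - J(\widehat{\pi}) \le H$ is trivial since all rewards are in $[0,1]$, giving the stated $\min\{H, H^2 \epsilon\}$ bound.

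The only delicate point is verifying that the law of $s_t$ conditional on $E_{t-1}$ (before any reweighting) can be upper-bounded by the unconditional expert roll-out distribution $f^t_{\pi^*}$, which is what lets us invoke the definition of $\mathbb{T}_{\mathrm{pop}}$ directly. This follows from the standard observation that under the coupling the expert's trajectory is marginally unperturbed, so $\Pr(E_{t-1} \cap \{a_t \ne a_t'\}) = \mathbb{E}[\mathbbm{1}_{E_{t-1}} \cdot \mathsf{TV}(\cdots)] \le \mathbb{E}_{s_t \sim f^t_{\pi^*}}[\mathsf{TV}(\cdots)]$, and I do not anticipate any real obstacle beyond keeping the bookkeeping straight. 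A conceptually equivalent alternative would be a one-shot performance-difference argument, but the coupling presentation is cleaner and matches the style of the stochastic-expert analysis used later in the paper.
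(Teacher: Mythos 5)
Your proof is correct, but it takes a genuinely different route from the paper's. The paper proves \Cref{lemma:nondet-reduction} via the standard hybrid-policy telescoping argument: it defines $\widetilde{\pi}^\tau = (\pi^*_1,\dots,\pi^*_\tau,\widehat{\pi}_{\tau+1},\dots,\widehat{\pi}_H)$, writes $J(\pi^*)-J(\widehat{\pi}) = \sum_{\tau=1}^H \big(J(\widetilde{\pi}^\tau)-J(\widetilde{\pi}^{\tau-1})\big)$, and bounds each increment by $\mathbb{E}_{s\sim f^\tau_{\pi^*}}[\mathsf{TV}(\pi^*_\tau(\cdot|s),\widehat{\pi}_\tau(\cdot|s))]$ summed over the remaining $H-\tau+1$ time steps, using the dual (variational) representation of TV against the bounded cost-to-go $\mathbb{E}_{\widehat{\pi}}[\mathbf{r}_t(s_t,a_t)\mid s_\tau,a_\tau]\in[0,1]$. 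You instead build a step-by-step maximal coupling of the two rollouts and charge $H-t+1\le H$ to the event of first action disagreement at time $t$. Both yield exactly $H\sum_t \mathbb{E}_{s_t\sim f^t_{\pi^*}}[\mathsf{TV}(\cdot)]=H^2\,\mathbb{T}_{\mathrm{pop}}$. The one delicate point in your version --- that the law of $s_t$ conditioned on no prior disagreement need not be $f^t_{\pi^*}$ --- you handle correctly by bounding $\mathbb{E}[\mathbbm{1}_{E_{t-1}}\,g(s_t)]\le \mathbb{E}_{s_t\sim f^t_{\pi^*}}[g(s_t)]$ using $\mathbbm{1}_{E_{t-1}}\le 1$ and the fact that the unprimed marginal is an unperturbed $\pi^*$-rollout. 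The telescoping proof has the advantage of slotting directly into the reduction framework of \cite{Ross-AIstats10} (and extends to cost-to-go-weighted variants), while your coupling proof gives the cleaner ``probability of ever diverging'' interpretation and is closer in spirit to the coupling machinery the paper deploys for the analysis of \textsc{Mimic-Emp} in \Cref{theorem:nondet:UB}; it is a perfectly valid alternative here.
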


\begin{remark}
When the expert is deterministic, the definition of $\mathbb{T}_{\mathrm{pop}}$ matches that of $\mathbb{I}_{\mathrm{pop}}$ (\cref{eq:pop-0-1risk}) recovering the guarantee in \cite[Theorem 2.1]{Ross-AIstats10}. Thus, \Cref{lemma:nondet-reduction} strictly generalizes the supervised learning reduction for behavior cloning.
\end{remark}

While the reduction approach seems promising at first, there is a catch - the population TV risk in fact converges very slowly to $0$. Since it corresponds to matching the expert's and learner's action distributions, the convergence rate is $\gtrsim \sqrt{|\mathcal{A}|/N}$ even if $|\mathcal{S}|=1$. In the same setting, the population $0$-$1$ risk which is the counterpart in the deterministic expert setting converges at a much faster $\lesssim 1/N$ rate (Theorem~\ref{theorem:det:Nsim0:UB.p1}).

The significantly improved guarantees of the reduction approach when the expert is deterministic seem to suggest that imitation learning may be a significantly harder problem when the expert is stochastic. However, by circumventing the reduction framework, we show that this is in fact not the case. In \Cref{theorem:nondet:UB}, we show that expected suboptimality achieving the same $1/N$ rate of convergence (up to logarithmic factors) can be realized. This shows that the reduction analysis is in fact loose when the expert policy is stochastic.

\subsubsection{Circumventing the reduction approach}

In this section, we show that a natural policy in fact achieves a $1/N$ rate of convergence up to logarithmic factors. We consider \textsc{Mimic-Emp} (\Cref{alg:nondet:Nsim0}), which plays the empirical estimate of the expert's policy wherever available, and the uniform distribution over actions otherwise. Moreover the guarantees on expected suboptimality of this policy is optimal in the dependence on $H$ and achieves the error compounding lower bound in \Cref{theorem:LB.p1}. Note that the approach of playing the expert's empirical action distribution at states observed in the expert dataset in fact corresponds to minimizing the empirical risk under log-loss.

\begin{algorithm}
  \caption{\textsc{Mimic-Emp}} \label{alg:nondet:Nsim0}
  \begin{algorithmic}[1]
    \State \textbf{Input:} Expert dataset $D$
    \For{$t = 1,2,\cdots,H$}%
    \For{$s \in \mathcal{S}$}%
        \If{$s \in \mathcal{S}_t (D)$}
            \State $\widehat{\pi}_t (\cdot | s) = \pi^D_t (\cdot | s)$.
            \Comment{\parbox[t]{0.6\linewidth}{$\pi^D_t (\cdot | s)$ is the empirical estimate for $\pi^*_t (\cdot | s)$ in dataset $D$}}
        \Else
        \State $\widehat{\pi}_t (\cdot | s) = \mathrm{Unif} (\mathcal{A})$.
        \EndIf
    \EndFor
    \EndFor
    \State \textbf{Return} $\widehat{\pi}$
  \end{algorithmic}
\end{algorithm}

\begin{theorem} \label{theorem:nondet:UB}
In the no-interaction setting, the learner's policy $\widehat{\pi}$ returned by \textsc{Mimic-Emp} (\Cref{alg:nondet:Nsim0}) has expected suboptimality upper bounded by,
\begin{equation}
J (\pi^*) - \mathbb{E} \left[ J (\widehat{\pi}) \right] \lesssim \min \left\{ H, \frac{|\mathcal{S}| H^2 \log (N)}{N} \right\},
\end{equation}
for a general expert $\pi^*$ which could be stochastic. 
\end{theorem}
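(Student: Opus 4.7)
The plan is to reduce the expected suboptimality to a total variation distance between trajectory distributions and then bound that TV via a coupling. Since the cumulative reward is bounded by $H$, I would first observe
\[
J(\pi^*) - \mathbb{E}[J(\widehat{\pi})] \leq H \cdot \mathsf{TV}(P_{\tau^*}, P_{\widehat{\tau}}),
\]
where $P_{\tau^*}$ denotes the distribution of a fresh expert trajectory $\tau^* = (s^*_1, a^*_1, \ldots, s^*_H, a^*_H)$ and $P_{\widehat{\tau}}$ denotes the distribution of the learner's trajectory under \textsc{Mimic-Emp}, marginalized over the expert dataset $D$. It then suffices to exhibit a coupling of $(D, \tau^*, \widehat{\tau})$ in which $\widehat{\tau} = \tau^*$ with probability at least $1 - |\mathcal{S}| H \log(N)/N$ up to constants.

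I would build the coupling inductively in $t$, maintaining the invariant $\widehat{s}_t = s^*_t$ from the common initial state. At step $t$ the invariant can break in one of two ways: either $s^*_t \notin \mathcal{S}_t(D)$, in which case \textsc{Mimic-Emp} plays uniformly and the coupling fails; or $s^*_t \in \mathcal{S}_t(D)$ but $\widehat{a}_t \neq a^*_t$. The first failure mode contributes at most $|\mathcal{S}| H / N$ to the total failure probability by the union bound and the standard missing mass estimate $\sum_s f^t_{\pi^*}(s)(1-f^t_{\pi^*}(s))^N \leq |\mathcal{S}|/(eN)$. The second mode is the technical heart of the argument.

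The key exchangeability observation I would use is that, conditional on $s^*_t = s$, the $N_t(s)$ dataset actions at $(t,s)$ together with the ghost action $a^*_t$ form $N_t(s)+1$ i.i.d.\ draws from $\pi^*_t(\cdot|s)$. This joint exchangeability lets me couple $\widehat{a}_t \sim \pi^D_t(\cdot|s)$ and $a^*_t \sim \pi^*_t(\cdot|s)$ so that $\widehat{a}_t = a^*_t$ whenever I can ``absorb'' the ghost among the dataset samples. The main obstacle will be carrying out this absorption coherently across all $H$ steps for a single trajectory $\tau^*$: a naive per-step optimal TV coupling would produce a $\sqrt{|\mathcal{A}|/N_t(s)}$ error rate per step, exactly the kind of $|\mathcal{A}|$-dependence we are trying to avoid in the stochastic-expert setting (as discussed in \Cref{subsec:reduction-generalized}). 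The exchangeability-based coupling sidesteps the TV rate by pairing the ghost's single action with one of the dataset samples drawn from the same marginal, rather than treating them as independent draws from two empirical vs.\ true distributions.

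The $\log(N)$ factor would enter in the final bookkeeping via a threshold on visit counts. I would split the states at each time $t$ into \emph{heavy} states with $f^t_{\pi^*}(s) \geq c \log(N)/N$ and \emph{light} states otherwise. For heavy states a Chernoff bound gives $N_t(s) \gtrsim f^t_{\pi^*}(s) N$ with probability at least $1 - N^{-\Omega(1)}$, so the exchangeability coupling fails with probability at most $O(1/N_t(s))$ per step, which when aggregated over $t$ and $s$ against $\tau^*$'s state distribution yields a $|\mathcal{S}| H / N$ contribution. For light states I would simply concede disagreement and use that the total probability that $\tau^*$ visits any light state at time $t$ is at most $|\mathcal{S}| c \log(N)/N$. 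Summing both contributions across $t \in [H]$ and multiplying by the overall $H$ from the reduction yields the bound $|\mathcal{S}| H^2 \log(N)/N$, matching the theorem's $\min$ with the trivial bound $H$.
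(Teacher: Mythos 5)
Your high-level strategy is the same as the paper's: reduce the suboptimality to the probability that a coupled expert ("ghost") trajectory and learner trajectory decouple, and use exchangeability of the dataset actions with the ghost's action to avoid the $\sqrt{|\mathcal{A}|/N}$ rate of a naive per-state TV coupling. The paper implements this via an "expert table" $\mathbf{T}^*$ of i.i.d.\ action draws shared between the dataset-generating process and an oracle policy $\pi^{\mathrm{orc\text{-}first}}$, together with a separate induction (\Cref{lemma:val-equal}) showing that playing the empirical action distribution has the same expected value as playing a single designated dataset action. Under that construction the ghost and the learner agree \emph{exactly} at every state observed in $D$, so your second failure mode costs nothing and no heavy/light split is needed.

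The genuine gap is in your accounting of the first failure mode. You bound $\Pr[s^*_t = s,\ s \notin \mathcal{S}_t(D)]$ by $f^t_{\pi^*}(s)\bigl(1-f^t_{\pi^*}(s)\bigr)^N$ and sum to get $|\mathcal{S}|H/N$; this factorization requires the ghost trajectory to be \emph{independent} of $D$. But your own fix for the second failure mode — absorbing the ghost's action among the dataset samples at each visited state — makes the ghost's actions, and hence its subsequent states, correlated with $D$. You cannot simultaneously keep the ghost independent of $D$ (needed for the $|\mathcal{S}|H/N$ missing-mass estimate) and correlated with $D$ (needed to beat the $\sqrt{|\mathcal{A}|/N_t(s)}$ action-matching rate). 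The paper accepts the correlation and bounds the joint probability by
\begin{equation*}
\Pr\bigl[s^*_t = s,\ N_{t,s}=0\bigr] \;\le\; \min\Bigl\{\Pr_{\pi^*}[s_t=s],\ \bigl(1-\Pr_{\pi^*}[s_t=s]\bigr)^N\Bigr\} \;\le\; \frac{\log N}{N},
\end{equation*}
which after the union bound over $(s,t)$ is the actual source of the $|\mathcal{S}|H\log(N)/N$ rate (\Cref{lemma:nondet:error-prob-small,lemma:min2}). Your proposal instead places the $\log N$ in a heavy/light threshold on visit counts for the action-coupling step, a step that can be made lossless, while leaving the genuinely $\log N$-costing step unjustified. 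To close the argument you would either need to prove that the correlation induced by the absorption does not increase the unobserved-state probability beyond the independent case (not obvious and not what the paper does), or replace your estimate for mode (a) with the $\min$-based bound above — at which point your heavy/light machinery for mode (b) becomes unnecessary. A secondary, fixable issue: the claim that the $N_t(s)$ dataset actions plus the ghost action are jointly i.i.d.\ must be stated conditionally on the dataset truncated before the time-$t$ actions (as in \Cref{lemma:indep}) and on the ghost's history, and one must verify that the sequential absorption preserves the ghost's marginal law as an expert rollout across all $H$ steps — this is precisely the content of the paper's table construction and \Cref{lemma:expert=orcfirst}.
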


In contrast to the setting where the expert is determinstic, it is no longer true that the learner incurs no suboptimality as long as all states visited are observed in the expert dataset. However, by virtue of playing an empirical estimate of the expert's policy at these states it is plausible the expected suboptimality of the learner is $0$. However, a proof of this claim is not straightforward since the empirical distribution played by the learner at different states is not independent across time as functions of the dataset $D$.

We circumvent this problem by constructing a coupling between the expert's and learner's policies. Under the coupling it turns out the expected suboptimality of the learner is in fact $0$ when the visited states are all observed in the dataset. The remaining task is to bound the probability that at some point in the episode the learner visits a state unobserved in the expert dataset. A careful analysis of this probability term shows that it is bounded by $\lesssim |\mathcal{S}| H \log (N) / N$ under the coupling.

\section{Known-transition setting} \label{section:known-transition}

We next study imitation learning in the known-transition model where the initial state distribution and transition functions $P_t (\cdot | s,a)$ are known to the learner for all $s \in \mathcal{S}, a \in \mathcal{A}$ and $t \in [H]$. To indicate this, we denote the learner's policy by $\widehat{\pi} (D, P, \rho)$. In this setting, mimicking the expert on states where the expert's policy is known is still a good approach, since there is no contribution to the learner's suboptimality as long as the learner only visits such states in an episode. However compared to the no-interaction setting, with the additional knowledge of $P$, the learner can potentially do better on states that are not visited in the demonstrations, and \emph{correct} its mistakes even after it takes a wrong action, to avoid the error compounding problem.

\begin{theorem}
Consider the learner's policy $\widehat{\pi}$ returned by \textsc{Mimic-MD} (\Cref{alg:det:NsimInf}). When the expert policy $\pi^*$ is deterministic, in the known-transition setting,
\begin{enumerate}[label={(\alph*)},ref={\thetheorem~(\alph*)}]
    \item \label[theorem]{theorem:det:NsimInf:UB.p1} The expected suboptimality of the learner is upper bounded by,
    \begin{equation} \label{eq:det:NsimInf:UB.p1}
    J (\pi^*) - \mathbb{E} \left[ J (\widehat{\pi} (D, P,\rho)) \right] \lesssim \min \left\{ H,\ \sqrt{\frac{|\mathcal{S}|H^2}{N}} ,\ \frac{|\mathcal{S}|H^{3/2}}{N} \right\}.
    \end{equation}
    \item For any $\delta \in (0,\min\{ 1,H/5\})$, with probability $\ge 1 - \delta$, the suboptimality of the learner satisfies,
    \begin{equation} \label{eq:det:NsimInf:UB.p2}
        J(\pi^*) - J (\widehat{\pi}) \lesssim \frac{|\mathcal{S}| H^{3/2}}{N} \left( 1 + \frac{3\log (2|\mathcal{S}| H/\delta)}{\sqrt{|\mathcal{S}|}} \right)^{1/2} \sqrt{\log (2|\mathcal{S}| H/\delta)}.
    \end{equation}
    \label{theorem:det:NsimInf:UB.p2}
\end{enumerate}
\end{theorem}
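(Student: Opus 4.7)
The plan is to bound $J(\pi^*) - J(\widehat{\pi})$ in three stages: reduce the suboptimality to a trajectory-level total-variation distance, use the defining property of \textsc{Mimic-MD} to control that distance by an empirical process, and finally bound the empirical process using the sparse structure of $\Pi_{\mathrm{mimic}}(D)$. For the first stage, I would prove a trajectory-level analogue of \Cref{lemma:nondet-reduction}: writing $\mu_\pi$ for the distribution over length-$H$ trajectories induced by rolling out $\pi$ in the (known) MDP, the identity $J(\pi) = \mathbb{E}_{\tau \sim \mu_\pi}\bigl[\sum_t \mathbf{r}_t(s_t,a_t)\bigr]$ together with the boundedness of the cumulative reward by $H$ yields $J(\pi^*) - J(\widehat{\pi}) \le H \cdot \mathsf{TV}(\mu_{\pi^*},\mu_{\widehat{\pi}})$.

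For the second stage, I would invoke the defining property of the \textsc{Mimic-MD} estimator. The algorithm selects $\widehat{\pi}$ to minimize $\sup_{A\in\mathcal{F}}|\mu_\pi(A) - \hat{\mu}_D(A)|$ over $\pi \in \Pi_{\mathrm{mimic}}(D)$, where $\hat{\mu}_D$ is the empirical trajectory distribution in $D$ and $\mathcal{F}$ is a Yatracos-type discriminating class of trajectory events; knowledge of $P$ and $\rho$ makes each $\mu_\pi(A)$ exactly computable. Since the deterministic expert $\pi^*$ lies in $\Pi_{\mathrm{mimic}}(D)$ and is therefore itself a feasible candidate, the optimality of $\widehat{\pi}$ together with the triangle inequality yields the classical minimum-distance bound $\mathsf{TV}(\mu_{\widehat{\pi}},\mu_{\pi^*}) \lesssim \sup_{A\in\mathcal{F}}|\hat{\mu}_D(A) - \mu_{\pi^*}(A)|$.

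For the third stage, I would control the empirical-process supremum on the right-hand side. The key structural observation is that any two policies in $\Pi_{\mathrm{mimic}}(D)$ agree on every observed state, so the trajectory measures $\mu_\pi$ and $\mu_{\pi'}$ differ only on the event $\mathcal{E}$ that a trajectory ever visits an unobserved state. The argument underlying \Cref{theorem:det:Nsim0:UB.p1} gives $\mathbb{E}[\mu_{\pi^*}(\mathcal{E})] \lesssim |\mathcal{S}|H/N$, and I would then derive two complementary bounds on the supremum: a uniform Vapnik-type bound of order $\sqrt{|\mathcal{S}|/N}$ whose effective class complexity is $|\mathcal{S}|$ (since discrepancies between policies in $\Pi_{\mathrm{mimic}}(D)$ are localized to the unobserved states), yielding $H\sqrt{|\mathcal{S}|/N}$ after multiplication by $H$; and a variance-sensitive Bernstein-type bound of order $|\mathcal{S}|H^{1/2}/N$ that exploits the small $\mu_{\pi^*}$-mass of $\mathcal{E}$, yielding $|\mathcal{S}|H^{3/2}/N$. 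Taking the minimum establishes the expected-suboptimality claim. For the high-probability statement in \Cref{theorem:det:NsimInf:UB.p2}, I would upgrade these expectation bounds to tail bounds (Bernstein for the mass of unobserved states at each time, and McDiarmid or Talagrand for the empirical-process supremum) and union-bound across the $H$ time slices; the $\log(|\mathcal{S}|H/\delta)$ factors in the statement then arise naturally.

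The main obstacle lies in the third stage, specifically in extracting the $\sqrt{H}$ improvement over the behavior-cloning rate. The naive bound $H \cdot \Pr(\mathcal{E}) \lesssim |\mathcal{S}|H^2/N$ recovers only behavior cloning; obtaining $|\mathcal{S}|H^{3/2}/N$ genuinely requires the MD functional, and hence the knowledge of $P$ and $\rho$, to realign the learner's trajectory distribution with the expert's at a $\sqrt{H}$-faster rate once a trajectory has entered the unobserved region, reflecting a central-limit-type cancellation of per-step discrepancies rather than their worst-case sum.
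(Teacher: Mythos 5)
Your high-level architecture is the same as the paper's: constrain the learner to mimic the expert on observed states so that all discrepancy between $\mu_{\widehat{\pi}}$ and $\mu_{\pi^*}$ is localized to the event $\mathcal{E}$ of entering an unobserved state; exploit that $\pi^*$ is feasible for the minimum-distance program to reduce everything, via the triangle inequality, to an empirical-process deviation; and then obtain the $\sqrt{H}$ gain from a variance-sensitive bound that uses $\mathbb{E}[\mu_{\pi^*}(\mathcal{E})]\lesssim |\mathcal{S}|H/N$. The two complementary rates you describe ($H\sqrt{|\mathcal{S}|/N}$ and $|\mathcal{S}|H^{3/2}/N$) arise in the paper exactly as you predict, via Jensen plus ``variance of an indicator is at most its mean'' plus Cauchy--Schwarz over states at each fixed $t$.

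However, there is one genuine gap in your stage~2/3: you form the empirical trajectory measure $\hat{\mu}_D$ from the \emph{same} dataset $D$ that defines $\Pi_{\mathrm{mimic}}(D)$ and hence the localization event $\mathcal{E}$. Since the discriminating events relevant to the Yatracos class of $\{\mu_\pi:\pi\in\Pi_{\mathrm{mimic}}(D)\}$ are all contained in $\mathcal{E}=\{\exists\,\tau: s_\tau\notin\mathcal{S}_\tau(D)\}$, and every trajectory in $D$ by definition visits only states in $\mathcal{S}_\tau(D)$ at each $\tau$, you have $\hat{\mu}_D(A)=0$ identically for every such $A$. The empirical process you need to control is then deterministically equal to $\mu_{\pi^*}(\mathcal{E})\asymp|\mathcal{S}|H/N$, your Bernstein-type rate $|\mathcal{S}|\sqrt{H}/N$ is unattainable, and after multiplying by $H$ you recover only the behavior-cloning rate $|\mathcal{S}|H^2/N$ (worse, the MD objective degenerates into ``minimize the probability of ever leaving the observed region,'' which is not the estimator analyzed). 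This is precisely why \textsc{Mimic-MD} splits $D$ into $D_1$ (defining $\Pi_{\mathrm{mimic}}(D_1)$ and the events $\mathcal{T}^{D_1}_t(s,a)$) and a fresh half $D_2$ (forming the empirical frequencies): conditioned on $D_1$, the $D_2$-estimates are unbiased with variance bounded by $\mathrm{Pr}_{\pi^*}[\mathcal{T}^{D_1}_t(s,a)]$, which is what makes the small mass of $\mathcal{E}_{D_1}$ usable. A secondary, fixable looseness: the paper does not pass through trajectory-level $\mathsf{TV}$ and an abstract Yatracos class (whose complexity you would still need to control); it uses the additive reward structure to bound the suboptimality by the $\ell_1$ distance over the finite collection of $|\mathcal{S}||\mathcal{A}|H$ events $\{\mathcal{E}_{D_1}^{\le t}, s_t=s, a_t=a\}$, which is exactly the objective the algorithm optimizes.
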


\begin{algorithm}[t]
  \caption{\textsc{Mimic-MD}}
  \label{alg:det:NsimInf}
  \begin{algorithmic}[1]
    \State \textbf{Input:} Expert dataset $D$.
    \State Choose a uniformly random permutation of $D$,
    \Statex Define $D_1$ to be the first $N/2$ trajectories of $D$ and $D_2 = D \setminus D_1$.
    \State Define $\mathcal{T}^{D_1}_t ( s,a) \triangleq \{ \{ (s_{t'},a_{t'}) \}_{t'=1}^H | s_t{=}s, a_t{=}a,\ \exists \tau \le t : s_\tau \not\in \mathcal{S}_\tau (D_1) \}$ as trajectories that visit $(s,a)$ at time $t$, and at some time $\tau \le t$ visit a state unvisited at time $\tau$ in any trajectory in $D_1$.
    \State Define the following optimization problem:
    \begin{equation} \label{eq:opt}
        \argmin_{\pi \in \Pi_{\mathrm{mimic}} (D_1)} \ \sum_{t=1}^H \sum_{(s,a) \in \mathcal{S} \times \mathcal{A}} \left| \mathrm{Pr}_\pi \Big[ \mathcal{T}^{D_1}_t ( s,a) \Big] - \frac{\sum_{\textsf{tr} \in D_2} \mathbbm{1} \left( \textsf{tr} \in \mathcal{T}^{D_1}_t ( s,a) \right)}{|D_2|} \right| \tag{\textsf{OPT}}
    \end{equation}
    Choose $\widehat{\pi}$ as any optimizer of \ref{eq:opt}.
    \Statex \Comment{$\Pi_{\mathrm{mimic}} (D_1)$ is the set of policies that mimics the expert on the states visited in $D_1$ (\cref{eq:Pi.mimic})}
    \State \textbf{Return} $\widehat{\pi}$
  \end{algorithmic}
\end{algorithm}

\Cref{theorem:det:NsimInf:UB.p1} shows that \textsc{Mimic-MD} (\Cref{alg:det:NsimInf}) breaks the $|\mathcal{S}|H^2/N$ error compounding barrier which is not possible in the no-interaction setting, as discussed later in \Cref{theorem:LB.p1}. \textsc{Mimic-MD} inherits the spirit of mimicking the expert by exactly copying the expert actions in dataset $D_1$: as a result, the learner only incurs suboptimality upon visiting a state unobserved in $D_1$ at some point in an episode. Let $\mathcal{E}_{D_1}^{\le t}$ be the event that the learner visits a state at some time $\tau \le t$ which has not been visited in any trajectory in $D_1$ at time $\tau$. In particular, for any policy $\widehat{\pi}$ which mimics the expert on $D_1$, we show,
\begin{equation} \label{eq:NsimInf:motiv}
    J(\pi^*) - J (\widehat{\pi}) \le \sum_{s \in \mathcal{S}} \sum_{a \in \mathcal{A}} \sum\nolimits_{t=1}^H \left| \mathrm{Pr}_{\pi^*} \Big[ \mathcal{E}_{D_1}^{\le t} ,s_t=s,a_t=a \Big] - \mathrm{Pr}_{\widehat{\pi}} \Big[ \mathcal{E}_{D_1}^{\le t}, s_t=s,a_t=a \Big] \right|.
\end{equation}
In the known-transition setting the learner knows the transition functions $\{ P_t : 1 \le t \le H \}$ and the initial state distribution $\rho$, and can exactly compute the probability $\mathrm{Pr}_{\pi} [ \mathcal{E}_{D_1}^{\le t} ,s_t=s,a_t=a ]$ for any known policy $\pi$. However, unfortunately the learner cannot compute $\mathrm{Pr}_{\pi^*} [ \mathcal{E}_{D_1}^{\le t} ,s_t=s,a_t=a ]$ given only $D_1$. This is because the expert's policy on states unobserved in $D_1$ is unknown and the event $\mathcal{E}_{D_1}$ ensures that such states are necessarily visited. Here we use the remaining trajectories in the dataset, $D_2$ to compute an empirical estimate of $\mathrm{Pr}_{\pi^*} [ \mathcal{E}_{D_1}^{\le t} ,s_t=s,a_t=a ]$. The form of \cref{eq:NsimInf:motiv} exactly motivates \Cref{alg:det:NsimInf}, which replaces the population term $\mathrm{Pr}_{\pi^*} [ \mathcal{E}_{D_1}^{\le t} ,s_t=s,a_t=a ]$ by its empirical estimate in the MD functional.

\begin{remark}
In the known-transition setting, the maximum likelihood estimate (MLE) for $\pi^*$ does not achieve the optimal sample complexity. When the expert is deterministic, all policies in $\Pi_{\mathrm{mimic}} (D)$ have equal likelihood given $D$. This is because the probability of observing a trajectory does not depend on the expert's policy on the states it does not visit. From \Cref{theorem:det:Nsim0:UB.p2} and \Cref{theorem:LB.p1} the expected suboptimality of the worst policy in $\Pi_{\mathrm{mimic}} (D)$ is $\asymp |\mathcal{S}|H^2 / N$. Since the MLE does not give a rule to break ties, this implies that it is not optimal.
\end{remark}

\begin{remark}
The standard analysis of conventional minimum distance functional / distribution matching approaches rely on convergence of the empirical distribution to the population in the corresponding distance functional. For most non-trivial choices of the distance functional, this convergence rate is slow and is $\gtrsim 1/\sqrt{N}$, given $N$ samples. At a technical level, the state distributions are matched only at states unvisited in the expert dataset. In particular, $1/N$ rate of convergence of \textsc{Mimic-MD} relies on the fact that the effective mass of the distributions being matched shrinks from $1$ to $|\mathcal{S}|H/N$.
\end{remark}

\begin{remark}
Although data splitting may not be necessary, we conjecture that the conventional minimum distance functional approach, which matches the empirical distribution of either states or state-action pairs does not achieve the rate in \Cref{theorem:det:NsimInf:UB.p1} since it does not necessarily exactly mimic the expert on the observed demonstrations. In particular, conventional distribution matching approaches do not take into account the fact that the expert’s action is known at every state visited in the dataset. These policies may choose to play a different action at a state, even if the expert’s action is observed in the dataset. In contrast, \textsc{Mimic-MD} returns a policy that is constrained to mimic the expert at states visited in the expert dataset, avoiding this issue.
\end{remark}

\begin{remark} \label{remark:1}
The optimization problem \ref{eq:opt} solved by \textsc{Mimic-MD} is over multivariate degree-$H$ polynomials in $\{ \pi_1 (\cdot | \cdot) , \cdots, \pi_H (\cdot | \cdot) \}$. In general it is not possible to solve this optimization problem in polynomial (in $H$) time. However, we appeal to the fact that the polynomial is sparse having at most $N$ non-zero coefficients. Moreover, our analysis does not require that the optimization problem \ref{eq:opt} be solved exactly, which we discuss in \Cref{theorem:det:NsimInf:UB.p3,remark:3}.
\end{remark}

We also provide a guarantee when the learner solves the optimization problem in \ref{eq:opt} to an accuracy of $\varepsilon$. The guarantee on suboptimality admit by \textsc{Mimic-MD} in \Cref{theorem:det:NsimInf:UB.p1} is recovered taking $\varepsilon = 0$.

\begin{remark}
\label{theorem:det:NsimInf:UB.p3}
Consider any policy $\widehat{\pi}$ that minimizes the optimization problem \ref{eq:opt} to an additive error of $\varepsilon$. Then, the expected suboptimality of the learner is upper bounded by,
\begin{equation} \label{eq:det:NsimInf:UB.p3}
J (\pi^*) - \mathbb{E} \left[ J (\widehat{\pi} (D, P)) \right] \lesssim \min \left\{ H,\ H\sqrt{\frac{|\mathcal{S}|}{N}} + \varepsilon,\ \frac{|\mathcal{S}|H^{3/2}}{N} + \varepsilon \right\}.
\end{equation}
\end{remark}

\begin{remark} \label{remark:3}
\Cref{theorem:det:NsimInf:UB.p3} shows that \textsc{Mimic-MD} is amenable in the following settings and combinations thereof.
\begin{enumerate}
    \item As discussed in \Cref{remark:1}, optimization problems over multivariate degree $H$ polynomials (as in \ref{eq:opt}) in general are not exactly solvable in polynomial time. \Cref{theorem:det:NsimInf:UB.p3} shows that it suffices to solve \ref{eq:opt} approximately to result in a policy with small suboptimality.
    \item This approach applies in the approximate transition setting, where the transition functions are not known exactly but are known approximately. In particular, suppose the learner's policy $\widehat{\pi}$ solves \ref{eq:opt} exactly when the probabilities $\mathrm{Pr}_\pi [\cdot]$ are computed using the approximate transition functions. By the smoothness of $\mathrm{Pr}_\pi [\cdot]$ one can bound the suboptimality of $\widehat{\pi}$ on \ref{eq:opt} when the probabilities are instead computed using exact transition functions. Applying \Cref{theorem:det:NsimInf:UB.p3} for this $\varepsilon$ controls the suboptimality of the resulting policy.
\end{enumerate}
\end{remark}

\section{Lower bounds} \label{section:lower-bounds}

In this section we discuss lower bound constructions in the no-interaction, active and known-transition settings. Our first contribution is a lower bound on the expected suboptimality of any policy in the no-interaction and active settings.

\begin{theorem} \label{theorem:LB.p1}
For any learner $\widehat{\pi}$, there exists an MDP $\mathcal{M}$ and a deterministic expert policy $\pi^*$ such that the expected suboptimality of the learner is lower bounded in the no-interaction setting by,
\begin{equation}
    J_{\mathcal{M}} (\pi^*) - \mathbb{E} [J_{\mathcal{M}} (\widehat{\pi})] \gtrsim \min \left\{ H, |\mathcal{S}| H^2 / N \right\}.
\end{equation}
Furthermore this lower bound applies even when the learner operates in the active setting.
\end{theorem}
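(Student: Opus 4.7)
The plan is to construct an adversarial family of MDPs parametrized by a bit string $\theta \in \{a_0,a_1\}^{(|\mathcal{S}|-2)H}$ that encodes the expert's actions at a collection of ``rare'' states, apply Yao's minimax principle by averaging over $\theta$, and convert per-layer errors into a suboptimality lower bound via the standard error-compounding computation. Concretely, I would take state space $\{s_0,r_1,\ldots,r_{|\mathcal{S}|-2},\bot\}$ with two actions and initial state $s_0$. From $s_0$ the ``correct'' action $a_0$ transitions to $s_0$ with probability $1-\delta$ and to a uniformly random $r_i$ with total probability $\delta$; from $r_i$ at layer $t$, the correct action $\theta_{i,t}$ returns to $s_0$; any ``wrong'' action from a non-$\bot$ state sends the agent to the absorbing state $\bot$. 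Rewards are $\mathbf{1}[s_t\ne\bot]$, so the expert (which plays $a_0$ at $s_0$ and $\theta_{i,t}$ at $(r_i,t)$) attains $J(\pi^*)=H$. I would set $\delta=\min\{1/2,\ c(|\mathcal{S}|-2)/N\}$ for a small universal constant $c$. The crucial symmetry is that the transition kernel is independent of $\theta$; hence for any $(r_i,t)$ that does not appear in the learner's data, $\theta_{i,t}$ remains uniformly distributed, and any action the learner prescribes at $(r_i,t)$ coincides with $\theta_{i,t}$ with probability exactly $1/2$.

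Next I would lower-bound, for each layer $t$, the probability $p_t$ that the learner transitions to $\bot$ at layer $t+1$ conditional on having survived so far. Any learner that does not mimic $a_0$ at $s_0$ only hurts itself, so I may assume it does; the surviving-state distribution at layer $t$ then matches the expert's, giving probability $\approx\delta$ of being at some rare state with each $r_i$ equally likely. A direct computation with $(1-\delta/K)^N\ge e^{-N\delta/K}=e^{-c}$ (where $K=|\mathcal{S}|-2$) shows each $(r_i,t)$ is unobserved with at least constant probability, so $p_t\gtrsim \delta e^{-c}/2=\Theta(|\mathcal{S}|/N)$. Writing the learner's expected reward as $\sum_{t=1}^H \prod_{\tau<t}(1-p_\tau)\le \sum_{t=1}^H (1-p)^{t-1}$ and using the elementary bound $H-(1-(1-p)^H)/p\gtrsim \min\{H,pH^2\}$ gives $J(\pi^*)-\mathbb{E}[J(\widehat{\pi})]\gtrsim \min\{H,|\mathcal{S}|H^2/N\}$. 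In the small-$N$ regime $N<2cK$ I would simply set $\delta=1/2$, which makes $p$ a positive constant and yields the $\Omega(H)$ bound directly, matching $\min\{H,|\mathcal{S}|H^2/N\}=H$ in this range.

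For the active setting the construction is unchanged. The only additional wrinkle is that the distribution of visited states depends on the learner's accumulated knowledge, because knowing $\theta_{i,t}$ lets it survive past layer $t$ and hence potentially reach rare states at deeper layers. However, at every layer and every episode the learner is at a rare state with probability at most $\delta$ (since the survival probability is at most $1$), so the expected number $V_t$ of rare visits at layer $t$ over $N$ episodes satisfies $\mathbb{E}[V_t]\le N\delta$. By symmetry over the rare indices, each $(r_i,t)$ is visited uniformly at random among rare visits at layer $t$, and Jensen's inequality gives $\Pr((r_i,t)\text{ unobserved})\ge \mathbb{E}[(1-1/K)^{V_t}]\ge (1-1/K)^{N\delta}\ge e^{-c}$; the compounding step then goes through verbatim. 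The main obstacle is precisely this active-setting step: one must carefully decouple the adaptivity of the learner (which can bias trajectories toward layers it ``already knows'') from the marginal probability that any fixed $(r_i,t)$ is unobserved, and show the latter remains $\Omega(1)$. Everything else---the $1/2$ guess argument and the compounding computation---is then routine.
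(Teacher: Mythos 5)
Your construction and argument are essentially the paper's: a renewal MDP with an absorbing bad state, a uniform prior over the expert's actions at rarely-visited states so that any unobserved state forces a $1/|\mathcal{A}|$ coin flip that drops the learner into the bad state, and the standard geometric compounding computation $H\epsilon + (H-1)\epsilon(1-\epsilon)+\cdots \gtrsim \min\{H, H^2\epsilon\}$ with $\epsilon = \Theta(|\mathcal{S}|/N)$ --- the paper simply puts the $\Theta(1/N)$ mass directly into the renewal distribution $\rho$ rather than routing through a hub state, which is cosmetic. One caveat worth fixing: for an adaptive learner in the active setting your per-index Jensen step $\Pr\big((r_i,t)\ \text{unobserved}\big) \ge \mathbb{E}\big[(1-1/K)^{V_t}\big]$ is not valid as stated (the decision to stop collecting layer-$t$ visits may depend on the indices already seen, so the visited indices are not exchangeable with $V_t$), but the compounding step only needs the average $\frac{1}{K}\sum_i \Pr\big((r_i,t)\ \text{observed}\big) \le \mathbb{E}[V_t]/K \le N\delta/K = c$, which follows from counting distinct observed indices, so the gap is a one-line repair.
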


\noindent We construct the worst case MDP templates for the no-interaction and active settings in \Cref{fig:det:Nsim0:LB} and that for the known-transition setting in \Cref{fig:det:NsimInf:LB} and defer the formal analysis to the appendix.\\

\noindent In \Cref{fig:det:Nsim0:LB}, at any state of the MDP, except one, every other action, moves the learner to the absorbing state $b$. Suppose a learner independently plays an action different from the expert at a state with probability $\epsilon$. Upon making a mistake, the learner is transferred to $b$ and collects no reward for the rest of the episode. Thus the suboptimality of the learner is $\ge H \epsilon + (H-1) \epsilon (1 - \epsilon) +\cdots+ (1-\epsilon)^H \gtrsim \min \{ H, H^2 \epsilon \}$. By construction of $\rho$, we identify that any learner must make a mistake with probability $\epsilon \gtrsim  |\mathcal{S}|/N$, resulting in the claim. It is interesting to observe that this argument closely resembles the intuition mentioned in the introduction for the $\lesssim \epsilon H^2$ bound on suboptimality that the reduction to supervised learning guarantees. In the following remark, we address the active setting and show that the same lower bound construction carries over.

\begin{remark}
The lower bound construction in \Cref{fig:det:Nsim0:LB} applies even if the learner can query the expert while interacting with the MDP. If the expert's queried action is not followed at any state, the learner is transitioned to $b$ with probability $1$. Upon doing so, the learner no longer can get any meaningful information about the expert's policy at states for the rest of the episode. Seeing that the ``most informative'' dataset the learner can collect involves following the expert at each time, it is no different had an expert dataset of $N$ trajectories been provided in advance. This reduces the active case to the no-interaction case for which the existing construction applies.
\end{remark}

\begin{remark}
In the no-interaction setting, \Cref{theorem:LB.p1} in conjunction with \Cref{theorem:det:Nsim0:UB.p2} shows that when the expert plays a deterministic policy, behavior cloning achieves the optimal expected suboptimality of $\frac{|\mathcal{S}|H^2}{N}$ for imitation learning. Furthermore, this shows the optimality of behavior cloning even in the active learning setting as well. Thus, the ability to actively query the expert does not improve the sample complexity of imitation learning when the expert is deterministic. Lastly, in case the expert's policy is stochastic, in conjunction with \Cref{theorem:nondet:UB} this result shows that \textsc{Mimic-Emp} is optimal upto logarithmic factors in $N$ in the no-interaction setting.
\end{remark}

We next lower bound the expected suboptimality incurred by any learner's policy in the known-transition setting and provide a short intuition below.

\begin{theorem}
\label{theorem:LB.p3}
In the known transition setting, for any learner $\widehat{\pi} (D,P,\rho)$, there exists an MDP $\mathcal{M}$ and a deterministic expert policy $\pi^*$ such that the expected suboptimality of the learner is lower bounded by, $J_{\mathcal{M}} (\pi^*) - \mathbb{E} [J_{\mathcal{M}} (\widehat{\pi})] \gtrsim \min \left\{ H, |\mathcal{S}| H / N \right\}$.
\end{theorem}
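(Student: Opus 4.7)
The plan is to modify the template behind \Cref{theorem:LB.p1} so that the penalty for a single wrong guess at an unobserved state is linear in $H$ rather than quadratic: the learner still lacks information about the expert's action at states missing from $D$, but knowing the transition function means it can no longer be penalized through cascading unknown dynamics. A crucial additional constraint is that the secret now must be encoded in the (unobserved) reward function rather than in the dynamics, since otherwise the learner would read the expert's action directly off the known $P$.

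Concretely, I would take $\mathcal{S} = \{g, s_1, \ldots, s_m\}$ with $m = |\mathcal{S}|-1$ and binary actions $\{0,1\}$. All actions deterministically self-loop at every state, so $P$ is trivial and independent of the expert. The reward is $0$ everywhere at $g$, and at each hard state $s_i$ the expert's secret action $a^*_i \in \{0,1\}$ is encoded via $\mathbf{r}_t(s_i, a) = \mathbf{1}\{a = a^*_i\}$ for every $t$. Thus the expert collects reward $H$ from any trajectory starting at $s_i$ while a learner that plays the wrong action throughout collects $0$. Choose
\[
\rho(s_i) = p := \tfrac{1}{\max(N, m)}, \qquad \rho(g) = 1 - mp,
\]
which is the Laplace-style tuning that yields $p(1-p)^N = \Theta(1/\max(N,|\mathcal{S}|))$. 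The $N$-dependence of $\rho$ is permitted in a minimax construction.

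For the lower bound I would invoke Yao's principle against the uniform product prior $a^* \sim \mathrm{Unif}(\{0,1\})^m$. Because trajectories starting outside $s_i$ never visit $s_i$ (self-loops) and rewards are never observed, the law of $D$ depends on $a^*_i$ only through those trajectories whose initial state is $s_i$. Conditional on the event $E_i$ that no such trajectory appears in $D$, which has probability $(1-p)^N$, the posterior of $a^*_i$ given $D$ is still uniform on $\{0,1\}$, so the learner's action at $s_i$ agrees with $a^*_i$ with probability exactly $1/2$ at each of the $H$ time steps, giving expected learner reward $H/2$ versus expert reward $H$ on such a trajectory. Summing,
\[
\mathbb{E}_{a^*, D}\bigl[J(\pi^*) - J(\widehat{\pi})\bigr] \;\geq\; \sum_{i=1}^{m} p (1-p)^N \cdot \tfrac{H}{2} \;=\; \tfrac{H}{2}\, m\, p (1-p)^N.
\]
Plugging in $p = 1/\max(N,m)$ and using $(1-1/k)^N \geq 1/4$ in both the regimes $N \leq k$ and $k = N \geq 2$ gives $\gtrsim H$ when $N \leq |\mathcal{S}|$ and $\gtrsim |\mathcal{S}|H/N$ when $N \geq |\mathcal{S}|$, hence $\gtrsim \min\{H,\,|\mathcal{S}|H/N\}$ overall. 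Yao's principle then extracts a specific deterministic expert realizing the bound.

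I expect the main technical subtlety to be the posterior-uniformity step, namely that $D$ conveys zero information about $a^*_i$ in the absence of a trajectory starting at $s_i$. This relies crucially on the two design choices above---rewards are unobserved and transitions are independent of $a^*$---so that the only channel through which $a^*_i$ can influence $D$ is via trajectories actually visiting $s_i$. Once this conditional independence is formalized, the remainder is elementary bookkeeping with $p(1-p)^N$ and checking $mp \leq 1$.
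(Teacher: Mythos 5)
Your construction is essentially the paper's: all states are absorbing so the known transition function carries no information, the expert's secret is encoded only in the unobserved reward ($\mathbf{r}_t(s,a)=\mathbf{1}\{a=\pi^*_t(s)\}$), a uniform Bayesian prior makes the posterior on the expert's action uniform at any initial state missing from $D$, and a missing-mass computation shows this happens with probability $\gtrsim \min\{1,|\mathcal{S}|/N\}$, costing $\Theta(H)$ reward. The only cosmetic differences are your dedicated zero-reward sink state $g$ and the tuning $p=1/\max(N,|\mathcal{S}|)$ in place of the paper's $\zeta=1/(N+1)$, which if anything handles the $N\lesssim|\mathcal{S}|$ regime of the $\min$ more cleanly.
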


The lower bound instance in this construction is provided in \Cref{fig:det:NsimInf:LB}. In these MDPs, each state is absorbing so a policy only stays at a single state for the whole episode. If the initial state of the MDP was not visited in the dataset, the learner does not see the expert's action for the rest of the episode which is the only one at each state to offer non-zero reward. Conditioned on being initialized at such a state, the expected suboptimality of the learner is $\gtrsim H$. By construction of $\rho$, we determine that probability of the learner starting in such a state is $\gtrsim |\mathcal{S}| / N$ in expectation over the expert dataset $D$, resulting in the claim.

\begin{remark}
Our suboptimality lower bounds in \Cref{theorem:LB.p1} for the no-interaction / active settings and \Cref{theorem:LB.p3} for the known-transition setting are universal - they apply for any learner's policy $\widehat{\pi}$. In contrast, the lower bound example in \cite{Ross-AIstats10} (see Figure 1 and related discussion in their paper) applies only for supervised learning and is not universal. They construct a particular MDP and show that there exists a particular learner policy which (i) plays an action different than the expert with probability $\epsilon$, and (ii) suboptimality $\gtrsim H^2 \epsilon$. In fact on this example, the suboptimality of behavior cloning is exactly $0$ if the learner is provided even \textit{a single expert trajectory}. Thus their example does not provide a lower bound on the suboptimality of all learner algorithms as a function of the size of the dataset, $N$. In particular, even behavior cloning performs well on their example.
\end{remark}

\begin{remark}
In the known-transition setting, this lower bound in conjunction with \Cref{theorem:det:NsimInf:UB.p1} shows that when the expert is deterministic, then \textsc{Mimic-MD} (\Cref{alg:det:NsimInf}) is optimal in the dependence on $|\mathcal{S}|$ and $N$ and is suboptimal by a factor of at most $\sqrt{H}$ in its dependence on the episode length.
\end{remark}

\section{Conclusion} \label{section:conclusion}
We show that behavior cloning is in fact optimal in the no-interaction setting, when the expert is determinstic. In addition, we show that minimizing empirical risk under log-loss results in a policy which is optimal up to logarithmic factors even when the expert is stochastic. In the known-transition setting we propose the first policy that provably breaks the $H^2$ error compounding barrier, and show a lower bound which it matches up to a $\sqrt{H}$-factor. An important question we raise is to bridge this gap between the upper and lower bounds in the known-transition setting. In addition, we study IL at two opposing ends of the spectrum: when the learner cannot interact with the MDP, and when the learner exactly knows the underlying transition structure. Although there is at most a factor $H$ gap between these two settings, it is a fundamental question to ask is how much improvement is possible when the learner is allowed to interact with the MDP a finite number of times. It is also an interesting question to extend these results beyond the tabular setting.

\begin{figure}[t]
\centering
\begin{subfigure}[t]{0.49\textwidth}
    \captionsetup{format=hang}
    \centering
    \begin{tikzpicture}[shorten >=1pt,node distance=1.25cm,on grid,auto,good/.style={circle, draw=black!60, fill=green!30!white, thick, minimum size=9mm, inner sep=0pt},bad/.style={circle, draw=black!60, fill=red!30, thick, minimum size=9mm, inner sep=0pt}]
    \tikzset{every edge/.style={very thick, draw=red!30!white}}
    \tikzset{every loop/.style={min distance=9mm,in=86,out=125, looseness=8, very thick, draw=green!75!black}}
    
    \node[good]     (s_1)                   {$1$};
    \node           (dist1) [above = 1.5cm of s_1]  {${\sim}\rho$};
    \node           (dot)   [right of=s_1]  {$\cdots$};
    \node[good]     (s_2)   [right of=dot]  {\tiny $|\mathcal{S}|{-}1$};
    \node           (dist2) [above = 1.5cm of s_2]  {${\sim}\rho$};
    \node           (s_n)   [right of=s_2]  {};
    \node[bad]      (b)     [right of=s_n]  {$b$};

    \path[->,>=stealth]
    (s_1) edge [draw=green!75!black] node {} (dist1)
          edge [in = 147.5, out = 35]  node {} (b)
          edge [in = 160, out = 20]  node {} (b)
    (s_2) edge [draw=green!75!black] node {} (dist2)
          edge [in = 172.5, out = 15]  node {} (b)
          edge [in = 185, out = -7.5]  node {} (b)
    (b)   edge [loop above] node {} ()
    (b)   edge [distance=10mm,in=55,out=20, very thick] node {} (b);
    \end{tikzpicture}
    \caption{MDP template when in the no-interaction setting,\\ Upon playing the expert's action at any state except $b$, learner is renewed in the initial distribution,\\ $\rho = \{ \zeta,{\cdots},\zeta,1 {-} (|\mathcal{S}|{-}2) \zeta, 0\}$ where $\zeta {=} \frac{1}{N+1}$}
    \label{fig:det:Nsim0:LB}
\end{subfigure}%
~
\begin{subfigure}[t]{0.48\textwidth}
    \centering
    \begin{tikzpicture}[shorten >=1pt,node distance=1.25cm,on grid,auto,good/.style={circle, draw=black!60, fill=green!30!white, thick, minimum size=8mm, inner sep=0pt},bad/.style={circle, draw=black!60, fill=red!30, thick, minimum size=8mm, inner sep=0pt}]
    \tikzset{every edge/.style={very thick, draw=red!30!white}}
    \tikzset{every loop/.style={min distance=10mm,in=70,out=30,looseness=8, very thick, draw=green!75!black}}
    
    \node[good]     (s_1)                   {$1$};
    \node           (dot)   [right=of s_1]  {$\cdots$};
    \node[good]     (s_n)   [right of=dot]  {\small $|\mathcal{S}|$};

    \path[->,>=stealth]
    (s_1) edge [distance=10.5mm,in=110,out=150, very thick] node {} (s_1)
    (s_1) edge [loop above] node {}     ()
    (s_n) edge [distance=10.5mm,in=110,out=150, very thick] node {} (s_n)
    (s_n) edge [loop above] node {}     ();
    \end{tikzpicture}
    \caption{MDP template in the known-transition setting,\\ Each state is absorbing, initial distribution is given by $\{ \zeta,{\cdots},\zeta, 1 - (|\mathcal{S}|{-}1)\zeta \}$ where $\zeta = \frac{1}{N+1}$}
    \label{fig:det:NsimInf:LB}
\end{subfigure}
\caption{MDP templates for lower bounds under different settings: green arrows indicate state transitions under the expert's action, red arrows indicate state transitions under other actions}
\end{figure}
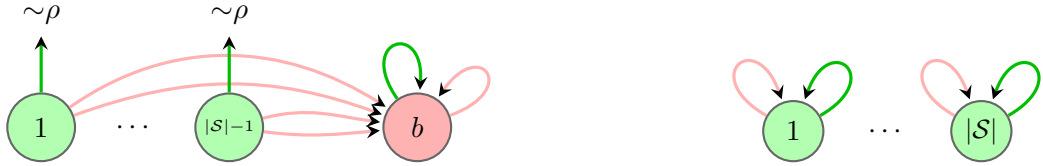

\bibliographystyle{plainnat}
\bibliography{refs}

\begin{appendices}

\section{} \label{app:a}
\localtableofcontents

We provide proofs for the theorems introduced previously in this appendix. We push the proofs of some of the lemmas and claims invoked in this section to Appendix~\ref{app:b}. For the remainder of the paper, we use $\log (\cdot)$ to denote the natural logarithm.

\subsection{No-interaction setting under deterministic expert policy}

In this section, we discuss the no-interaction setting where the learner is provided access to a dataset $D$ of $N$ trajectories generated by rolling out the expert's policy $\pi^*$, and is otherwise not allowed to interact with the MDP. Our goal is to provide guarantees on the expected suboptimality of a policy that carries out behavior cloning when the expert's policy is deterministic. As stated previously, we realize this guarantee by first bounding the population $0$-$1$ risk of behavior cloning (Theorem~\ref{theorem:det:Nsim0:UB.p1}) and then invoking the black box reduction guarantee from \cite{Ross-AIstats10}.

\subsubsection{Analysis of expected suboptimality of behavior cloning}

We first discuss the proof of \Cref{theorem:det:Nsim0:UB.p1} and~\Cref{theorem:det:Nsim0:UB.p2}, which bounds the expected suboptimality of a policy carrying out behavior cloning, assuming the expert's policy is deterministic.

Recall that the population $0$-$1$ loss is defined as,
\begin{align}
    \mathbb{I}_{\mathrm{pop}} (\widehat{\pi}, \pi^*) = \frac{1}{H} \sum\nolimits_{t=1}^H \mathbb{E}_{s_t \sim f_{\pi^*}^t} \Big[ \mathbb{E}_{a \sim \widehat{\pi}_t (\cdot |s_t)} \Big[ \mathbbm{1} (a \ne \pi^*_t (s)) \Big] \Big].
\end{align}
where $f^t_{\pi^*}$ is the state distribution induced at time $t$ rolling out the expert's policy $\pi^*$. We consider a learner $\widehat{\pi}$ that carries out behavior cloning given the expert dataset $D$ in advance. In particular, the learner's policy $\widehat{\pi}$ is a member of $\Pi_{\mathrm{mimic}} (D)$ since it exactly mimics the expert on the states that were visited at each time in some trajectory in the expert dataset. Thus the contribution to the population $0$-$1$ risk comes from the remaining states $s \in \mathcal{S}_t (D)$,
\begin{align}
    \mathbb{I}_{\mathrm{pop}} (\widehat{\pi}, \pi^*) &\le \frac{1}{H} \sum\nolimits_{t=1}^H \mathbb{E}_{s_t \sim f_{\pi^*}^t} \Big[ \mathbbm{1} (s_t \not\in \mathcal{S}_t (D)) \Big], \\
    &= \frac{1}{H} \sum\nolimits_{t=1}^H \sum\nolimits_{s \in \mathcal{S}} \mathrm{Pr}_{\pi^*} [s_t = s] \mathbbm{1} (s \not\in \mathcal{S}_t (D)). \label{eq:preexp}
\end{align}
Taking expectation on both sides gives,
\begin{equation}
    \mathbb{E} \left[ \mathbb{I}_{\mathrm{pop}} (\widehat{\pi}, \pi^*) \right] \le \frac{1}{H} \sum\nolimits_{t=1}^H \sum\nolimits_{s \in \mathcal{S}} \mathrm{Pr}_{\pi^*} [s_t = s] \mathrm{Pr} (s \not\in \mathcal{S}_t (D)). \label{eq:BC-upper-bound}
\end{equation}
In \Cref{lemma:BC-prob-small} we show that this expression is bounded by $\lesssim |\mathcal{S}| / N$, which completes the proof of the population $0$-$1$ risk bound of behavior cloning in Theorem~\ref{theorem:det:Nsim0:UB.p1}.

\begin{lemma} \label[lemma]{lemma:BC-prob-small}
$\mathbb{E} \left[ \sum_{t=1}^H \sum_{s \in \mathcal{S}} \mathrm{Pr}_{\pi^*} [s_t = s] \ \mathrm{Pr} [ s \not\in \mathcal{S}_t (D)] \right] \le \frac{4}{9} \frac{|\mathcal{S}| H}{|D|}$.
\end{lemma}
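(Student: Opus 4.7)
The plan is to reduce the claim to a one-dimensional calculus bound on $p(1-p)^N$ and then sum coordinate by coordinate. Write $p_t(s) \triangleq \mathrm{Pr}_{\pi^*}[s_t = s]$, so that $p_t(s)$ is a deterministic quantity depending only on the instance (not on $D$). The key observation is that, since the $N$ trajectories comprising $D$ are drawn i.i.d.\ by rolling out $\pi^*$, each trajectory independently visits $s$ at time $t$ with probability exactly $p_t(s)$. Hence the event $\{s \notin \mathcal{S}_t(D)\}$ is the intersection of $N$ independent events, and
\[
\Pr\big[s \notin \mathcal{S}_t(D)\big] = (1 - p_t(s))^N .
\]
Because $p_t(s)$ carries no $D$-randomness, the expectation is trivial and we reduce to showing
\[
\sum_{t=1}^H \sum_{s \in \mathcal{S}} p_t(s)\, (1 - p_t(s))^N \;\le\; \tfrac{4}{9}\, \tfrac{|\mathcal{S}| H}{N}.
\]

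The next step is a pointwise bound on $f(p) = p(1-p)^N$ for $p \in [0,1]$. Using $(1-p)^N \le e^{-Np}$ and maximizing $p \, e^{-Np}$ over $p \ge 0$ (attained at $p = 1/N$) gives $f(p) \le 1/(eN)$. Since $1/e < 4/9$, summing this uniform bound across all $s \in \mathcal{S}$ yields $\sum_{s} p_t(s)(1-p_t(s))^N \le \frac{4}{9}\cdot \frac{|\mathcal{S}|}{N}$ for every $t$, and summing over $t = 1, \ldots, H$ gives the desired inequality.

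I do not anticipate a real obstacle here; the main point one has to be careful about is that $p_t(s)$ is a property of $\pi^*$ and the MDP rather than of $D$, so the expectation over $D$ touches only the indicator factor and turns it into the clean product form $(1-p_t(s))^N$. A minor point worth noting in writing it up is that the pointwise bound $f(p) \le 1/(eN)$ is loose in the sense that not all states can simultaneously achieve $p_t(s) \approx 1/N$ (they must sum to one), but this slack is of no consequence because the looser $|\mathcal{S}|/N$ form is already what the lemma claims.
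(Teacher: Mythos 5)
Your proposal is correct and follows essentially the same route as the paper: use i.i.d.-ness of the $N$ rollouts to get $\Pr[s\notin\mathcal{S}_t(D)]=(1-p_t(s))^N$, bound $p(1-p)^N$ pointwise by a constant over $N$, and sum over $s$ and $t$. The only (immaterial) difference is the calculus step — you bound via $p e^{-Np}\le 1/(eN)$ while the paper maximizes $x(1-x)^N$ exactly to get $\le 4/(9N)$; since $1/e<4/9$ your constant is even slightly better and the claimed bound follows.
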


As stated previously, we subsequently invoke the supervised learning reduction in \cite[Theorem 2.1]{Ross-AIstats10} to upper bound the expected suboptimality of a learner carrying out behavior cloning in Theorem~\ref{theorem:det:Nsim0:UB.p2}.

The previous discussion is also amenable for establishing a high probability bound on the expected suboptimality of behavior cloning. Indeed, consider the upper bound on the population $0$-$1$ risk of behavior cloning in \cref{eq:preexp}, which is a function of the expert dataset $D$. It captures the probability mass under the expert's state distribution contributed by states unobserved in the expert dataset.

\subsubsection{High probability bounds for behavior cloning}

It turns out that the contribution to the upper bound on population $0$-$1$ risk of behavior cloning in \cref{eq:preexp} is captured by the notion of ``missing mass'' of the time-averaged state distribution under the expert's policy. The high probability result for behavior cloning (Theorem~\ref{theorem:det:Nsim0:UB.p3}) follows shortly by invoking existing concentration bounds for missing mass.

\begin{definition}[Missing mass]
Consider some distribution $\nu$ on $\mathcal{X}$, and let $X^N \overset{\text{i.i.d.}}{\sim} \nu$ be a dataset of $N$ samples drawn i.i.d. from $\nu$.
Let $\mathfrak{n}_x (X^N) = \sum_{i=1}^N \mathbbm{1} (X_i = x)$ be the number of times the symbol $x$ was observed in $X^N$. Then, the missing mass $\mathfrak{m}_0 (\nu, X^N) = \sum_{x \in \mathcal{X}} \nu (x) \mathbbm{1} (\mathfrak{n}_x (X^N) = 0)$ is the probability mass contributed by symbols never observed in $X^N$.
\end{definition}

It turns out that the missing mass of an arbitrary discrete distribution admits sub-Gaussian concentration. Invoking \cite[Lemma 11]{McAllesterOrtiz} establishes the following concentration guarantee for missing mass. A proof of the result is provided in Appendix~\ref{app:b}.

\begin{theorem} \label{theorem:missingmass-conc}
Consider an arbitrary distribution $\nu$ on $\mathcal{X}$, and let $X^N \overset{\text{i.i.d.}}{\sim} \nu$ be a dataset of $N$ samples drawn i.i.d. from $\nu$. Consider any $\delta \in (0,1/10]$. Then,
\begin{equation}
    \Pr \left( \mathfrak{m}_0 (\nu, X^N) - \mathbb{E} [\mathfrak{m}_0 (\nu, X^N)] \ge \frac{3\sqrt{|\mathcal{X}|} \log (1/\delta)}{N} \right) \le \delta.
\end{equation}
\end{theorem}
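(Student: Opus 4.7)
The plan is to invoke the sub-Gaussian concentration inequality for missing mass established in \cite[Lemma 11]{McAllesterOrtiz} and then perform some straightforward algebraic rearrangement to put it into the specific form asserted. First I would instantiate \cite[Lemma 11]{McAllesterOrtiz} to obtain a tail bound of the form
\begin{equation*}
    \Pr \big( \mathfrak{m}_0 (\nu, X^N) - \Expect [\mathfrak{m}_0 (\nu, X^N)] \ge t \big) \le \exp \big( - c N^2 t^2 / V \big),
\end{equation*}
where $c$ is an absolute constant and $V$ is a variance-proxy for the missing mass, of the form $\sum_{x \in \mathcal{X}} \nu(x)^2 (1 - \nu(x))^{N-1}$ (morally $\Expect[\mathfrak{m}_1] / N$, where $\mathfrak{m}_1$ denotes the ``singleton'' mass). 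If the cited lemma is stated in a Bernstein form instead (with an additive $c' t$ correction in the denominator of the exponent), the plan is the same with a small amount of extra bookkeeping in the inversion step below.

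Second, I would produce an $O(|\mathcal{X}| / N^2)$ bound on $V$. The standard pointwise argument is: for every $p \in [0,1]$,
\begin{equation*}
    p^2 (1 - p)^{N-1} \le p^2 e^{- (N-1) p} \le \frac{4}{e^2 (N-1)^2},
\end{equation*}
with the maximum attained at $p = 2/(N-1)$. Summing over $x \in \mathcal{X}$ gives $V \le 4 |\mathcal{X}| / (e^2 (N-1)^2)$, which is the quantitative input that brings $|\mathcal{X}|$ into the final bound.

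Third, I would invert the tail bound. Setting the right-hand side equal to $\delta$ and solving for $t$ with the variance estimate above yields a deviation of the form $t \le C \sqrt{|\mathcal{X}| \log(1/\delta)} / N$ for some absolute $C$. The remaining step in order to match the stated target $3 \sqrt{|\mathcal{X}|} \log(1/\delta) / N$ is cosmetic: since $\delta \le 1/10$ forces $\log(1/\delta) \ge \log 10 > 2$, one has $\sqrt{\log(1/\delta)} \le \log(1/\delta) / \sqrt{\log 10}$, which upgrades the $\sqrt{\log}$ to a $\log$ at the cost of an absolute constant. In the Bernstein alternative, inversion additionally produces a term of order $\log(1/\delta) / N$, which is absorbed into $\sqrt{|\mathcal{X}|} \log(1/\delta) / N$ using $|\mathcal{X}| \ge 1$.

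The main obstacle I anticipate is pure constant-tracking: the factor $4/e^2$ from the pointwise variance bound, the absolute constants inside \cite[Lemma 11]{McAllesterOrtiz}, and the conversion factor $1/\sqrt{\log 10}$ all have to combine to yield a final constant no larger than $3$. No individual step is technically hard, but all three have to be threaded carefully in order to land precisely on the stated constant. Beyond that, the only other subtle point is confirming that McAllester and Ortiz's variance proxy agrees with $V$ as written above, which is a short manipulation using $\Expect[\mathfrak{m}_1] = N \sum_{x \in \mathcal{X}} \nu(x)^2 (1 - \nu(x))^{N-1}$ together with the pointwise bound already used in step two.
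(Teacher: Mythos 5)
Your skeleton matches the paper's proof---both routes go through the McAllester--Ortiz concentration bound for the missing mass, and both use the pointwise estimate $\sup_{p\in[0,1]} p^2 e^{-cp} = 4e^{-2}/c^2$ summed over $\mathcal{X}$ to bring the factor $|\mathcal{X}|$ into the variance proxy---but there is a genuine gap in your inversion step. The inequality the paper actually imports from \cite{McAllesterOrtiz} is not an unconditional sub-Gaussian tail: it reads $\Pr\left( \mathfrak{m}_0 - \mathbb{E}[\mathfrak{m}_0] \ge \varepsilon \right) \le \exp(-\varepsilon^2/2\sigma^2)$ only for $0 \le \varepsilon \le \beta\sigma^2$, where $\beta \ge 0$ is a free parameter and $\sigma^2$ must dominate $\sum_{x} \nu(x)^2 e^{-(N-\beta)\nu(x)}$, so $\beta$ trades off the size of the variance proxy against the range of deviations for which the bound is valid. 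Your plan implicitly takes $\beta$ to be a constant fraction of $N$, giving $\sigma^2 \asymp |\mathcal{X}|/N^2$ but also capping the admissible deviation at $\beta\sigma^2 \asymp |\mathcal{X}|/N$; the deviation you want to certify, $\asymp \sqrt{|\mathcal{X}|\log(1/\delta)}/N$, violates this cap as soon as $\log(1/\delta) \gtrsim |\mathcal{X}|$, so the intermediate $\sqrt{\log(1/\delta)}$ bound you describe is not obtainable from the lemma uniformly over $\delta \in (0,1/10]$. Your hedge about a ``Bernstein form'' does not cover this either: the obstruction is a hard validity cap on $\varepsilon$, not an additive correction in the exponent.

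The paper's fix---and the one substantive idea in its proof---is to let $\beta$ depend on $\delta$: it sets $\beta = N - N/\sqrt{\log(1/\delta)}$, which inflates the variance proxy to $\sigma^2 \asymp |\mathcal{X}|\log(1/\delta)/N^2$ while keeping $\beta \ge N/3$ (this is where $\delta \le 1/10$ is actually used). Inverting the tail with this larger $\sigma^2$ directly produces the deviation $3\sqrt{|\mathcal{X}|}\log(1/\delta)/N$, and the constraint $\varepsilon \le \beta\sigma^2$ is then satisfied because $|\mathcal{X}| \ge \sqrt{|\mathcal{X}|}$. In particular, the full $\log(1/\delta)$ in the statement is not, as you suggest, a cosmetic weakening of a $\sqrt{\log(1/\delta)}$ bound via $\log(1/\delta) \ge \log 10$; it is forced by the restricted range of the concentration inequality. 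You would need to add this $\delta$-dependent choice of $\beta$ (or an equivalent device) for your argument to go through.
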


Consider the upper bound to the population $0$-$1$ loss in \cref{eq:BC-upper-bound}. Observe that for each fixed $\tau \in [H]$, $\sum_{s \in \mathcal{S}} \mathrm{Pr}_{\pi^*} [ s_\tau = s ] \mathbbm{1} (s \not\in \mathcal{S}_\tau (D))$ is the missing mass of $f_{\pi^*}^\tau$, given $N$ samples from the distribution. Recall that $f_{\pi^*}^\tau$ is the distribution over states at time $\tau$ rolling out $\pi^*$. Thus we can invoke the concentration bound from \Cref{theorem:missingmass-conc} to prove that the upper bound on $0$-$1$ loss in \cref{eq:BC-upper-bound} concentrates. We formally state this result in \Cref{lemma:error-prob-conc}.

\begin{lemma} \label[lemma]{lemma:error-prob-conc}
For any $\delta$ such that $\delta \in (0, \min \{1, H/10\}]$, with probability $\ge 1 - \delta$ over the randomness of the expert dataset $D$,
\begin{equation}
    \frac1H \sum\nolimits_{\tau = 1}^H \sum\nolimits_{s \in \mathcal{S}} \mathrm{Pr}_{\pi^*} [ s_\tau = s ] \mathbbm{1} (s \not\in \mathcal{S}_\tau (D)) \le \frac{4 |\mathcal{S}|}{9 N} + \frac{3\sqrt{|\mathcal{S}|} \log(H/\delta)}{N}.
\end{equation}
\end{lemma}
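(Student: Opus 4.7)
The plan is to recognize the inner sum over $s$ as a missing-mass quantity, apply the sub-Gaussian concentration from \Cref{theorem:missingmass-conc} to each time step with an appropriate $\delta$-rescaling, and take a union bound over $\tau \in [H]$.

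First I would fix $\tau \in [H]$ and observe that the sequence $X^N_\tau = (s_\tau^{(1)}, \ldots, s_\tau^{(N)})$ obtained by reading off the state at time $\tau$ across the $N$ trajectories in $D$ is an i.i.d. sample from the distribution $f^\tau_{\pi^*}$ on $\mathcal{S}$. By definition of the visited set $\mathcal{S}_\tau(D)$, a state $s$ is absent from $\mathcal{S}_\tau(D)$ exactly when $\mathfrak{n}_s(X^N_\tau) = 0$, so
\begin{equation*}
\sum_{s \in \mathcal{S}} \mathrm{Pr}_{\pi^*}[s_\tau = s] \, \mathbbm{1}(s \notin \mathcal{S}_\tau(D)) \;=\; \mathfrak{m}_0(f^\tau_{\pi^*}, X^N_\tau).
\end{equation*}

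Next I would control the expectation $\mathbb{E}[\mathfrak{m}_0(f^\tau_{\pi^*}, X^N_\tau)] = \sum_{s} f^\tau_{\pi^*}(s) (1 - f^\tau_{\pi^*}(s))^N$ by the uniform bound $p(1-p)^N \le \frac{4}{9N}$ valid for all $p \in [0,1]$ and $N \ge 1$, which yields $\mathbb{E}[\mathfrak{m}_0(f^\tau_{\pi^*}, X^N_\tau)] \le \frac{4|\mathcal{S}|}{9N}$. This is exactly the per-timestep version of the calculation behind \Cref{lemma:BC-prob-small}. Applying \Cref{theorem:missingmass-conc} with failure parameter $\delta/H$ (valid since the hypothesis $\delta \le H/10$ gives $\delta/H \le 1/10$) then yields, with probability at least $1 - \delta/H$,
\begin{equation*}
\mathfrak{m}_0(f^\tau_{\pi^*}, X^N_\tau) \;\le\; \frac{4|\mathcal{S}|}{9N} + \frac{3\sqrt{|\mathcal{S}|}\log(H/\delta)}{N}.
\end{equation*}

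Finally, a union bound over $\tau = 1, \ldots, H$ shows that with probability at least $1 - \delta$ the above display holds simultaneously for every $\tau$, and averaging over $\tau$ preserves the bound since the right-hand side is $\tau$-independent. This delivers the claim. I do not anticipate any real obstacle here: the main ingredients (missing-mass identification, the $\frac{4}{9N}$ expectation bound from the proof of \Cref{lemma:BC-prob-small}, and the sub-Gaussian tail from \Cref{theorem:missingmass-conc}) are all in hand, and the only care needed is to verify that the rescaling $\delta \mapsto \delta/H$ stays within the admissible range required by \Cref{theorem:missingmass-conc}, which is precisely why the hypothesis $\delta \le H/10$ appears in the statement.
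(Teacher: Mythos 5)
Your proposal is correct and follows essentially the same route as the paper: identify each inner sum as the missing mass of $f^\tau_{\pi^*}$ given $N$ i.i.d.\ samples, apply \Cref{theorem:missingmass-conc} with failure probability $\delta/H$ (admissible since $\delta \le H/10$), union bound over $\tau$, and control the expectation via the $p(1-p)^N \le \tfrac{4}{9N}$ bound underlying \Cref{lemma:BC-prob-small}. The only cosmetic difference is that you bound each $\mathbb{E}[\mathfrak{m}_0(f^\tau_{\pi^*}, X^N_\tau)]$ per time step while the paper cites \Cref{lemma:BC-prob-small} for the sum over $\tau$; the content is identical.
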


Plugging this result into \cref{eq:BC-upper-bound} provides an upper bound on the population $0$-$1$ risk of behavior cloning. Subsequently invoking \cite[Theorem 2.1]{Ross-AIstats10}, we arrive at the high probability bound on $J(\pi^*) - J(\widehat{\pi})$ for behavior cloning in \Cref{theorem:det:Nsim0:UB.p3}.

\subsection{No-interaction setting when the expert policy is stochastic}

In this section we continue to discuss the no-interaction setting, but drop the assumption that the expert plays a deterministic policy. We assume the expert plays a general stochastic policy.

\subsubsection{Analyzing expected suboptimality of \textsc{Mimic-Emp}}

In this section we discuss the proof of \Cref{theorem:nondet:UB} which bounds the expected suboptimality of \textsc{Mimic-Emp}.
Recall that the objective is to upper bound $J(\pi^*) - \mathbb{E} [J (\widehat{\pi})]$ when the learner carries out \textsc{Mimic-Emp}. The outline of the proof is to construct two policies $\pi^{\mathrm{first}}$ and $\pi^{\mathrm{orc-first}}$ that are functions of the dataset $D$.

The policy $\pi^{\mathrm{first}}$ is easy to describe: order the expert dataset arbitrarily, and at a state, play the action in the first trajectory in $D$ that visits it, if it exists. If no such trajectories exist, the policy plays $\mathrm{Unif} (\mathcal{A})$. In particular, we show that the value of $\pi^{\mathrm{first}}$ and \textsc{Mimic-Emp} are the same, taking expectation over the expert dataset $D$ (\Cref{lemma:val-equal}).

On the other hand, we consider an oracle policy $\pi^{\mathrm{orc-first}}$ which is very similar. Indeed, $\pi^{\mathrm{orc-first}}$ first orders the expert dataset in the same manner as $\pi^{\mathrm{first}}$. At any state, it too plays the action in the first trajectory in $D$ that visits it, if it exists. However, if such a trajectory does not exist, $\pi^{\mathrm{orc-first}}$ simply samples an action from the expert's action distribution and plays it at this state. This explains the namesake of the policy, since it requires oracle access to the expert's policy. By virtue of choosing actions this way, we show that the value of $\pi^{\mathrm{orc-first}}$ in expectation equals $J(\pi^*)$ (\Cref{lemma:expert=orcfirst}).

At an intuitive level the elements of the proof seem to be surfacing: $\pi^{\mathrm{orc-first}}$ matches $\pi^*$ in value, but is not available to the learner. However, it shares a lot of similarity to $\pi^{\mathrm{first}}$, which in expectation matches $\widehat{\pi}$ in value, the policy we wish to analyze. Informally,
\begin{equation}
    \widehat{\pi} \iff \pi^{\mathrm{first}} \approx \pi^{\mathrm{orc-first}} \iff \pi^*.
\end{equation}
Thus to establish the bound, we carry out an analysis of $J(\pi^{\mathrm{orc-first}}) - J(\pi^{\mathrm{first}})$. Indeed we show that since the two policies are largely the same, the learner is suboptimal only on the trajectories where at some point a state is visited where the policies do not match. The final element of the proof is to show that this event in fact occurs with low probability given an expert dataset of sufficiently large size.

Before delving into the formal definitions of $\pi^{\mathrm{first}}$ and $\pi^{\mathrm{orc-first}}$ and other elements of the proof, we introduce a modicum of relevant notation.

\paragraph{Notation:} We assume that the trajectories in the expert dataset $D$ are ordered arbitrarily as $\{ \textsf{tr}_1,\cdots,\textsf{tr}_N \}$. In addition, we denote each trajectory $\textsf{tr}_n$ explicitly as $\{ (s_1^n,a_1^n),\cdots, (s_H^n,a_H^n) \}$. For each state $s \in \mathcal{S}$ we define,
\begin{equation} \label{eq:Nts}
    N_{t,s} = \left\{ n \in [N] : s_t^n = s \right\},
\end{equation}
as the (totally) ordered set of indices of trajectories in $D$ which visit the state $s$ at time $t$.

The policy $\widehat{\pi}$ returned by \textsc{Mimic-Emp} samples an action from the empirical estimate of the expert's policy at each state wherever available. On the remaining states, the learner plays the distribution $\mathrm{Unif} (\mathcal{A})$.

Given the ordered dataset $D$, we define the policy $\pi^{\mathrm{first}} (D)$ as,
\begin{equation} \label{eq:def:pi-first}
    \pi^{\mathrm{first}}_t (\cdot | s) =
    \begin{cases}
    \delta_{a_t^n} \qquad &\text{if } |N_{t,s}| \ge 1, \text{ where } n = \min (N_{t,s}),\\
    \mathrm{Unif} (\mathcal{A}) \qquad &\text{otherwise.}
    \end{cases}
\end{equation}
In other words, $\pi^{\mathrm{first}} (D)$ plays the action in the first trajectory that visits the state $s$ at time $t$.

In order to analyze the expected suboptimality of $\widehat{\pi} (D)$, we first show that $\widehat{\pi} (D)$ and $\pi^{\mathrm{first}} (D)$ have the same value in expectation, and instead study the policy $\pi^{\mathrm{first}} (D)$.

\begin{lemma} \label[lemma]{lemma:val-equal}
$\mathbb{E} [J (\widehat{\pi} (D))] = \mathbb{E} [J (\pi^{\mathrm{first}} (D))]$.
\end{lemma}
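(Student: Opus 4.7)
The plan is to reduce the claim to showing that, in expectation over the dataset $D$, the two policies $\widehat{\pi}(D)$ and $\pi^{\mathrm{first}}(D)$ induce the same marginal distribution of $(s_t, a_t)$ at each time step $t$. Since the value $J(\pi) = \sum_{t=1}^H \mathbb{E}_\pi[r_t(s_t, a_t)]$ decomposes additively over time, matching these per-time marginals is sufficient to match the expected values.

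I would first exploit exchangeability: since the trajectories in $D$ are i.i.d., $\mathbb{E}_D[J(\pi^{\mathrm{first}}(D))] = \mathbb{E}_D \mathbb{E}_\sigma[J(\pi^{\mathrm{first}}(\sigma D))]$ for a uniformly random permutation $\sigma$ of $[N]$. Under random $\sigma$, the trajectory that $\pi^{\mathrm{first}}$ selects at any visited $(t, s)$ is, marginally in $\sigma$, a uniformly random element of $N_{t, s}$; hence its action at $(t, s)$ is drawn from the empirical distribution $\pi^D_t(\cdot|s)$, which is exactly the distribution $\widehat{\pi}_t(\cdot|s)$ samples from. So the single-state marginal behaviors of the two policies agree.

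The main subtlety, and the main obstacle, is that $\widehat{\pi}$ draws actions independently across different states visited in the rollout, whereas $\pi^{\mathrm{first}}(\sigma D)$ couples them via the shared permutation $\sigma$. For a fixed $D$ one can verify by explicit example that these induce different joint rollout distributions, so the proof must genuinely use the expectation over $D$. I would then carry out an induction on $t$: assuming the marginals of $(s_\tau, a_\tau)$ agree in expectation over $D$ for $\tau < t$, the marginal of $s_t$ automatically agrees through the transition kernel $P_{t-1}$, so the task reduces to matching $\mathbb{E}_D[\Pr(a_t = a \mid s_t = s')]$ under the two schemes. Expanding both probabilities as sums over prefix histories and grouping terms by which trajectory of $D$ is selected at each visited $(\tau, s_\tau)$, the difference reduces to a combinatorial identity; I expect to complete this step by a swap symmetry -- simultaneously swapping the contents and $\sigma$-ranks of any two trajectories leaves the joint distribution of $(D, \sigma)$ invariant by i.i.d., and this cancellation is precisely what eliminates the coupling-induced correlations across time steps.
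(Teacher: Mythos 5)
Your setup is right and you have correctly isolated the crux --- the reduction to matching $\mathbb{E}_D\bigl[\mathrm{Pr}_\pi[s_t=s,a_t=a]\bigr]$ per time step is valid, the random-permutation observation correctly shows that the two policies agree \emph{marginally} at any single $(t,s)$, and you are right that the whole difficulty is the coupling of the selections across time steps through the shared dataset/permutation. But the final step, as proposed, does not close. First, the inductive hypothesis ``the marginals of $(s_\tau,a_\tau)$ agree in expectation over $D$ for $\tau<t$'' is too weak: $\mathbb{E}_D\bigl[\mathrm{Pr}[s_t=s',a_t=a]\bigr]=\mathbb{E}_D\bigl[\mathrm{Pr}[s_t=s']\cdot\pi_t(a|s')\bigr]$, and the visitation probability $\mathrm{Pr}[s_t=s']$ (a function of the policy at times $<t$, hence of part of $D$) is correlated with the action kernel $\pi_t(\cdot|s')$ (a function of $N_{t,s'}$ and the recorded actions there), so knowing that earlier marginals match does not let you factor this expectation. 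Second, once you expand over prefix histories, the identity you need is $\mathbb{E}_D\mathbb{E}_\sigma\bigl[\prod_\tau \mathbbm{1}(a_\tau^{n_\sigma(\tau,s_\tau)}=a_\tau)\bigr]=\mathbb{E}_D\bigl[\prod_\tau \pi^D_\tau(a_\tau|s_\tau)\bigr]$, i.e.\ that in expectation over $D$ the shared-$\sigma$ selections may be replaced by independent uniform selections. Grouping by which trajectory is selected at each $\tau$ does not match term by term (the events $\{n_\sigma(\tau,s_\tau)=m_\tau\}$ are correlated across $\tau$ under one $\sigma$), and the whole-trajectory swap symmetry does not produce the needed cross-term cancellation: the actions of a single trajectory at different times are \emph{not} independent once you condition on its visitation pattern (which actions it took earlier biases which states it reaches later), so exchanging entire trajectories does not decouple the selection at time $\tau$ from the selections at other times.

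The missing ingredient is a conditional-independence statement about the data-generating process rather than a symmetry of the finished dataset: conditioned on $D_{\le\tau,<\tau}$ (all trajectories truncated to the state reached at time $\tau$), the sets $N_{\tau,s}$ are measurable and the actions $\{a_\tau^n: n\in N_{\tau,s}\}$ are i.i.d.\ $\sim\pi^*_\tau(\cdot|s)$ (\Cref{lemma:indep}, formalized via the expert-table view of \Cref{lemma:alt-view}). This is what licenses replacing ``the first index'' by ``an independently chosen uniform index'' one time step at a time; the paper does this by introducing the intermediate policy $X^{\mathrm{unif}}$ (independent uniform index at each $(t,s)$, which equals \textsc{Mimic-Emp} in conditional expectation by a simple factorization) and then running an induction over $\tau$ on \emph{conditional expected values given the truncated dataset}, not on marginals. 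The induction is needed precisely because the future of the dataset is generated from the realized actions at time $\tau$, so the insensitivity to the index choice at time $\tau$ must be argued jointly with everything downstream. If you replace your marginal-based induction and swap symmetry with this filtration-based argument, your outline becomes a proof along essentially the same lines as the paper's.
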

With this result, we can write the expected suboptimality of the learner $\widehat{\pi}$ as,
\begin{equation} \label{eq:emp-to-first}
    J(\pi^*) - \mathbb{E} [J ( \widehat{\pi} (D))] = J(\pi^*) - \mathbb{E} [J ( \pi^{\mathrm{first}} (D))].
\end{equation}

We next move on to the discussion of $\pi^{\mathrm{orc-first}}$ which is an oracle version of $\pi^{\mathrm{first}}$. Informally, at any state $\pi^{\mathrm{orc-first}}$ plays the action from the first trajectory that visits it in $D$, if available. However on the remaining states instead of playing $\mathrm{Unif} (\mathcal{A})$, $\pi^{\mathrm{orc-first}}$ samples an action from the expert's action distribution and plays it at this state. Thus, $\pi^{\mathrm{orc-first}}$ is coupled with the expert dataset $D$.

Prior to discussing $\pi^{\mathrm{orc-first}}$ in greater depth, we first introduce some preliminaries. In particular, we adopt an alternate view of the process generating the expert dataset $D$ which will play a central role in formally defining $\pi^{\mathrm{orc-first}}$. We mention that this approach is inspired by the alternate view of Markov processes in \cite{billingsley1961}.

To this end, we first define an ``expert table'' which is a fixed infinite collection of actions at each state and time which the expert draws upon while generating the trajectories in $D$.

\begin{definition}[Expert table] \label{def:policy-table}
The expert table, $\mathbf{T}^*$ is a collection of random variables $\mathbf{T}^*_{t,s} (i)$ indexed by $t \in [H]$, $s \in \mathcal{S}$ and $i = 1,2,\cdots$. Fixing $s \in \mathcal{S}$ and $t \in [H]$, for $i = 1,2,\cdots$, each $\mathbf{T}^\pi_{t,s} (i)$ is drawn independently $\sim \pi_t^* (\cdot | s)$.
\end{definition}

In a sense, the expert table fixes the randomness in the expert's non-determinstic policy. As promised, we next present the alternate view of generating the expert dataset $D$, where the expert sequentially samples actions from the expert table at visited states.

\begin{lemma}[Alternate view of generating $D$] \label[lemma]{lemma:alt-view}
Generate a dataset $D$ of $N$ trajectories as follows: For the $n^{th}$ trajectory $\textsf{tr}_n$, the state $s_1^n$ is drawn independently from $\rho$. The action $a_1^n$ is assigned as the first action from $\mathbf{T}^*_{1,s_1^n} (\cdot)$ that was not chosen in a previous trajectory. Then the MDP independently samples the state $s_2^n \sim P_1 (\cdot|s_1^n,a_1^n)$. In general, at time $t$ the action $a_t^n$ is drawn as the first action in $\mathbf{T}^*_{t,s_t^n} (\cdot)$ that was not chosen at time $t$ in any previous trajectory $n' < n$. The subsequent state $s_{t+1}^n$ is drawn independently $\sim P_{t+1} (\cdot | s_t^n,a_t^n)$.

The probability of generating a dataset $D = \{ \textsf{tr}_1,\cdots,\textsf{tr}_N \}$ by this procedure is $= \prod_{n=1}^N \mathrm{Pr}_{\pi^*} [\textsf{tr}_n]$. This is the same as if the trajectories were generated by independently rolling out $\pi^*$ for $N$ episodes.
\end{lemma}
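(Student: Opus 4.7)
The plan is to compute the joint probability of obtaining a specific dataset $D = \{\textsf{tr}_1,\ldots,\textsf{tr}_N\}$ under the alternate procedure and show this equals $\prod_{n=1}^N \Pr_{\pi^*}[\textsf{tr}_n]$. The initial-state draws $s_1^n \sim \rho$ and the environment transitions $s_{t+1}^n \sim P_t(\cdot \mid s_t^n, a_t^n)$ in the alternate procedure are identical to the standard i.i.d. rollout, so the only nontrivial step is to verify that the joint distribution of the action assignments $\{a_t^n\}_{n \in [N], t \in [H]}$ factorizes as it would under i.i.d. rollouts of $\pi^*$.

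First I would fix the realized dataset $D$ and, for each state-time pair $(t,s)$, let $k_{t,s} = |N_{t,s}|$ (see \cref{eq:Nts}) be the number of trajectories visiting $s$ at time $t$, with ordered indices $n_1 < n_2 < \cdots < n_{k_{t,s}}$ and recorded actions $b_i^{(t,s)} := a_t^{n_i}$. Then I would argue by induction on $n$ that under the alternate procedure, $a_t^{n_i}$ is assigned to be exactly the table entry $\mathbf{T}^*_{t,s}(i)$: when trajectory $n_i$ reaches state $s$ at time $t$, precisely the earlier $i-1$ trajectories visiting $(t,s)$ have consumed $\mathbf{T}^*_{t,s}(1),\ldots,\mathbf{T}^*_{t,s}(i-1)$ (and no other trajectory has touched $\mathbf{T}^*_{t,s}$), so the ``first unused'' entry is $\mathbf{T}^*_{t,s}(i)$. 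The key point here is that the index $i$ assigned to trajectory $n$ at $(t,s)$ is a deterministic function of the realized state sequences, independent of the table values themselves.

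With the bookkeeping in place, since the table entries $\mathbf{T}^*_{t,s}(i)$ are mutually independent across $(t,s,i)$ and $\mathbf{T}^*_{t,s}(i) \sim \pi_t^*(\cdot \mid s)$, the joint probability that the required entries take the required values factorizes as
\begin{equation*}
\Pr\!\left[\forall (t,s), \ \forall i \le k_{t,s}: \mathbf{T}^*_{t,s}(i) = b_i^{(t,s)}\right] = \prod_{(t,s)} \prod_{i=1}^{k_{t,s}} \pi_t^*\!\left(b_i^{(t,s)} \,\big|\, s\right).
\end{equation*}
Combining this with the contributions from $\rho$ and the transition kernel, the probability of generating the particular dataset $D$ is
\begin{equation*}
\prod_{n=1}^N \rho(s_1^n) \cdot \prod_{(t,s)} \prod_{i=1}^{k_{t,s}} \pi_t^*(b_i^{(t,s)} \mid s) \cdot \prod_{n=1}^N \prod_{t=1}^{H-1} P_t(s_{t+1}^n \mid s_t^n, a_t^n).
\end{equation*}
Each action $a_t^n$ appears exactly once in the middle double-product, so re-indexing the product over $(t,s,i)$ as a product over $(n,t)$ yields $\prod_{n=1}^N \Pr_{\pi^*}[\textsf{tr}_n]$, as claimed.

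The main obstacle is conceptual rather than computational: the alternate procedure appears to create cross-trajectory dependence because trajectories share the expert table. I would therefore want to emphasize that once the ``index function'' (which table entry gets consumed when) is shown to depend only on the visited states and the previously consumed entries, the independence and identical-distribution structure of the remaining unconsumed entries is preserved, so every newly consumed entry is a fresh $\pi_t^*(\cdot \mid s)$ sample conditional on the history. An equivalent way to phrase this, which I would use if a purely combinatorial argument is preferred, is to observe that $(\mathbf{T}^*_{t,s}(1), \ldots, \mathbf{T}^*_{t,s}(k_{t,s}))$ is an exchangeable (in fact i.i.d.) vector with the same joint law as $k_{t,s}$ independent draws from $\pi_t^*(\cdot \mid s)$, which is exactly what the standard i.i.d. rollout would supply at those visits.
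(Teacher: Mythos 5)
Your proposal is correct and follows essentially the same approach as the paper, which also rests on the observation that each consumed table entry $\mathbf{T}^*_{t,s}(i)$ is a fresh independent draw from $\pi_t^*(\cdot\mid s)$ so the joint probability factorizes into $\prod_{n=1}^N \mathrm{Pr}_{\pi^*}[\textsf{tr}_n]$. The paper's proof merely states the single-trajectory computation and asserts that the joint calculation is analogous; your bookkeeping of which table index is consumed at each visit supplies exactly the detail the paper omits.
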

\begin{proof}
Starting from the initial state $s_1^n \sim \rho$, the probability of $\textsf{tr}_n = \{ (s_1,a_1),\cdots,(s_H,a_H) \}$ is,
\begin{equation*}
    \mathrm{Pr} \Big( \textsf{tr}_n = \{ (s_1,a_1),\cdots,(s_H,a_H)\} \Big) = \rho (s_1) \left( \prod\nolimits_{t=1}^{H-1} \pi_t^* (a_t | s_t) P_t (s_{t+1} | s_t,a_t) \right) \pi_H^* (a_H | s_H).
\end{equation*}
This relies on the fact that each action in $\mathbf{T}^*_{t,s} (\cdot)$ is sampled independently from $\pi^*_t (\cdot | s)$. Carrying out the same calculation for the $n$ trajectories jointly (which we avoid to keep notation simple) results in the claim. The important element remains the same: each action in $\mathbf{T}^*_{t,s_t^n} (\cdot)$ is sampled independently from $\pi^*_t (\cdot | s_t^n)$.
\end{proof}

Note that the process in \Cref{lemma:alt-view} generates a dataset having the same distribution as if the trajectories were generated by independently rolling out $\pi^*$ for $N$ episodes. Without loss of generality we may therefore assume that the expert generates $D$ this way. We adopt this alternate view to enable the coupling between the expert's and learner's policies.

\begin{remark}
We emphasize that the infinite table $\mathbf{T}^*$ is not known to the learner and is only used by the expert to generate the dataset $D$. However, by virtue of observing the trajectories in $D$ the learner is revealed some part of $\mathbf{T}^*$. In particular at the state $s$ and time $t$, the first $|N_{t,s}|$ actions in $\mathbf{T}_{t,s}^*$ are revealed to the learner.
\end{remark}

Recall that $\pi^{\mathrm{first}}_t (\cdot | s)$ defined in \cref{eq:def:pi-first} deterministically plays the action in the first trajectory in $D$ that visits a state $s$ at time $t$, if available, and otherwise plays the uniform distribution $\mathrm{Unif} (\mathcal{A})$.

Using the alternate view of generating $D$ in \Cref{lemma:alt-view}, this policy can be equivalently defined as one which plays the action at the first position in the table $\mathbf{T}^*$ if observed, and otherwise plays the uniform distribution.
\begin{equation}
    \pi^{\mathrm{first}}_t (\cdot | s) =
    \begin{cases}
    \delta_{\mathbf{T}^*_{t,s} (1)} \qquad & \text{if } |N_{t,s}| > 0, \\
    \mathrm{Unif} (\mathcal{A}), & \text{otherwise}.
    \end{cases}
\end{equation}
We now define the oracle policy $\pi^{\mathrm{orc-first}}$, which plays the first action at each time $t \in [H]$ at each state $s \in \mathcal{S}$. That is,
\begin{equation}
    \pi^{\mathrm{orc-first}}_t (\cdot | s) = \delta_{\mathbf{T}^*_{t,s} (1)}.
\end{equation}
With this definition, we first identify that the expected value of $\pi^{\mathrm{orc-first}}$ equals $J(\pi^*)$.

\begin{lemma} \label[lemma]{lemma:expert=orcfirst}
$J (\pi^*) = \mathbb{E} \left[ J (\pi^{\mathrm{orc-first}} ) \right]$.
\end{lemma}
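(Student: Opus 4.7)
The plan is to show that rolling out $\pi^{\mathrm{orc-first}}$, jointly over the randomness of the expert table $\mathbf{T}^*$ and the MDP, induces exactly the same distribution over trajectories as rolling out $\pi^*$. Once this is established, the equality of expected cumulative rewards, and hence of values, follows immediately from the definition in \cref{eq:policy-value}.

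To prove distributional equality of trajectories, I would condition on $\mathbf{T}^*$, under which $\pi^{\mathrm{orc-first}}$ is a fully deterministic policy with $\pi^{\mathrm{orc-first}}_t(\cdot|s) = \delta_{\mathbf{T}^*_{t,s}(1)}$, and then integrate back over $\mathbf{T}^*$. Concretely, I would establish by induction on $t$ the identity
\begin{equation*}
\mathrm{Pr}_{\pi^{\mathrm{orc-first}}}\bigl[(s_1,a_1),\ldots,(s_t,a_t)\bigr] \;=\; \rho(s_1)\prod_{\tau=1}^{t-1} \pi^*_\tau(a_\tau|s_\tau) P_\tau(s_{\tau+1}|s_\tau,a_\tau) \cdot \pi^*_t(a_t|s_t),
\end{equation*}
where the probability on the left is taken jointly over $\mathbf{T}^*$ and the MDP transitions. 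The base case is immediate since $s_1 \sim \rho$ and $a_1 = \mathbf{T}^*_{1,s_1}(1) \sim \pi^*_1(\cdot|s_1)$. For the induction step, the only nontrivial point is that $\mathbf{T}^*_{t,s_t}(1)$, conditional on the past $(s_1,a_1,\ldots,s_t)$, is distributed as $\pi^*_t(\cdot|s_t)$.

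The main obstacle, and really the only substantive step, is verifying this conditional independence carefully. The history $(s_1,a_1,\ldots,s_{t-1},a_{t-1},s_t)$ is a measurable function of $\rho$-randomness, transition-kernel randomness, and the entries $\{\mathbf{T}^*_{\tau,s}(1) : \tau < t,\ s \in \mathcal{S}\}$ of the expert table at earlier times. By \Cref{def:policy-table}, the family $\{\mathbf{T}^*_{t,s}(1)\}_{s \in \mathcal{S}}$ at time $t$ is independent of all entries at times $\tau \ne t$ and, state-by-state, mutually independent. Thus the value $\mathbf{T}^*_{t,s_t}(1)$ is, after conditioning on $s_t = s$, independent of the full history and distributed as $\pi^*_t(\cdot|s)$. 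Combining this with the inductive hypothesis and the Markov transition factor $P_{t-1}(s_t|s_{t-1},a_{t-1})$ yields the next step of the induction.

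Once the trajectory distributions are shown to coincide, we apply expectation of the cumulative reward $\sum_{t=1}^H \mathbf{r}_t(s_t,a_t)$ under each and conclude $\mathbb{E}[J(\pi^{\mathrm{orc-first}})] = J(\pi^*)$, which is the desired statement. The argument is essentially the standard ``fix the randomness on a table'' construction of \cite{billingsley1961} specialized to the finite-horizon MDP, so no additional machinery beyond the independence structure of $\mathbf{T}^*$ is required.
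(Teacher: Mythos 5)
Your proposal is correct and follows essentially the same route as the paper: both show that averaging over the expert table $\mathbf{T}^*$ makes the trajectory distribution of $\pi^{\mathrm{orc-first}}$ coincide with that of $\pi^*$, using the fact that the entries $\mathbf{T}^*_{t,s}(1)$ at distinct times are mutually independent and each distributed as $\pi^*_t(\cdot|s)$. The paper simply writes the probability of a fixed full trajectory as a product of indicators and takes expectation in one step, whereas you unroll the same independence argument as an induction over $t$; the content is identical.
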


Plugging this into \cref{eq:emp-to-first}, we see that,
\begin{equation} \label{eq:exp-orcfirst}
    J (\pi^*) - \mathbb{E} [J (\widehat{\pi} (D))] = \mathbb{E} \left[ J(\pi^{\mathrm{orc-first}}) - J(\pi^{\mathrm{first}}) \right].
\end{equation}

Observe that $\pi^{\mathrm{orc-first}}$ and $\pi^{\mathrm{first}}$ are in fact identical on all the states that were visited at least once in the expert dataset (i.e. having $|N_{t,s}| > 0$). Therefore, as long as the state $s$ visited at each time $t$ in an episode has $|N_{t,s}| > 0$, both policies collect the same cumulative reward.

\begin{lemma} \label[lemma]{lemma:matchequal}
Fix the expert table $\mathbf{T}^*$ and the expert dataset $D$. Define $\mathcal{E}^c$ as the ``good'' event that the trajectory under consideration only visits a state $s_t$ at each time $t \in [H]$ such that $|N_{t,s_t}| > 0$, i.e. states that have been observed in the expert dataset $D$ at time $t$. Then,
\begin{equation}
    \mathbb{E}_{\pi^{\mathrm{first}}} \left[ \left( \sum\nolimits_{t = 1}^H \mathbf{r}_t (s_t,a_t) \right) \mathbbm{1} \left( \mathcal{E}^c \right) \right] = \mathbb{E}_{\pi^{\mathrm{orc-first}}} \left[ \left( \sum\nolimits_{t = 1}^H \mathbf{r}_t (s_t,a_t) \right) \mathbbm{1} \left( \mathcal{E}^c \right) \right].
\end{equation}
\end{lemma}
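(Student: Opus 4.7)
The plan is to show that for every trajectory $\tau$ satisfying $\mathcal{E}^c$, the two policies $\pi^{\mathrm{first}}$ and $\pi^{\mathrm{orc-first}}$ assign it identical probability mass. The indicator $\mathbbm{1}(\mathcal{E}^c)$ then restricts both expectations to the same set of trajectories, and the integrand is policy-independent given $\tau$, so the two expected values coincide term by term.

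First I would verify pointwise agreement of the two policies at every state-time pair $(t,s)$ with $|N_{t,s}| > 0$. By the generative construction in \Cref{lemma:alt-view}, the earliest trajectory in $D$ that visits $s$ at time $t$ — call its index $n = \min N_{t,s}$ — consumes the first still-unused entry of the column $\mathbf{T}^*_{t,s}(\cdot)$; since by minimality of $n$ no previous trajectory has touched this column, the entry consumed is $\mathbf{T}^*_{t,s}(1)$. Hence $a_t^n = \mathbf{T}^*_{t,s}(1)$, so by definition $\pi^{\mathrm{first}}_t(\cdot|s) = \delta_{\mathbf{T}^*_{t,s}(1)} = \pi^{\mathrm{orc-first}}_t(\cdot|s)$ whenever $|N_{t,s}| > 0$.

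Next I would unfold both sides as sums over complete trajectories $\tau = ((s_1,a_1),\ldots,(s_H,a_H))$:
\begin{equation*}
\mathbb{E}_{\pi}\!\left[ \Big(\sum_{t=1}^H \mathbf{r}_t(s_t,a_t)\Big) \mathbbm{1}(\mathcal{E}^c) \right] = \sum_{\tau} \mathbbm{1}(\tau \in \mathcal{E}^c) \, \mathrm{Pr}_\pi(\tau) \sum_{t=1}^H \mathbf{r}_t(s_t,a_t),
\end{equation*}
where $\mathrm{Pr}_\pi(\tau) = \rho(s_1) \prod_{t=1}^{H-1} \pi_t(a_t|s_t) P_t(s_{t+1}|s_t,a_t) \cdot \pi_H(a_H|s_H)$. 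For any $\tau \in \mathcal{E}^c$, every visited pair $(t, s_t)$ satisfies $|N_{t,s_t}| > 0$, so the pointwise identity from the previous step applies at each decision point along $\tau$. Since $\rho$ and the transitions $P_t$ do not depend on the policy, this yields $\mathrm{Pr}_{\pi^{\mathrm{first}}}(\tau) = \mathrm{Pr}_{\pi^{\mathrm{orc-first}}}(\tau)$ for every $\tau \in \mathcal{E}^c$, and summing gives the claim.

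There is essentially no obstacle here — the lemma is almost a tautology once the table-based construction is made explicit. The only subtlety worth flagging is that the behavior of $\pi^{\mathrm{first}}$ on states with $|N_{t,s}| = 0$ (where it plays $\mathrm{Unif}(\mathcal{A})$ rather than $\delta_{\mathbf{T}^*_{t,s}(1)}$) is irrelevant: trajectories in $\mathcal{E}^c$ by definition avoid such states, so the unmatched uniform factor never appears in $\mathrm{Pr}_{\pi^{\mathrm{first}}}(\tau)$ for any $\tau$ contributing to the sum.
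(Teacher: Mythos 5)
Your proposal is correct and follows essentially the same route as the paper: both arguments rest on the observation that $\pi^{\mathrm{first}}$ and $\pi^{\mathrm{orc-first}}$ agree pointwise at every $(t,s)$ with $|N_{t,s}|>0$ (since the first trajectory through such a state consumes $\mathbf{T}^*_{t,s}(1)$), and that $\mathcal{E}^c$ restricts both expectations to trajectories visiting only such states. Your version merely spells out the trajectory-by-trajectory factorization that the paper leaves implicit.
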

\begin{proof}
Both policies are identical on the states such that $|N_{t,s}| > 0$. The event $\mathcal{E}^c$ guarantees that only such states are visited in a trajectory. Therefore both expectations are equal.
\end{proof}

With these preliminaries, we have most of the ingredients to prove the bound on the expected suboptimality of \textsc{Mimic-Emp}. To this end, from \cref{eq:exp-orcfirst} we see that,
\begin{equation}
    J (\pi^*) - J (\widehat{\pi} (D)) = \mathbb{E}_{\pi^{\mathrm{orc-first}}} \left[ \sum\nolimits_{t = 1}^H \mathbf{r}_t (s_t,a_t) \right] - \mathbb{E}_{\pi^{\mathrm{first}}} \left[ \sum\nolimits_{t = 1}^H \mathbf{r}_t (s_t,a_t) \right].
\end{equation}
Subsequently invoking \Cref{lemma:matchequal}, we see that
\begin{align}
    J (\pi^*) - J (\widehat{\pi} (D)) &= \mathbb{E}_{\pi^{\mathrm{orc-first}}} \left[ \left( \sum\nolimits_{t = 1}^H \mathbf{r}_t (s_t,a_t) \right) \mathbbm{1} \left( \mathcal{E} \right) \right] - \mathbb{E}_{\pi^{\mathrm{first}}} \left[ \left( \sum\nolimits_{t = 1}^H \mathbf{r}_t (s_t,a_t) \right) \mathbbm{1} \left( \mathcal{E} \right) \right], \nonumber\\
    &\le \mathbb{E}_{\pi^{\mathrm{orc-first}}} \left[ \left( \sum\nolimits_{t = 1}^H \mathbf{r}_t (s_t,a_t) \right) \mathbbm{1} \left( \mathcal{E} \right) \right], \\
    &\le H \mathrm{Pr}_{\pi^{\mathrm{orc-first}}} \left[ \mathcal{E} \right],
\end{align}
where in the last inequality we use the fact that pointwise $0 \le \mathbf{r}_t \le 1$ for all $t \in [H]$. Taking expectation gives the inequality,
\begin{equation}
    J (\pi^*) - \mathbb{E} [J (\widehat{\pi} (D))] \le H \mathbb{E} [\mathrm{Pr}_{\pi^{\mathrm{orc-first}}} \left[ \mathcal{E} \right]].
\end{equation}
In \Cref{lemma:nondet:error-prob-small} we show that $\mathbb{E} [\mathrm{Pr}_{\pi^{\mathrm{orc-first}}} \left[ \mathcal{E} \right]]$ is upper bounded by $|\mathcal{S}| H \ln (N) / N$, which completes the proof.

\begin{lemma} \label[lemma]{lemma:nondet:error-prob-small}
$\mathbb{E} \left[ \mathrm{Pr}_{\pi^{\mathrm{orc-first}}} [ \mathcal{E} ] \right] \le \frac{|\mathcal{S}| H \ln (N)}{N}$.
\end{lemma}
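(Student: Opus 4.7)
The plan is to reduce the probability of $\mathcal{E}$ via a union bound over the $H$ time steps and then apply the elementary inequality $\mathrm{Pr}[A \cap B] \le \min\{\mathrm{Pr}[A], \mathrm{Pr}[B]\}$, which has the virtue of circumventing the delicate coupling between the $\pi^{\mathrm{orc-first}}$ rollout and the dataset $D$ that is induced through the shared expert table $\mathbf{T}^*$. Writing $s_t^{\mathrm{test}}$ for the state visited at time $t$ by the $\pi^{\mathrm{orc-first}}$ rollout, a union bound gives $\mathbbm{1}[\mathcal{E}] \le \sum_{t=1}^{H} \mathbbm{1}[s_t^{\mathrm{test}} \notin \mathcal{S}_t(D)]$, so after taking expectations and partitioning by the visited state,
\[
\mathbb{E}\bigl[\mathrm{Pr}_{\pi^{\mathrm{orc-first}}}[\mathcal{E}]\bigr] \le \sum_{t=1}^{H} \sum_{s \in \mathcal{S}} \mathrm{Pr}\bigl[s_t^{\mathrm{test}} = s,\, s \notin \mathcal{S}_t(D)\bigr].
\]

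Next I would compute the two one-dimensional marginals in each summand. Since each $\mathbf{T}^*_{t,s}(1)$ is drawn i.i.d.\ from $\pi^*_t(\cdot|s)$ (the observation already exploited in Lemma~\ref{lemma:expert=orcfirst}), the rollout of $\pi^{\mathrm{orc-first}}$ is marginally distributed exactly as a rollout of the expert, hence $\mathrm{Pr}[s_t^{\mathrm{test}} = s] = f^t_{\pi^*}(s)$. Since the $N$ trajectories in $D$ are themselves i.i.d.\ expert rollouts, each one independently visits $s$ at time $t$ with probability $f^t_{\pi^*}(s)$, giving $\mathrm{Pr}[s \notin \mathcal{S}_t(D)] = (1 - f^t_{\pi^*}(s))^N$. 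Applying $\mathrm{Pr}[A \cap B] \le \min\{\mathrm{Pr}[A], \mathrm{Pr}[B]\}$ to every summand yields
\[
\mathbb{E}\bigl[\mathrm{Pr}_{\pi^{\mathrm{orc-first}}}[\mathcal{E}]\bigr] \le \sum_{t=1}^{H} \sum_{s \in \mathcal{S}} \min\bigl\{f^t_{\pi^*}(s),\, (1 - f^t_{\pi^*}(s))^N\bigr\}.
\]

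The remaining ingredient is the elementary inequality $\min\{p,(1-p)^N\} \le \ln(N)/N$ for all $p \in [0,1]$. If $p \le \ln(N)/N$ the first argument already gives the bound; otherwise $pN > \ln N$, so $(1-p)^N \le e^{-pN} < 1/N \le \ln(N)/N$. Substituting this pointwise bound and summing the $H|\mathcal{S}|$ identical terms produces exactly $|\mathcal{S}| H \ln(N)/N$. The only real obstacle one might worry about in this argument is the subtle correlation between $s_t^{\mathrm{test}}$ and the event $\{s \notin \mathcal{S}_t(D)\}$ induced by the fact that the test rollout and the dataset draw from the same expert table; the min-of-marginals step bypasses this entirely, at the cost of a single $\log N$ factor compared to the $|\mathcal{S}|H/(eN)$ rate one would get using the tighter product $p(1-p)^N$ if the two events were independent.
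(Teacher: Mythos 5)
Your proposal is correct and follows essentially the same route as the paper's proof: a union bound over $(s,t)$ pairs, bounding the joint probability of ``visit $s$ at time $t$'' and ``$s$ unobserved at time $t$ in $D$'' by the minimum of the two marginals (computed via the fact that the oracle rollout is marginally an expert rollout), and finishing with the elementary bound $\min\{x,(1-x)^N\}\le \log(N)/N$. The only (shared) blemish is that the final elementary inequality as argued needs $\log N\ge 1$, a caveat also present in the paper's Lemma~\ref{lemma:min2}.
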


Although the oracle policy $\pi^{\mathrm{orc-first}}$ and the dataset $D$ are coupled, the key intuition behind showing that the event $\mathcal{E}$ occurs with low probability is that: it is not possible that, in expectation $\pi^{\mathrm{orc-first}}$ visits some state $s$ with high probability, but the same state $s$ visited in the dataset $D$ with low probability. This is by virtue of the fact that in expectation $\pi^{\mathrm{orc-first}}$ matches $\pi^*$ which is the policy that generates $D$.

\subsection{Known-transition setting under deterministic expert policy}

In this section, we describe the proof of \Cref{theorem:det:NsimInf:UB.p1} which upper bounds the expected suboptimality of \textsc{Mimic-MD} (\Cref{alg:det:NsimInf}).

Recall that \textsc{Mimic-MD}, true to its name, mimics the expert on the states observed in half the dataset $D_1$. By virtue of the learner mimicking the expert on states visited in $D_1$, we show that the learner is suboptimal only upon visiting a state unobserved in $D_1$ at some point in an episode.

\begin{lemma} \label[lemma]{lemma:no-error-in-dataset}
Define $\mathcal{E}_{D_1}^{\le t} = \left\{ \exists \tau < t :\ s_t \not\in \mathcal{S}_t (D_1) \right\}$ as the event that the policy under consideration visits some state at time $t$ that no trajectory in $D_1$ has visited at time $t$. Fixing the expert datset $D$, for any policy $\widehat{\pi} \in \Pi_{\mathrm{mimic}} (D_1)$,
\begin{equation}
    J(\pi^*) - J (\widehat{\pi} (D)) = \sum\nolimits_{t=1}^H \Big\{ \mathbb{E}_{\pi^*} \left[ \mathbbm{1} (\mathcal{E}_{D_1}^{\le t}) \mathbf{r}_t (s_t,a_t) \right] - \mathbb{E}_{\widehat{\pi} (D)} \left[ \mathbbm{1} (\mathcal{E}_{D_1}^{\le t}) \mathbf{r}_t (s_t,a_t) \right] \Big\}.
\end{equation}
\end{lemma}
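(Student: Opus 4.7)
The plan is to decompose each per-step expected reward into contributions from the ``good'' event (in which the trajectory stays within the observed states of $D_1$ up to time $t$) and its complement $\mathcal{E}_{D_1}^{\le t}$, and then show that the good-event contributions cancel exactly between $\pi^*$ and $\widehat{\pi}$.

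The key observation is a coupling argument at the level of trajectory distributions. Let $\mathcal{F}_{D_1}^{\le t} = (\mathcal{E}_{D_1}^{\le t})^c$, the event that $s_\tau \in \mathcal{S}_\tau(D_1)$ for every $\tau \le t$. Because $\widehat{\pi} \in \Pi_{\mathrm{mimic}}(D_1)$, we have $\widehat{\pi}_\tau(\cdot \mid s) = \delta_{\pi^*_\tau(s)}$ at every $s \in \mathcal{S}_\tau(D_1)$. First I would argue by induction on $\tau = 1, \ldots, t$ that, conditional on the trajectory prefix lying in $\mathcal{F}_{D_1}^{\le \tau}$, the joint laws of $(s_1,a_1,\ldots,s_\tau,a_\tau)$ induced by rolling out $\pi^*$ and by rolling out $\widehat{\pi}$ coincide: the initial distribution $\rho$ and the transition kernels $P_\tau$ are shared, and at each visited state $s_\tau \in \mathcal{S}_\tau(D_1)$ the two policies produce the same (deterministic) action. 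Hence for any bounded measurable function $g$ of $(s_t,a_t)$, and in particular for $g = \mathbf{r}_t$,
\begin{equation*}
    \mathbb{E}_{\pi^*}\!\left[\mathbbm{1}\big(\mathcal{F}_{D_1}^{\le t}\big)\, \mathbf{r}_t(s_t,a_t)\right] \;=\; \mathbb{E}_{\widehat{\pi}(D)}\!\left[\mathbbm{1}\big(\mathcal{F}_{D_1}^{\le t}\big)\, \mathbf{r}_t(s_t,a_t)\right].
\end{equation*}

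Next I would simply split and subtract. Writing $\mathbb{E}_\pi[\mathbf{r}_t(s_t,a_t)] = \mathbb{E}_\pi[\mathbbm{1}(\mathcal{F}_{D_1}^{\le t})\mathbf{r}_t] + \mathbb{E}_\pi[\mathbbm{1}(\mathcal{E}_{D_1}^{\le t})\mathbf{r}_t]$ for each of $\pi \in \{\pi^*, \widehat{\pi}(D)\}$ and summing over $t$,
\begin{equation*}
    J(\pi^*) - J(\widehat{\pi}(D)) \;=\; \sum_{t=1}^H \Big\{\mathbb{E}_{\pi^*}[\mathbf{r}_t(s_t,a_t)] - \mathbb{E}_{\widehat{\pi}(D)}[\mathbf{r}_t(s_t,a_t)]\Big\}
\end{equation*}
reduces, after cancelling the $\mathcal{F}_{D_1}^{\le t}$ terms by the identity above, to the claimed expression.

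There is no genuinely hard step here; the only subtle point worth writing carefully is the inductive coupling, where one must verify that the event $\mathcal{F}_{D_1}^{\le t}$ is measurable with respect to the prefix $(s_1,a_1,\ldots,s_t)$ (so that the indicator and the prefix law interact cleanly), and that the mimic condition $\widehat{\pi} \in \Pi_{\mathrm{mimic}}(D_1)$ together with determinism of $\pi^*$ really does force action-level agreement at every state in $\mathcal{S}_\tau(D_1)$. Once that is in place, the two trajectory measures restricted to $\mathcal{F}_{D_1}^{\le t}$ are literally equal, and the rest is a one-line subtraction.
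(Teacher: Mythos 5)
Your proposal is correct and follows essentially the same route as the paper: both split each per-step reward across $\mathcal{E}_{D_1}^{\le t}$ and its complement, and cancel the complement terms by observing that $\widehat{\pi} \in \Pi_{\mathrm{mimic}}(D_1)$ agrees with $\pi^*$ on every state in $\mathcal{S}_\tau(D_1)$, so the two trajectory laws restricted to the good event coincide. Your explicit remark about the prefix-measurability of the good event is exactly the point the paper's chain of conditional expectations is implicitly using.
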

Simplifying this result further using the fact that the reward function is bounded in $[0,1]$ results in \cref{eq:NsimInf:motiv}, recall which we used as a basis for motivating the design of \textsc{Mimic-MD} in \Cref{section:known-transition}. In particular, any policy $\widehat{\pi}$ that exactly mimics the expert on states observed in $D_1$ has suboptimality bounded by,
\begin{equation} \nonumber
    J(\pi^*) - J (\widehat{\pi}) \le \sum_{s \in \mathcal{S}} \sum_{a \in \mathcal{A}} \sum\nolimits_{t=1}^H \left| \mathrm{Pr}_{\pi^*} \Big[ \mathcal{E}_{D_1}^{\le t} ,s_t=s,a_t=a \Big] - \mathrm{Pr}_{\widehat{\pi}} \Big[ \mathcal{E}_{D_1}^{\le t}, s_t=s,a_t=a \Big] \right|.
\end{equation}
The minimum distance functional considered in \textsc{Mimic-MD} simply replaces the population term $\mathrm{Pr}_{\pi^*} [ \mathcal{E}_{D_1}^{\le t} ,s_t=s,a_t=a ]$ by its empirical estimate computed using the dataset $D_2$. We follow the standard analysis of minimum distance function estimators using the triangle inequality, which in effect reduces the analysis to a question of convergence of the empirical estimate of $\mathrm{Pr}_{\pi^*} [ \mathcal{E}_{D_1}^{\le t} ,s_t=\cdot,a_t=\cdot ]$ to the population in $\ell_1$ distance.

Before stating the formal lemma, recall that
\begin{equation}
    \mathcal{T}^{D_1}_t ( s,a) \triangleq \Big\{ \{ (s_1,a_1),\cdots,(s_H,a_H) \} \Big| s_t=s, a_t=a,\ \exists \tau < t : s_\tau \not\in \mathcal{S}_\tau (D_1) \Big\}.
\end{equation}
is defined as the set of trajectories that (i) visits the state $s$ at time $t$, (ii) plays the action $a$ at this time, and (iii) at some time $\tau \le t$ visits a state unobserved in $D_1$.

\begin{lemma} \label[lemma]{lemma:reduc}
Consider any policy $\widehat{\pi}^\varepsilon$ which solves the optimization problem in \ref{eq:opt} to an additive error of $\varepsilon$. Fixing the expert dataset $D$,
\begin{align}\nonumber
    J(\pi^*) - J (\widehat{\pi}^\varepsilon (D)) \le 2\sum_{s \in \mathcal{S}} \sum_{a \in \mathcal{A}} \sum_{t=1}^H \left| \mathrm{Pr}_{\pi^*} \Big[ \mathcal{E}_{D_1}^{\le t}, s_t=s,a_t=a \Big] - \frac{\sum_{\textsf{tr} \in D_2} \mathbbm{1} ( \textsf{tr} \in \mathcal{T}^{D_1}_t ( s,a) )}{|D_2|} \right| + \varepsilon.
\end{align}
\end{lemma}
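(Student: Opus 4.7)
The plan is to follow the classical minimum-distance functional analysis, using the triangle inequality in a way that exploits the fact that the expert's policy itself is a feasible point of \ref{eq:opt}. First, I would start from \Cref{lemma:no-error-in-dataset}, which already provides the identity
\[
J(\pi^*) - J(\widehat{\pi}^\varepsilon(D)) = \sum_{t=1}^H \Big\{ \mathbb{E}_{\pi^*}\left[\mathbbm{1}(\mathcal{E}_{D_1}^{\le t}) \mathbf{r}_t(s_t,a_t)\right] - \mathbb{E}_{\widehat{\pi}^\varepsilon(D)}\left[\mathbbm{1}(\mathcal{E}_{D_1}^{\le t}) \mathbf{r}_t(s_t,a_t)\right] \Big\}.
\]
Expanding the expectations by conditioning on $(s_t, a_t) = (s,a)$ and using $\mathbf{r}_t(s,a) \in [0,1]$, we can bound the right-hand side by
\[
\sum_{t=1}^H \sum_{s,a} \left| \Pr_{\pi^*}[\mathcal{E}_{D_1}^{\le t}, s_t=s, a_t=a] - \Pr_{\widehat{\pi}^\varepsilon}[\mathcal{E}_{D_1}^{\le t}, s_t=s, a_t=a] \right|.
\]
Observing that the event $\{\mathcal{E}_{D_1}^{\le t}, s_t=s, a_t=a\}$ is exactly the event that the trajectory lies in $\mathcal{T}^{D_1}_t(s,a)$, each summand equals $|\Pr_{\pi^*}[\mathcal{T}^{D_1}_t(s,a)] - \Pr_{\widehat{\pi}^\varepsilon}[\mathcal{T}^{D_1}_t(s,a)]|$.

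The second step is to insert the empirical estimator $\hat{P}_t(s,a) \triangleq |D_2|^{-1}\sum_{\textsf{tr} \in D_2} \mathbbm{1}(\textsf{tr} \in \mathcal{T}^{D_1}_t(s,a))$ via the triangle inequality, yielding two terms: a statistical term $\sum_{t,s,a} |\Pr_{\pi^*}[\mathcal{T}^{D_1}_t(s,a)] - \hat{P}_t(s,a)|$ and an optimization term $\sum_{t,s,a} |\Pr_{\widehat{\pi}^\varepsilon}[\mathcal{T}^{D_1}_t(s,a)] - \hat{P}_t(s,a)|$. The key observation is that the deterministic expert $\pi^*$ is itself a member of $\Pi_{\mathrm{mimic}}(D_1)$, hence a feasible point of \ref{eq:opt}. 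Therefore the value of the objective in \ref{eq:opt} at $\pi^*$ is exactly the statistical term. Since $\widehat{\pi}^\varepsilon$ is an $\varepsilon$-approximate minimizer over $\Pi_{\mathrm{mimic}}(D_1)$, its objective value is at most that of $\pi^*$ plus $\varepsilon$, which bounds the optimization term by the statistical term plus $\varepsilon$.

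Combining these two bounds yields
\[
J(\pi^*) - J(\widehat{\pi}^\varepsilon(D)) \le 2 \sum_{t=1}^H \sum_{s,a} \left| \Pr_{\pi^*}[\mathcal{T}^{D_1}_t(s,a)] - \hat{P}_t(s,a) \right| + \varepsilon,
\]
which is precisely the stated bound. There is no substantive obstacle here; the argument is the textbook triangle inequality trick for minimum-distance functionals. The only point that requires care is verifying that $\pi^* \in \Pi_{\mathrm{mimic}}(D_1)$ (immediate, since a deterministic expert trivially agrees with the actions it played in $D_1$) and that $\mathcal{E}_{D_1}^{\le t} \cap \{s_t=s, a_t=a\}$ is measurable as the trajectory event $\mathcal{T}^{D_1}_t(s,a)$, so that the probabilities computed under $\pi$ in \ref{eq:opt} are the same objects appearing after the triangle inequality.
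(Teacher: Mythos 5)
Your proposal is correct and follows essentially the same route as the paper's proof: the identity from \Cref{lemma:no-error-in-dataset}, the bound via $\mathbf{r}_t \in [0,1]$ and the identification of $\{\mathcal{E}_{D_1}^{\le t}, s_t=s, a_t=a\}$ with $\mathcal{T}^{D_1}_t(s,a)$, followed by the triangle inequality and the observation that $\pi^* \in \Pi_{\mathrm{mimic}}(D_1)$ is feasible for \ref{eq:opt} so the $\varepsilon$-approximate optimality of $\widehat{\pi}^\varepsilon$ controls the optimization term. No gaps.
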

We emphasize here that $\frac{\sum_{\textsf{tr} \in D_2} \mathbbm{1} ( \textsf{tr} \in \mathcal{T}^{D_1}_t ( s,a) )}{|D_2|}$ is the empirical estimate of $\mathrm{Pr}_{\widehat{\pi}} \Big[ \mathcal{E}_{D_1}^{\le t}, s_t=s,a_t=a \Big]$ computed using the trajectories in the dataset $D_2$.

\begin{remark} \label[remark]{remark:A2}
Taking $\varepsilon = 0$ in \Cref{lemma:reduc} captures the case where $\widehat{\pi}^\varepsilon$ is the policy returned by \textsc{Mimic-MD}.
\end{remark}

The last remaining ingredient in proving the guarantee on the expected suboptimality of \textsc{Mimic-MD} in \Cref{theorem:det:NsimInf:UB.p1} is to bound the convergence rate of the expectation of the RHS of \Cref{lemma:reduc}. We carry out this analysis roughly in two parts:
\begin{enumerate}
    \item[(i)] fixing the dataset $D_1$, for each $t \in [H]$ we bound the convergence rate of the empirical distribution estimate (computed using $D_2$) of $\mathrm{Pr}_{\widehat{\pi}} [ \mathcal{E}_{D_1}^{\le t}, s_t=s,a_t=a ]$ to the population in $\ell_1$ distance, and
    \item[(ii)] we show that the resulting bound (which is a function of $D_1$) has small expectation and converges to $0$ quickly.
\end{enumerate}
This establishes the following bound on the expected suboptimality incurred by \textsc{Mimic-MD}.

\begin{lemma} \label[lemma]{lemma:NsimInf:expec}
\begin{equation}
    \sum_{s \in \mathcal{S}} \sum_{a \in \mathcal{\mathcal{A}}} \sum_{t=1}^H \mathbb{E} \left[ \left| \mathrm{Pr}_{\pi^*} \Big[ \mathcal{T}^{D_1}_t ( s,a) \Big] - \frac{\sum_{\textsf{tr} \in D_2} \mathbbm{1} ( \textsf{tr} \in \mathcal{T}^{D_1}_t ( s,a) )}{|D_2|} \right| \right] \le \min \left\{ \sqrt{\frac{8 |\mathcal{S}| H^2}{N}} ,\ \frac{8}{3} \frac{|\mathcal{S}| H^{\frac{3}{2}}}{N} \right\}.
\end{equation}
\end{lemma}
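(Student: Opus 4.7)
My plan is to bound the left-hand side in two different ways that together yield the claimed minimum. The argument proceeds in three steps: (i) reduce the effective alphabet from $|\mathcal{S}||\mathcal{A}|$ to $|\mathcal{S}|$ using determinism of $\pi^*$; (ii) control the per-$t$ $\ell_1$ error between empirical and population conditional on $D_1$ via Bernoulli concentration plus Cauchy--Schwarz over $(s,a)$; and (iii) combine across $t$ either by Cauchy--Schwarz (for the $H^{3/2}$ bound) or trivially (for the $\sqrt{H^2}$ bound), then take expectation over $D_1$ and invoke \Cref{lemma:BC-prob-small}.

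First I would condition on $D_1$ and abbreviate $p^{(t)}(s,a) \triangleq \mathrm{Pr}_{\pi^*}[\mathcal{T}^{D_1}_t(s,a)]$, $\widehat{p}^{(t)}(s,a) \triangleq |D_2|^{-1}\sum_{\textsf{tr} \in D_2} \mathbbm{1}(\textsf{tr} \in \mathcal{T}^{D_1}_t(s,a))$, and $m_t \triangleq \sum_{s,a} p^{(t)}(s,a) = \mathrm{Pr}_{\pi^*}[\mathcal{E}_{D_1}^{\le t}]$. The key structural observation is that determinism of $\pi^*$ forces $a_t = \pi^*_t(s_t)$ along every expert rollout, so $p^{(t)}(s,a) = 0$ unless $a = \pi^*_t(s)$; hence the support of $p^{(t)}$ has size at most $|\mathcal{S}|$. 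This is precisely what eliminates the $|\mathcal{A}|$ factor from the final bound. Conditional on $D_1$, the trajectories in $D_2$ are i.i.d.\ rollouts of $\pi^*$, so $|D_2|\widehat{p}^{(t)}(s,a) \sim \mathrm{Bin}(|D_2|, p^{(t)}(s,a))$; applying $\mathbb{E}|\widehat{p} - p| \le \sqrt{p(1-p)/|D_2|} \le \sqrt{p/|D_2|}$ and Cauchy--Schwarz on the $\le |\mathcal{S}|$-sized support yields
\begin{equation*}
\sum_{s,a} \mathbb{E}_{D_2}\bigl[\,|\widehat{p}^{(t)}(s,a) - p^{(t)}(s,a)|\,\bigr] \;\le\; \sum_{s,a}\sqrt{p^{(t)}(s,a)/|D_2|} \;\le\; \sqrt{|\mathcal{S}|\,m_t/|D_2|}.
\end{equation*}

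Summing over $t$ in two ways produces the two estimates. For the $|\mathcal{S}|H^{3/2}/N$ bound, apply Cauchy--Schwarz across time: $\sum_t \sqrt{m_t} \le \sqrt{H \sum_t m_t}$, giving conditional bound $\sqrt{|\mathcal{S}|H \sum_t m_t / |D_2|}$; then take expectation over $D_1$ using Jensen, and use the union bound $m_t \le \sum_{\tau \le t} \mathbbm{1}[s_\tau \not\in \mathcal{S}_\tau(D_1)]$ together with \Cref{lemma:BC-prob-small} to obtain
\begin{equation*}
\mathbb{E}\!\left[\,\textstyle\sum_t m_t\,\right] \;\le\; \textstyle\sum_\tau (H-\tau+1)\,\mathbb{E}\!\left[\mathrm{Pr}_{\pi^*}[s_\tau \not\in \mathcal{S}_\tau(D_1)]\right] \;\le\; H \cdot \tfrac{4|\mathcal{S}|H}{9|D_1|}.
\end{equation*}
Plugging in $|D_1| = |D_2| = N/2$ and simplifying gives the $\lesssim |\mathcal{S}|H^{3/2}/N$ rate. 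For the $\sqrt{|\mathcal{S}|H^2/N}$ bound, simply use $m_t \le 1$ inside the per-$t$ estimate to obtain $\sum_t \sqrt{|\mathcal{S}|/|D_2|} = H\sqrt{|\mathcal{S}|/|D_2|} = O(\sqrt{|\mathcal{S}|H^2/N})$. Taking the minimum of the two bounds yields the lemma.

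The main obstacle is the alphabet-reduction observation in Step (i): without exploiting determinism of $\pi^*$, the Cauchy--Schwarz over $(s,a)$ would introduce a stray $\sqrt{|\mathcal{A}|}$ factor and defeat the action-free guarantees the paper is after. The remaining ingredients---a Jensen bound on the mean absolute deviation of a binomial, two applications of Cauchy--Schwarz (one over $(s,a)$, one over $t$), and a direct application of \Cref{lemma:BC-prob-small}---are routine.
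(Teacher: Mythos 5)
Your proposal is correct and follows essentially the same route as the paper's proof: bound the mean absolute deviation of the binomial empirical frequency by its standard deviation via Jensen, use determinism of $\pi^*$ to restrict the support to at most $|\mathcal{S}|$ pairs, apply Cauchy--Schwarz over states to surface $\mathrm{Pr}_{\pi^*}[\mathcal{E}_{D_1}^{\le t}]$, and then either bound that probability by $1$ or by $\lesssim |\mathcal{S}|H/N$ via a union bound over $\tau$ and \Cref{lemma:BC-prob-small}. The only cosmetic difference is that you apply an extra Cauchy--Schwarz over $t$ and bound $\mathbb{E}[\sum_t m_t]$ in one step, whereas the paper bounds $\mathbb{E}[m_t]$ uniformly for each $t$ and sums the square roots; both give the same constants up to a factor within the stated slack.
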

\noindent In conjunction with \Cref{lemma:reduc} this completes the proof of \Cref{theorem:det:NsimInf:UB.p1} (by plugging in $\varepsilon = 0$ and noting \Cref{remark:A2}) and also \Cref{theorem:det:NsimInf:UB.p3}.

To show the high probability guarantee on \textsc{Mimic-MD} in \Cref{theorem:det:NsimInf:UB.p2}, the key approach is similar. However, we instead
\begin{enumerate}
    \item[(i)] fix $D_1$ and use sub-Gaussian concentration \cite{Boucheron2013ConcentrationI} to establish high probability deviation bounds on the empirical estimate of $\mathrm{Pr}_{\widehat{\pi}} [ \mathcal{E}_{D_1}^{\le t}, s_t=s,a_t=a ]$, and
    \item[(ii)] use missing mass concentration (\Cref{theorem:missingmass-conc}) to show that the resulting deviations (which are a function of $D_1$) concentrate.
\end{enumerate}

\begin{lemma} \label[lemma]{lemma:NsimInf:highprob}
Fix $\delta \in (0, \min \{ 1,H/5\})$. Then, with probability $\ge 1 - \delta$,
\begin{align}
    &\sum_{s \in \mathcal{S}} \sum_{a \in \mathcal{\mathcal{A}}} \sum_{t=1}^H \left| \mathrm{Pr}_{\pi^*} \Big[ \mathcal{T}^{D_1}_t ( s,a) \Big] - \frac{\sum_{\textsf{tr} \in D_2} \mathbbm{1} ( \textsf{tr} \in \mathcal{T}^{D_1}_t ( s,a) )}{|D_2|} \right| \nonumber\\
    &\lesssim \frac{|\mathcal{S}| H^{3/2}}{N} \left( 1 + \frac{3\log (2|\mathcal{S}| H/\delta)}{\sqrt{|\mathcal{S}|}} \right)^{1/2} \sqrt{\log (2|\mathcal{S}| H/\delta)}.
\end{align}
\end{lemma}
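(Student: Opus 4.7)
The plan is to follow the two-step template hinted at in the paper: first fix $D_1$ and concentrate the empirical estimates computed from $D_2$ around their conditional-population means using a Bernstein-style argument, then separately concentrate the $D_1$-dependent ``effective mass'' via Theorem~\ref{theorem:missingmass-conc}.

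First, I would fix $D_1$ and write $\widehat{p}_{s,a,t} := |D_2|^{-1}\sum_{\textsf{tr}\in D_2}\mathbbm{1}(\textsf{tr}\in\mathcal{T}^{D_1}_t(s,a))$ as the empirical mean of $|D_2|$ i.i.d.\ Bernoulli$(p_{s,a,t})$ variables, where $p_{s,a,t}:=\mathrm{Pr}_{\pi^*}[\mathcal{T}^{D_1}_t(s,a)]$. Since the expert is deterministic, both $p_{s,a,t}$ and $\widehat{p}_{s,a,t}$ vanish unless $a=\pi^*_t(s)$, so the outer sum has at most $|\mathcal{S}|H$ nontrivial terms. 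Applying Bernstein's inequality to each term and union-bounding across these $|\mathcal{S}|H$ triples yields, with probability $\ge 1-\delta/2$ conditional on $D_1$,
\[
|\widehat{p}_{s,a,t} - p_{s,a,t}| \lesssim \sqrt{\frac{p_{s,a,t}\log(2|\mathcal{S}|H/\delta)}{|D_2|}} + \frac{\log(2|\mathcal{S}|H/\delta)}{|D_2|}.
\]

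Second, I would sum the Bernstein deviations and apply Cauchy--Schwarz to the fluctuation term: $\sum_{(s,a,t)}\sqrt{p_{s,a,t}} \le \sqrt{|\mathcal{S}|H \cdot \sum_{(s,a,t)}p_{s,a,t}}$. The key structural observation is that for each fixed $t$ the events $\{\mathcal{T}^{D_1}_t(s,a)\}_{(s,a)}$ partition $\mathcal{E}^{\le t}_{D_1}$ by the $(s,a)$ visited at time $t$, so $\sum_{(s,a)}p_{s,a,t} = \mathrm{Pr}_{\pi^*}[\mathcal{E}^{\le t}_{D_1}] \le \sum_{\tau\le t}\mathfrak{m}_0(f^\tau_{\pi^*},D_1)$ by a union bound over time. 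Consequently $\sum_{t}\sum_{(s,a)} p_{s,a,t} \le H\cdot M(D_1)$, where $M(D_1):=\sum_{\tau=1}^H \mathfrak{m}_0(f^\tau_{\pi^*},D_1)$. This is the ``effective mass shrinkage'' mentioned in the remark following Algorithm~\ref{alg:det:NsimInf}.

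Third, I would control $M(D_1)$ over the randomness of $D_1$. Applying Theorem~\ref{theorem:missingmass-conc} at each $\tau\in[H]$ with confidence $\delta/(2H)$ and union-bounding yields, with probability $\ge 1-\delta/2$,
\[
M(D_1)\ \le\ \mathbb{E}[M(D_1)]\ +\ \frac{3H\sqrt{|\mathcal{S}|}\log(2|\mathcal{S}|H/\delta)}{|D_1|}\ \lesssim\ \frac{|\mathcal{S}|H}{|D_1|}\left(1+\frac{3\log(2|\mathcal{S}|H/\delta)}{\sqrt{|\mathcal{S}|}}\right),
\]
where the mean bound $\mathbb{E}[M(D_1)]\lesssim|\mathcal{S}|H/|D_1|$ is exactly Lemma~\ref{lemma:BC-prob-small}. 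Combining the two high-probability events (total failure probability $\le\delta$), using $|D_1|=|D_2|=N/2$, and substituting gives the dominant Bernstein term
\[
\sqrt{\frac{|\mathcal{S}|H\cdot H\,M(D_1)\log(2|\mathcal{S}|H/\delta)}{|D_2|}}\ \lesssim\ \frac{|\mathcal{S}|H^{3/2}}{N}\left(1+\frac{3\log(2|\mathcal{S}|H/\delta)}{\sqrt{|\mathcal{S}|}}\right)^{\!1/2}\!\sqrt{\log(2|\mathcal{S}|H/\delta)},
\]
which is exactly the claimed bound. The residual Bernstein lower-order term, $|\mathcal{S}|H\log(2|\mathcal{S}|H/\delta)/|D_2|$, is dominated by the display above since $\sqrt{H}\ge 1$, and can be absorbed into the constant.

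The main obstacle is the careful interleaving of the two concentration levels so that the log factors coalesce into a single $\log(2|\mathcal{S}|H/\delta)$: one has to calibrate the Bernstein union bound (over $|\mathcal{S}|H$ triples) and the missing-mass union bound (over $H$ times) consistently, so that the two resulting logs are of the same order and no extra $\log H$ or $\log|\mathcal{S}|$ slip in. A secondary technical point is justifying the deterministic-expert reduction from $|\mathcal{S}||\mathcal{A}|$ to $|\mathcal{S}|$ terms, both in the cardinality of the union bound and in the Cauchy--Schwarz step — without it one would lose a factor of $\sqrt{|\mathcal{A}|}$ and the claimed rate would fail.
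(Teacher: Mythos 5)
Your proposal follows essentially the same two\-/level argument as the paper's proof: condition on $D_1$ and apply a Bernstein-type bound to each of the (at most $|\mathcal{S}|H$ nontrivial, by determinism of $\pi^*$) terms with a union bound at level $\delta/(2|\mathcal{S}|H)$; collect the fluctuation terms via Cauchy--Schwarz into $\sqrt{|\mathcal{S}|H\sum_{s,a,t}p_{s,a,t}}$; identify $\sum_{(s,a)}p_{s,a,t}=\mathrm{Pr}_{\pi^*}[\mathcal{E}^{\le t}_{D_1}]$ and bound it by a sum of per-time missing masses; and concentrate those using \Cref{theorem:missingmass-conc} together with the mean bound of \Cref{lemma:BC-prob-small}. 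The paper packages this last step as \Cref{lemma:error-prob-conc} (bounding each $\mathrm{Pr}_{\pi^*}[\mathcal{E}^{\le t}_{D_1}]$ by $\mathrm{Pr}_{\pi^*}[\mathcal{E}^{\le H}_{D_1}]$ rather than by your running sum $M(D_1)$), but the content is identical.

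The one loose step is your claim that the additive Bernstein term $|\mathcal{S}|H\log(2|\mathcal{S}|H/\delta)/|D_2|$ is absorbed ``since $\sqrt{H}\ge 1$.'' The relevant ratio of that term to the claimed bound is
\begin{equation*}
  \frac{\sqrt{\log(2|\mathcal{S}|H/\delta)}}{\sqrt{H}\,\bigl(1+3\log(2|\mathcal{S}|H/\delta)/\sqrt{|\mathcal{S}|}\bigr)^{1/2}},
\end{equation*}
which is $O(1)$ whenever $\log(2|\mathcal{S}|H/\delta)\lesssim H$ or $|\mathcal{S}|\lesssim H^{2}$, but can grow without bound if $H=O(1)$ while $|\mathcal{S}|$ and $1/\delta$ are both very large; so ``$\sqrt{H}\ge 1$'' alone does not justify discarding it. To be fair, the paper's own proof drops this term silently by invoking a purely multiplicative deviation $\sqrt{2p\log(2|\mathcal{S}|H/\delta)/|D_2|}$, so your write-up is, if anything, more explicit about where the slack lies --- but the stated one-line justification should be replaced by the case analysis above (or the lemma read as holding up to this lower-order term, as the paper implicitly does).
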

The high probability guarantee for \textsc{Mimic-MD} follows suit by invoking \Cref{lemma:reduc} with $\varepsilon = 0$ and \Cref{lemma:NsimInf:highprob}.

\subsection{Proof of lower bounds}

In this section we discuss lower bounds on the expected suboptimality of any algorithm in the no-interaction, active and known-transition settings.

\subsubsection{Active and no-interaction settings}

In this section we discuss the proof of the lower bound in \Cref{theorem:LB.p1} which applies in the no-interaction and active settings. We emphasize that the active setting is strictly a generalization of the no-interaction setting: they are no different if the learner queries and plays the expert's action at each time while interacting with the MDP.

Formally, in the active setting, we assume the learner sequentially rolls out policies $\pi_1,\cdots\pi_N$ to generate trajectories $\textsf{tr}_1,\cdots,\textsf{tr}_N$. The learner is aware of the expert's action at each state visited in each trajectory $\textsf{tr}_n$, however may or may not choose to play this action while rolling out $\pi_n$. We assume that the policy $\pi_n$ is learnt causally, and can depend on all the previous information collected by the learner: the trajectories $\textsf{tr}_1,\cdots,\textsf{tr}_{n-1}$, as well as the expert's policy at each state visited in these trajectories.

\paragraph{Notation:} We use $D = \textsf{tr}_1,\cdots,\textsf{tr}_n$ to denote the trajectories collected by the learner by rolling out $\pi_1,\cdots,\pi_N$. In addition the learner exactly knows the expert's policy $\pi^*_t (\cdot |s)$ at all states $s \in \mathcal{S}_t (D)$. We also define $A = \{ \pi^*_t (\cdot |s) : t \in [H], s \in \mathcal{S}_t (D) \}$ as the expert's policy at states visited in $D$, which is also known to the learner by virtue of actively querying the expert.

The expert policy is deterministic in the lower bound instances we construct. Therefore, we define $\Pi_{\mathrm{mimic}} (D,A)$ (similar to $\Pi_{\mathrm{mimic}} (D)$ in \cref{eq:Pi.mimic}) as the family of deterministic policies which mimics the expert on the states visited in $D$. Namely,
\begin{equation} \label{eq:Pi.mimic.DA}
    \Pi_{\mathrm{mimic}} (D,A) \triangleq \Big\{ \pi \in \Pi_{\mathrm{det}} : \forall t \in [H], s \in \mathcal{S}_t (D),\ \pi_t (s) = \pi^A_t (s) \Big\},
\end{equation}
where $\delta_{\pi^A_t (s)}$ is the policy observed by the learner upon actively querying the expert in a trajectory that visits $s$ at time $t$. Informally, $\Pi_{\mathrm{mimic}} (D,A)$ is the family of expert policies which are ``compatible'' with the dataset $(D,A)$ collected by the learner.

Define $\mathbb{M}_{\mathcal{S}, \mathcal{A}, H}$ as the family of MDPs over state space $\mathcal{S}$, action space $\mathcal{A}$ and epsiode length $H$. In order to prove the lower bound on the worst-case expected suboptimality of any learner $\widehat{\pi} (D,A)$, it suffices to lower bound the Bayes expected suboptimality. Namely, it suffices to find a joint distribution $\mathcal{P}$ over MDPs and expert policies supported on $\mathbb{M}_{\mathcal{S} , \mathcal{A}, H} \times \Pi_{\mathrm{det-exp}}$ such that,
\begin{equation}
    \mathbb{E}_{(\pi^*,\mathcal{M}) \sim \mathcal{P}} \Big[ J_\mathcal{M} (\pi^*) - \mathbb{E} \left[ J_\mathcal{M} (\widehat{\pi} (D,A))\right]\Big] \gtrsim \min \left\{ H, \frac{|\mathcal{S}| H^2}{N} \right\}.
\end{equation}

\paragraph{Construction of $\mathcal{P}$:} First we choose the expert's policy uniformly from $\Pi_{\mathrm{det}}$. That is, for each $t \in [H]$ and $s \in \mathcal{S}$, $\pi^*_t (s) \sim \mathrm{Unif} (\mathcal{A})$. Conditioned on $\pi^*$, the distribution over MDPs induced by $\mathcal{P}$ is deterministic and given by the MDP $\mathcal{M} [\pi^*]$ in \cref{fig:det:Nsim0:LB:repeat}. The $\mathcal{M} [\pi^*]$ is defined with respect to a fixed initial distribution over states $\rho = \{ \zeta,\cdots,\zeta, 1 {-}(|\mathcal{S}|{-}2) \zeta, 0\}$ where $\zeta = \frac{1}{N+1}$. In addition, there is a special state $b \in \mathcal{S}$ which we refer to as the ``bad state''. At each state $s \in \mathcal{S} \setminus \{ b \}$, choosing the expert's action renews the state in the initial distribution $\rho$ and dispenses a reward of $1$, while any other choice of action deterministically transitions to the bad state and offers no reward. In addition, the bad state is absorbing and dispenses no reward irrespective of the choice of action. That is,
\begin{equation}
    P_t (\cdot|s,a) =
    \begin{cases}
    \rho, \qquad &s \in \mathcal{S} \setminus \{ b \},\ a = \pi^*_t (s) \\
    \delta_b, &\text{otherwise},
    \end{cases}
\end{equation}
and the reward function of the MDP is given by,
\begin{equation}
    \mathbf{r}_t (s,a) = \begin{cases} 1, \qquad & s \in \mathcal{S} \setminus \{ b \},\ a = \pi^*_t (s)\\
    0, & \text{otherwise}.
    \end{cases}
\end{equation}

We first state a simple consequence of the construction of the MDP instances and $\mathcal{P}$.

\begin{lemma} \label[lemma]{lemma:exp-gets-H-reward}
Consider any policy $\pi^* \in \Pi_{\mathrm{det}}$. Then, the value of $\pi^*$ on the MDP $\mathcal{M} [\pi^*]$ is $H$.
\end{lemma}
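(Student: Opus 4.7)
The plan is to prove this by a short induction on the time step, showing that when we roll out $\pi^*$ in $\mathcal{M}[\pi^*]$, the learner almost surely avoids the bad state $b$ throughout the episode and therefore collects reward $1$ at each step. The key observation is that the initial distribution $\rho$ assigns zero mass to $b$, and the transition dynamics under the expert's own action from any non-bad state resample from $\rho$, which again puts zero mass on $b$.

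More concretely, I will argue by induction on $t \in [H]$ that $\Pr_{\pi^*}[s_t \ne b] = 1$ and that the reward collected at time $t$ equals $1$ almost surely. The base case $t = 1$ holds because $s_1 \sim \rho$ and $\rho(b) = 0$ by construction, so $s_1 \in \mathcal{S} \setminus \{b\}$ almost surely; playing $a_1 = \pi^*_1(s_1)$ then yields $\mathbf{r}_1(s_1,a_1) = 1$. For the inductive step, assume $s_t \ne b$ almost surely. Then the learner plays $a_t = \pi^*_t(s_t)$, which dispenses reward $1$ and, by the definition of $P_t$, resamples $s_{t+1} \sim \rho$; since $\rho(b) = 0$ this gives $s_{t+1} \ne b$ almost surely, completing the induction.

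Summing the reward collected over all $H$ time steps gives
\begin{equation}
J_{\mathcal{M}[\pi^*]}(\pi^*) = \mathbb{E}_{\pi^*} \left[ \sum_{t=1}^H \mathbf{r}_t(s_t,a_t) \right] = \sum_{t=1}^H 1 = H,
\end{equation}
which is the claim. I do not anticipate any substantive obstacle: the argument only uses that (i) $\rho$ places no mass on $b$, (ii) the expert's action deterministically avoids $b$ and yields reward $1$ from any $s \ne b$, and (iii) transitions under the expert's action resample from $\rho$. All three are immediate from the construction of $\mathcal{M}[\pi^*]$ given in the preceding paragraph.
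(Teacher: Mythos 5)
Your proof is correct and is essentially the paper's argument, just written out as an explicit induction: the paper's one-line proof likewise observes that $\rho$ places no mass on $b$ and that the expert's action renews the state in $\rho$ while collecting reward $1$, so $\pi^*$ never reaches $b$ and accrues $H$ total reward. No gaps.
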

\begin{proof}
Playing the expert's action at any state in $\mathcal{S} \setminus \{ b \}$ is the only way to accrue non-zero reward, and in fact accrues a reward of $1$. In addition, note that the expert never visits the bad state $b$ by virtue of the distribution $\rho$ placing no mass on $b$. Therefore, the value of $\pi^*$ on the MDP $\mathcal{M} [\pi^*]$ is $H$.
\end{proof}

The intuition behind the lower bound construction is as follows. Although the learner can actively query the expert, at the states unvisited in the dataset $D$, the learner has no idea about the expert's policy or the transitions induced under different actions. Intuitively it is clear that the learner cannot guess the expert's action with probability $\ge 1/2$ at such states, a statement which we prove by leveraging the Bayesian construction. In turn, the learner is forced to visit the bad state $b$ at the next point in the episode, and then on collects no reward.

Therefore, to bound the expected reward collected by a learner, it suffices to bound the probability that a learner visits a state unvisited in the expert dataset. The remainder of the proof is in showing that in this MDP construction, in expectation any learner visits such states with probability $\epsilon \gtrsim |\mathcal{S}|/N$ at each point in an episode. Moreover, conditioned on the dataset $D$, these events occur independently across time. Thus informally, the expected suboptimality of a learner is lower bounded by,
\begin{equation}
    H \epsilon + (H-1) \epsilon (1-\epsilon) + \cdots + (1-\epsilon)^H \gtrsim \min \{ H, H^2 \epsilon \}.
\end{equation}
where $\epsilon = |\mathcal{S}|/N$.

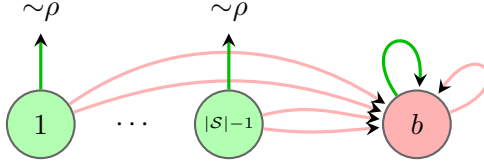
\begin{figure}
\centering
\begin{tikzpicture}[shorten >=1pt,node distance=1.25cm,on grid,auto,good/.style={circle, draw=black!60, fill=green!30!white, thick, minimum size=9mm, inner sep=0pt},bad/.style={circle, draw=black!60, fill=red!30, thick, minimum size=9mm, inner sep=0pt}]
\tikzset{every edge/.style={very thick, draw=red!30!white}}    \tikzset{every loop/.style={min distance=9mm,in=86,out=125, looseness=8, very thick, draw=green!75!black}}
    
\node[good]     (s_1)                   {$1$};
\node           (dist1) [above = 1.5cm of s_1]  {${\sim}\rho$};
\node           (dot)   [right of=s_1]  {$\cdots$};
\node[good]     (s_2)   [right of=dot]  {\tiny $|\mathcal{S}|{-}1$};
\node           (dist2) [above = 1.5cm of s_2]  {${\sim}\rho$};
\node           (s_n)   [right of=s_2]  {};
\node[bad]      (b)     [right of=s_n]  {$b$};

\path[->,>=stealth]
(s_1) edge [draw=green!75!black] node {} (dist1)
      edge [in = 147.5, out = 35]  node {} (b)
      edge [in = 160, out = 20]  node {} (b)
(s_2) edge [draw=green!75!black] node {} (dist2)
      edge [in = 172.5, out = 15]  node {} (b)
      edge [in = 185, out = -7.5]  node {} (b)
(b)   edge [loop above] node {} ()
(b)   edge [distance=10mm,in=55,out=20, very thick] node {} (b);
\end{tikzpicture}
\caption{MDP template when $N_{\mathrm{sim}} = 0$: Upon playing the expert's (green) action at any state except $b$, learner is renewed in the initial distribution $\rho = \{ \zeta,{\cdots},\zeta,1 {-} (|\mathcal{S}|{-}2) \zeta, 0\}$ where $\zeta {=} \frac{1}{N+1}$. Any other choice of action (red) deterministically transitions the state to $b$.}
\label{fig:det:Nsim0:LB:repeat}
\end{figure}

We return to a more formal exposition of the proof of the lower bound. Recall that our objective is to lower bound the Bayes expected suboptimality of $\widehat{\pi}$. Invoking \Cref{lemma:exp-gets-H-reward}, the objective is to lower bound
\begin{equation} \label{eq:Rlowerbound-1}
    \mathbb{E}_{(\pi^*,\mathcal{M}) \sim \mathcal{P}} \Big[ H - \mathbb{E} \Big[ J_{\mathcal{M}} (\widehat{\pi} (D,A))\Big]\Big].
\end{equation}
To this end, we first try to understand the conditional distribution of the expert's policy given the dataset $(D,A)$ collected by the learner. Recall that the dataset $D$ contains trajectories generated by rolling out a sequence of policies $\pi_1,\cdots,\pi_n$, and $A$ captures the expert's policy at states visited in $D$.

\begin{lemma} \label[lemma]{lemma:cond-is-mimic}
Conditioned on the dataset $(D,A)$ collected by the learner, the expert's deterministic policy $\pi^*$ is distributed $\sim \mathrm{Unif} (\Pi_{\mathrm{mimic}} (D,A))$. In other words, at each state visited in the expert dataset, the expert's choice of action is fixed as the one returned when the expert was actively queried at this state. At the remaining states, the expert's choice of action is sampled uniformly from $\mathcal{A}$.
\end{lemma}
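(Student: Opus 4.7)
The plan is to invoke Bayes' rule: since the prior on $\pi^*$ is uniform on $\Pi_{\mathrm{det}}$, it suffices to show that the likelihood $\Pr((D,A) \mid \pi^*)$ is the same for every $\pi^* \in \Pi_{\mathrm{mimic}}(D,A)$ and zero otherwise. The key structural fact that makes this work is that in the MDP $\mathcal{M}[\pi^*]$, transitions are deterministic functions of the chosen action (either renew from $\rho$ or jump to $b$), so the only sources of randomness in the data-generating process are: (i) the learner's internal randomization producing the adaptive policies $\pi_1, \ldots, \pi_N$; (ii) the draws of initial / renewed states from $\rho$; and (iii) the expert actions revealed by the oracle at visited states.

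First I would fix an arbitrary $\pi^\dagger \in \Pi_{\mathrm{mimic}}(D,A)$ and write out $\Pr((D,A) \mid \pi^* = \pi^\dagger)$ as a product over the $N$ episodes. In the $n$-th episode, conditional on the history $(\textsf{tr}_{1:n-1}, A_{1:n-1})$, the learner produces a (possibly stochastic) policy $\pi_n$; each step $t$ of $\textsf{tr}_n$ then contributes a factor for the learner's action choice and a factor for the transition, while the oracle appends the expert action $\pi^\dagger_t(s_t^n)$ to $A$. Because $\pi^\dagger \in \Pi_{\mathrm{mimic}}(D,A)$, the oracle's outputs match the corresponding entries of $A$ exactly, so no indicator contributes a zero. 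Crucially, every factor in this product depends on $\pi^\dagger$ only through $\{\pi^\dagger_t(s): t\in [H],\ s\in \mathcal{S}_t(D)\}$, which is pinned down by $A$ and is therefore identical across all $\pi^\dagger \in \Pi_{\mathrm{mimic}}(D,A)$.

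Next I would handle the complementary case: if $\pi^* \notin \Pi_{\mathrm{mimic}}(D,A)$, then there exists $(t,s)$ with $s \in \mathcal{S}_t(D)$ and $\pi^*_t(s) \ne \pi^A_t(s)$; by definition $s$ was visited at time $t$ in some trajectory of $D$ and the oracle would have returned $\pi^*_t(s)$ at that query, contradicting the recorded value $\pi^A_t(s)$. Hence the likelihood is exactly zero. Combining this with the equal-likelihood statement from the previous paragraph and normalizing yields that the posterior of $\pi^*$ given $(D,A)$ is $\mathrm{Unif}(\Pi_{\mathrm{mimic}}(D,A))$, which is equivalent to the product form asserted (action fixed on visited $(t,s)$, uniform over $\mathcal{A}$ on unvisited $(t,s)$).

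The main obstacle I anticipate is the bookkeeping around the learner's adaptivity: the policy $\pi_n$ is a measurable function of $(\textsf{tr}_{1:n-1}, A_{1:n-1})$ and of independent learner-side randomness, so care is needed to verify that the adaptive factors cancel when comparing likelihoods for two candidate experts. This should follow cleanly once one notes that, conditional on the observed history so far, the learner's randomization is independent of $\pi^*$ and therefore contributes an identical multiplicative factor for every $\pi^\dagger \in \Pi_{\mathrm{mimic}}(D,A)$; likewise the $\rho$-draws used for renewals depend only on $\rho$, not on $\pi^\dagger$.
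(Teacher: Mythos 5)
Your proposal is correct and follows essentially the same route as the paper's proof: both arguments show via Bayes' rule that the likelihood of $(D,A)$ vanishes for any $\pi^* \notin \Pi_{\mathrm{mimic}}(D,A)$ and is constant across all $\pi^* \in \Pi_{\mathrm{mimic}}(D,A)$ because the data-generating process depends on the expert only through its actions at states visited in $D$, which $A$ pins down. Your treatment is somewhat more careful than the paper's (which compresses the factorization and the learner-adaptivity bookkeeping into two sentences), but the underlying idea is identical.
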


\begin{definition} \label{def:PDA}
Define $\mathcal{P} (D,A)$ as the joint distribution of $(\pi^*, \mathcal{M})$ conditioned on the dataset $(D,A)$ collected by the learner. In particular, $\pi^* \sim \mathrm{Unif} (\Pi_{\mathrm{mimic}} (D,A))$ and $\mathcal{M} = \mathcal{M} [\pi^*]$.
\end{definition}

\noindent From \Cref{lemma:cond-is-mimic} and the definition of $\mathcal{P} (D,A)$ in \Cref{def:PDA}, applying Fubini's theorem gives,
\begin{equation} \label{eq:Rlowerbound-2}
    \mathbb{E}_{(\pi^*,\mathcal{M}) \sim \mathcal{P}} \Big[ H - \mathbb{E} \left[ J_{\mathcal{M}} (\widehat{\pi})\right]\Big] = \mathbb{E} \left[ \mathbb{E}_{(\pi^*, \mathcal{M}) \sim \mathcal{P} (D,A)} \left[ H - J_{\mathcal{M}} (\widehat{\pi} (D,A))\right] \right].
\end{equation}
Next we relate this to the first time the learner visits a state unobserved in $D$.

\begin{lemma} \label[lemma]{lemma:hatvalue-UB}
Define the stopping time $\tau$ as the first time $t$ that the learner encounters a state $s_t \ne b$ that has not been visited in $D$ at time $t$. That is,
\begin{equation}
    \tau = \begin{cases}
    \inf \{ t : s_t \not\in \mathcal{S}_t (D) \cup \{ b \} \} \quad & \exists t : s_t \not\in \mathcal{S}_t (D) \cup \{ b \}\\
    H & \text{otherwise}.
    \end{cases}
\end{equation}
Then, conditioned on the dataset $(D,A)$ collected by the learner,
\begin{equation}
    \mathbb{E}_{(\pi^*,\mathcal{M}) \sim \mathcal{P} (D,A)} \Big[ J (\pi^*) - \mathbb{E} \left[ J (\widehat{\pi})\right]\Big] \ge \left( 1 - \frac{1}{|\mathcal{A}|} \right) \mathbb{E}_{(\pi^*,\mathcal{M}) \sim \mathcal{P} (D,A)} \left[ \mathbb{E}_{\widehat{\pi}(D,A)} \left[ H - \tau \right] \right]
\end{equation}
\end{lemma}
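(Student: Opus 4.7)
The plan is to isolate a single ``mistake'' event at the stopping time $\tau$ and translate it into suboptimality via the structure of $\mathcal{M}[\pi^*]$. By Lemma \ref{lemma:exp-gets-H-reward}, $J_{\mathcal{M}}(\pi^*) = H$, and the reward satisfies $\mathbf{r}_t(s,a) = \mathbbm{1}(s \ne b,\ a = \pi^*_t(s))$. The critical feature of $\mathcal{M}[\pi^*]$ is that any non-expert action at a non-$b$ state deterministically transitions to the absorbing state $b$, from which no further reward is ever collected. Consequently, if $\tau < H$ and the learner mistakes the expert at time $\tau$ (so that $s_\tau \ne b$ and $a_\tau \ne \pi^*_\tau(s_\tau)$), all rewards at times $\tau, \tau+1,\ldots,H$ are zero, so the learner's total reward is at most $\tau - 1$ and the suboptimality is at least $H - \tau + 1 \ge H - \tau$. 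This yields the pointwise bound
\begin{equation*}
    J_{\mathcal{M}}(\pi^*) - J_{\mathcal{M}}(\widehat{\pi}) \ge (H - \tau)\, \mathbbm{1}\bigl(a_\tau \ne \pi^*_\tau(s_\tau)\bigr),
\end{equation*}
interpreting the right-hand side as $0$ when $\tau = H$.

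The second step is to compute the conditional probability of the mistake event. By Lemma \ref{lemma:cond-is-mimic}, under $\mathcal{P}(D,A)$ the expert's action $\pi^*_t(s)$ at any $(t,s)$ with $s \notin \mathcal{S}_t(D)$ is drawn independently and uniformly from $\mathcal{A}$. I would then argue that the joint distribution of the trajectory of $\widehat{\pi}(D,A)$ up to and including $s_\tau$ does not depend on $\pi^*$ restricted to unseen states. Indeed, by the definition of $\tau$, every state visited at time $t < \tau$ lies in $\mathcal{S}_t(D) \cup \{b\}$; at states in $\mathcal{S}_t(D)$ the transition dynamics depend on $\pi^*$ only through $A$, while at $b$ they are independent of $\pi^*$. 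Hence the trajectory up to time $\tau$, together with the learner's action $a_\tau$ (a measurable function of that trajectory and the learner's own randomness), is independent of $\pi^*_\tau(s_\tau)$, so $\Pr\bigl(a_\tau = \pi^*_\tau(s_\tau) \mid (D,A),\ \text{trajectory up to } \tau\bigr) = 1/|\mathcal{A}|$.

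Combining the two steps and taking conditional expectation given $(D,A)$ and the trajectory up to $\tau$ introduces the factor $1 - 1/|\mathcal{A}|$; averaging over the trajectory of $\widehat{\pi}(D,A)$ and $\pi^* \sim \mathrm{Unif}(\Pi_{\mathrm{mimic}}(D,A))$ yields the desired inequality. The main obstacle is the conditional-independence argument in the second step: one must carefully verify that no information about $\pi^*_\tau(s_\tau)$ leaks into the trajectory before time $\tau$. This relies crucially on the special structure of $\mathcal{M}[\pi^*]$---that mistakes send the learner to the absorbing $b$ and that correct actions renew the state from the fixed distribution $\rho$---so that at the moment of the first unseen visit the learner truly faces a uniformly random target.
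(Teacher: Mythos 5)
Your proposal is correct and follows essentially the same route as the paper: both arguments rest on (i) absorption into $b$ after the first wrong action at an unseen state, costing at least $H-\tau$ reward, and (ii) the conditional uniformity of $\pi^*_\tau(s_\tau)$ under $\mathcal{P}(D,A)$ together with its independence from the trajectory up to time $\tau$, which yields the $1 - 1/|\mathcal{A}|$ factor. The only difference is bookkeeping: the paper introduces the auxiliary hitting time $\tau_b$ of the bad state and sums $\mathrm{Pr}_{\widehat{\pi}}[\tau_b = t+1, \tau = t, s_t = s]$ over $t$ and $s$, whereas you bound the lost reward pointwise on trajectories, a slightly more direct presentation of the same argument.
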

Plugging the result of \Cref{lemma:hatvalue-UB} into \cref{eq:Rlowerbound-2}, we have that,
\begin{align}
    \mathbb{E}_{(\pi^*,\mathcal{M}) \sim \mathcal{P}} \Big[ J (\pi^*) - \mathbb{E} \left[ J (\widehat{\pi})\right]\Big] &\ge \left( 1 - \frac{1}{|\mathcal{A}|} \right) \mathbb{E} \left[ \mathbb{E}_{(\pi^*,\mathcal{M}) \sim \mathcal{P} (D,A)} \left[ \mathbb{E}_{\widehat{\pi}} \left[ H - \tau \right] \right] \right], \\
    &\overset{(i)}{\ge} \left( 1 - \frac{1}{|\mathcal{A}|} \right) \frac{H}{2} \mathbb{E} \left[ \mathbb{E}_{(\pi^*,\mathcal{M}) \sim \mathcal{P} (D,A)} \left[ \mathrm{Pr}_{\widehat{\pi}} \Big[ \tau \le \lfloor H/2 \rfloor \Big] \right] \right], \\
    &= \left( 1 - \frac{1}{|\mathcal{A}|} \right) \frac{H}{2} \mathbb{E}_{(\pi^*, \mathcal{M}) \sim \mathcal{P}} \left[ \mathbb{E} \left[ \mathrm{Pr}_{\widehat{\pi}} \Big[ \tau \le \lfloor H/2 \rfloor \Big] \right] \right], \label{eq:Rlowerbound-3}
\end{align}
where $(i)$ uses Markov's inequality and the last equation uses Fubini's theorem.

The last remaining element of he proof is to indeed bound the probability that the learner visits a state unobserved in the dataset before time $\lfloor H/2 \rfloor$.
In \Cref{lemma:failuretime-sumbound} we prove that for any learner $\widehat{\pi}$, $\mathbb{E}_{(\pi^*, \mathcal{M}) \sim \mathcal{P}} \left[ \mathbb{E} \left[ \mathrm{Pr}_{\widehat{\pi}} \left[ \tau \le \lfloor H/2 \rfloor \right] \right] \right]$ is lower bounded by $\gtrsim \min \{ 1, |\mathcal{S}| H / N\}$. Therefore,
\begin{equation}
    \mathbb{E}_{(\pi^*,\mathcal{M}) \sim \mathcal{P}} \Big[ J (\pi^*) - \mathbb{E} \left[ J (\widehat{\pi})\right]\Big] \gtrsim \left( 1 - \frac{1}{|\mathcal{A}|} \right) \frac{H}{2} \min \left\{ 1, \frac{|\mathcal{S}| H}{N} \right\}.
\end{equation}
Since $\left( 1 - \frac{1}{|\mathcal{A}|} \right)$ is a constant for $|\mathcal{A}| \ge 2$ the statement of \Cref{theorem:LB.p1} follows.

\begin{lemma} \label[lemma]{lemma:failuretime-sumbound}
For any learner policy $\widehat{\pi}$,
\begin{equation}
    \mathbb{E}_{(\pi^*,\mathcal{M}) \sim \mathcal{P}} \left[ \mathbb{E} \left[ \mathrm{Pr}_{\widehat{\pi}} \Big[ \tau \le \lfloor H/2 \rfloor \Big] \right] \right] \ge 1 - \left(  1 - \frac{|\mathcal{S}|-2}{e(N+1)} \right)^{\lfloor H/2 \rfloor} \gtrsim \min \left\{ 1, \frac{|\mathcal{S}| H}{N} \right\}.
\end{equation}
\end{lemma}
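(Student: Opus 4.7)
The plan is to lower bound $\Pr_{\widehat\pi}[\tau \le \lfloor H/2\rfloor]$ via an i.i.d.\ missing-mass computation enabled by the renewal structure of the constructed MDP. First, I would observe that for the purposes of the overall Bayes suboptimality lower bound, it suffices to analyze the Bayes-optimal learner, which is a \emph{smart} learner---one that, in both the interaction phase and the evaluation rollout, plays the (actively queried, and hence known) expert action at every visited non-$b$ state. Deviating at a queried state deterministically lands the learner in the absorbing state $b$, strictly decreasing $J(\widehat\pi)$, so restricting to this class preserves the lower bound coming from the chain in \Cref{lemma:hatvalue-UB}.

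Under the smart-learner restriction, every transition out of a non-$b$ state is a fresh, independent draw from $\rho$. This yields two independence facts. First, across the $N$ interaction trajectories, the visited states $\{s_t^{(n)}\}_{t\in[H],\,n\in[N]}$ are all i.i.d.\ $\rho$, so each slice $\mathcal{S}_t(D) = \{s_t^{(n)}\}_{n=1}^N$ is an i.i.d.\ sample of size $N$ from $\rho$, and the slices at different $t$ are mutually independent. Second, during the evaluation rollout, conditional on $\tau > t-1$ the state $s_t$ is itself a fresh $\rho$-draw, independent of $s_1,\ldots,s_{t-1}$ and of $\mathcal{S}_t(D)$. Since $\rho$ assigns zero mass to $b$, the event $\{\tau > \lfloor H/2 \rfloor\}$ coincides with $\bigcap_{t=1}^{\lfloor H/2 \rfloor}\{s_t \in \mathcal{S}_t(D)\}$, and these indicators are mutually independent.

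I would then lower bound each marginal by conditioning on $s_t$ and using its independence from $\mathcal{S}_t(D)$:
\[
\Pr\bigl[s_t \notin \mathcal{S}_t(D)\bigr] \;=\; \sum_{s \in \mathcal{S}} \rho(s)(1-\rho(s))^N \;\ge\; (|\mathcal{S}|-2)\,\zeta\,(1-\zeta)^N \;\ge\; \frac{|\mathcal{S}|-2}{e(N+1)} \;=:\; p,
\]
where I keep only the $|\mathcal{S}|-2$ ``small'' states of $\rho$ (each of mass $\zeta = 1/(N+1)$) and invoke the elementary estimate $(1-1/(N+1))^N \ge 1/e$. Combined with mutual independence, this gives $\Pr[\tau > \lfloor H/2 \rfloor] \le (1-p)^{\lfloor H/2 \rfloor}$. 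The outer expectation over $(\pi^*, \mathcal{M}) \sim \mathcal{P}$ preserves the bound because the smart-learner dynamics make the joint distribution of $(\mathcal{S}_t(D), s_t)$ independent of the particular realization of $\pi^*$, yielding the first inequality in the lemma.

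For the concluding $\gtrsim \min\{1, |\mathcal{S}|H/N\}$ simplification, I would split on whether $pH/2 \ge 1$: in the large regime $(1-p)^{\lfloor H/2 \rfloor} \le e^{-pH/2} \le e^{-1}$, so the bound is at least the universal constant $1 - e^{-1}$; in the small regime the elementary inequality $1 - e^{-x} \ge x/2$ on $x \in [0,1]$ yields $1 - (1-p)^{\lfloor H/2 \rfloor} \ge pH/4 \asymp |\mathcal{S}|H/N$. The main obstacle is executing the smart-learner reduction: the interaction policy shapes $\mathcal{S}_t(D)$ while the evaluation policy governs $\tau$, and one must carefully argue that jointly straightening both does not weaken the Bayes suboptimality we are ultimately trying to lower bound. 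Once this reduction is in place, the remainder is the clean renewal/missing-mass argument sketched above.
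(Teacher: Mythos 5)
Your core computation is the paper's: the renewal structure makes each slice $\mathcal{S}_t(D)$ behave like $N$ draws from $\rho$, the per-step failure probability is the missing mass $\sum_s \rho(s)(1-\rho(s))^N \ge \tfrac{|\mathcal{S}|-2}{e(N+1)}$ (this is exactly \Cref{lemma:gammabound}), and the survival probability over $\lfloor H/2\rfloor$ steps gives $(1-p)^{\lfloor H/2\rfloor}$; your closing case split for the $\min\{1,|\mathcal{S}|H/N\}$ simplification is also fine. The genuine gap is the ``smart-learner reduction'' you defer, and it is needed in two places where it behaves very differently. For the \emph{interaction phase}, your claim that the slices $\mathcal{S}_1(D),\dots,\mathcal{S}_{\lfloor H/2\rfloor}(D)$ are mutually independent i.i.d.\ samples of size $N$ is false for a general adaptive interacting learner: a trajectory that deviates is absorbed in $b$ and stops contributing fresh $\rho$-draws, so later slices have random, history-dependent effective sample sizes and are correlated with earlier ones. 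The paper avoids any independence claim here via a one-step conditional recursion (\Cref{lemma:inv-gamma}): conditioned on $D_{\le\tau,<\tau}$, the expected mass $\mathbb{E}[\rho(\mathcal{S}_{\tau+1}(D)\setminus\{b\})]$ is at most $1-\gamma$ because the number of surviving trajectories is at most $N$, giving $\mathbb{E}\bigl[\prod_{t\le \tau+1}\rho(\mathcal{S}_t(D)\setminus\{b\})\bigr]\le(1-\gamma)\,\mathbb{E}\bigl[\prod_{t\le \tau}\rho(\mathcal{S}_t(D)\setminus\{b\})\bigr]$ for arbitrary interaction policies. Here your intuition (deviating only shrinks the dataset and helps the bound) is correct, but the recursion, not independence, is what formalizes it.

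For the \emph{evaluation rollout} the reduction is more serious, because the monotonicity goes the wrong way: a learner that deliberately deviates at a state in $\mathcal{S}_t(D)$ is absorbed into $b$, after which $s_{t'}=b$ for all $t'>t$ and hence $\tau=H$ by definition. Such a learner has $\mathbb{E}[\mathrm{Pr}_{\widehat{\pi}}[\tau\le\lfloor H/2\rfloor]]$ as small as a single missing-mass term $\approx\gamma$, which is strictly below $1-(1-\gamma)^{\lfloor H/2\rfloor}$ for large $H$. So restricting to smart learners is not a without-loss-of-generality step for the inequality as stated over all $\widehat{\pi}$; the smart learner is the case where the bound is tight, not the extremal case. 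To close this you must either (a) actually prove the Bayes-optimality/information-dominance claim and rethread the entire chain (\Cref{lemma:hatvalue-UB} onward) through the Bayes-optimal learner alone, which is a heavier restructuring than you suggest, or (b) follow the paper's device in \Cref{lemma:failuretime-decomp}: on the event $\{\tau=t\}$ the learner has necessarily not visited $b$, hence has necessarily played the expert's action at every visited state before time $t$, so the probability of that event can be computed under the expert's renewal dynamics without any global assumption on $\widehat{\pi}$. Until one of these is carried out, the first inequality of the lemma is not established by your argument.
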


\subsubsection{Known-transition setting}

As in the proof of \Cref{theorem:LB.p1}, in order to prove the lower bound on the expected suboptimality of any learner $\widehat{\pi} (D,A)$, it suffices lower bound the Bayes expected suboptimality. Namely, it suffices to find a joint distribution $\mathcal{P}$ over MDPs and expert policies supported on $\mathbb{M}_{\mathcal{S} , \mathcal{A}, H} \times \Pi_{\mathrm{det-exp}}$ such that,
\begin{equation}
    \mathbb{E}_{(\pi^*,\mathcal{M}) \sim \mathcal{P}} \Big[ J (\pi^*) - \mathbb{E} \left[ J (\widehat{\pi} (D,P))\right]\Big] \gtrsim \min \left\{ H, \frac{|\mathcal{S}| H}{N} \right\}.
\end{equation}

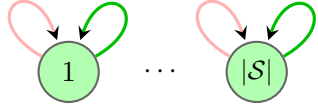
\begin{figure}
\centering
\begin{tikzpicture}[shorten >=1pt,node distance=1.25cm,on grid,auto,good/.style={circle, draw=black!60, fill=green!30!white, thick, minimum size=8mm, inner sep=0pt},bad/.style={circle, draw=black!60, fill=red!30, thick, minimum size=8mm, inner sep=0pt}]
\tikzset{every edge/.style={very thick, draw=red!30!white}}
\tikzset{every loop/.style={min distance=10mm,in=70,out=30,looseness=8, very thick, draw=green!75!black}}
    
\node[good]     (s_1)                   {$1$};
\node           (dot)   [right=of s_1]  {$\cdots$};
\node[good]     (s_n)   [right of=dot]  {\small $|\mathcal{S}|$};

\path[->,>=stealth]
(s_1) edge [distance=10.5mm,in=110,out=150, very thick] node {} (s_1)
(s_1) edge [loop above] node {}     ()
(s_n) edge [distance=10.5mm,in=110,out=150, very thick] node {} (s_n)
(s_n) edge [loop above] node {}     ();
\end{tikzpicture}
\caption{MDP template when $N_{\mathrm{sim}} \to \infty$, Each state is absorbing, initial distribution is given by $\{ \zeta,{\cdots},\zeta, 1 - (|\mathcal{S}|{-}1)\zeta \}$ where $\zeta = \frac{1}{N+1}$}
\label{fig:det:NsimInf:LB:repeat}
\end{figure}

\paragraph{Construction of $\mathcal{P}$} As in the proof of \Cref{theorem:LB.p1}, we first sample the expert's policy uniformly from $\Pi_{\mathrm{det}}$. That is, for each $t \in [H]$ and $s \in \mathcal{S}$, the action $\pi^*_t (s)$ is drawn uniformly from $\mathcal{A}$. Conditioned on $\pi^*$, the distribution over MDPs induced by $\mathcal{P}$ is deterministic and given by the construction $\mathcal{M} [\pi^*]$ in \cref{fig:det:Nsim0:LB:repeat}. $\mathcal{M} [\pi^*]$ is defined with initial distribution over states $\rho = \{ \zeta,\cdots,\zeta, 1 {-}(|\mathcal{S}|{-}1) \zeta\}$ where $\zeta = \frac{1}{N+1}$. Each state $s \in \mathcal{S}$ is absorbing in $\mathcal{M} [\pi^*]$. Formally, for each $s \in \mathcal{S}$ the transition function of $\mathcal{M} [\pi^*]$ is,
\begin{equation}
    P_t (\cdot|s,a) = \delta_s.
\end{equation}
At any state $s$, choosing the expert's action $\pi^*_t (s)$ returns a reward of $1$, while any other choice of action offers $0$ reward.
\begin{equation}
    \mathbf{r}_t (s,a) = \begin{cases} 1, \qquad &a = \pi^*_t (s)\\
    0, & \text{otherwise}.
    \end{cases}
\end{equation}
Note that all the MDPs $\mathcal{M} [\pi^*]$ for $\pi^* \in \Pi_{\mathrm{det}}$ share a common set of transition functions and initial state distribution. Therefore, fixing $P$ and $\rho$, we define $\mathcal{P}'$ to be the joint distribution over expert policies and reward functions induced by $\mathcal{P}$. Then the objective is to lower bound the Bayes expected suboptimality,
\begin{equation} \label{eq:bayes2}
    \mathbb{E}_{(\pi^*,\mathbf{r}) \sim \mathcal{P}'} \Big[ J_{\mathbf{r}} (\pi^*) - \mathbb{E} \left[ J_{\mathbf{r}} (\widehat{\pi} (D,P))\right]\Big].
\end{equation}

In this construction, it is yet again the case that the expert's policy $\pi^*$ collects maximum reward $H$ on $\mathcal{M} [\pi^*]$.

\begin{lemma} \label[lemma]{lemma:exp-gets-H-reward:Inf}
Consider any policy $\pi^* \in \Pi_{\mathrm{det}}$. Then, the value of $\pi^*$ on the MDP $\mathcal{M} [\pi^*]$ is $H$.
\end{lemma}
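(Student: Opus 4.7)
The plan is to directly exploit the two defining features of the MDP construction $\mathcal{M}[\pi^*]$: every state is absorbing, and at every state the unique reward-bearing action at time $t$ is exactly the expert's prescribed action $\pi^*_t(s)$. Together these reduce the value computation to a per-trajectory constant.

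First, I would fix an arbitrary initial state $s_1 \in \mathcal{S}$ drawn from $\rho$ and roll out $\pi^*$. Since $P_t(\cdot \mid s, a) = \delta_s$ for every $(s, a, t)$, the state evolution is trivial: $s_t = s_1$ almost surely for every $t \in [H]$, regardless of the actions chosen. In particular, this holds along any trajectory generated by $\pi^*$, so with probability one the trajectory is $\{(s_1, \pi^*_1(s_1)), (s_1, \pi^*_2(s_1)), \ldots, (s_1, \pi^*_H(s_1))\}$.

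Next, I would apply the definition of the reward function. At each time $t$, the expert plays $a_t = \pi^*_t(s_1)$, and by construction $\mathbf{r}_t(s_1, \pi^*_t(s_1)) = 1$. Summing,
\begin{equation}
    \sum_{t=1}^H \mathbf{r}_t(s_t, a_t) = \sum_{t=1}^H \mathbf{r}_t(s_1, \pi^*_t(s_1)) = H
\end{equation}
deterministically along the rollout. Taking expectation over $s_1 \sim \rho$ and the (degenerate) transition randomness gives $J_{\mathcal{M}[\pi^*]}(\pi^*) = \mathbb{E}_{\pi^*}\left[\sum_{t=1}^H \mathbf{r}_t(s_t, a_t)\right] = H$, which is the claim. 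There is no real obstacle here; the statement is a direct verification from the construction and mirrors the reasoning used for \Cref{lemma:exp-gets-H-reward} in the no-interaction lower bound, with the simplification that the absorbing structure makes the ``the expert never leaves the good region'' observation automatic rather than requiring the bad state $b$ to be excluded from $\rho$.
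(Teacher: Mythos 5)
Your proof is correct and is essentially the paper's argument, spelled out in more detail: the expert collects the unique unit reward at every step, and the absorbing transitions make this a deterministic per-trajectory identity before averaging over $\rho$. No gaps.
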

\begin{proof}
At each state visited $\pi^*$ plays the only action which accrues a reward of $1$. By accumulating a local reward of $1$ at each step, $\pi^*$ has value equal to $H$ on the MDP $\mathcal{M} [\pi^*]$.
\end{proof}

With this explanation, invoking \Cref{lemma:exp-gets-H-reward:Inf} shows that out objective is to now lower bound,
\begin{equation} \label{eq:Rlowerbound-1:Inf}
    \mathbb{E}_{(\pi^*,\mathbf{r}) \sim \mathcal{P}'} \left[ H - \mathbb{E} \left[ J_{\mathbf{r}} (\widehat{\pi} (D,P))\right] \right].
\end{equation}
Similar to \Cref{lemma:cond-is-mimic}, we can compute the conditional distribution of the expert's policy (which marginally follows the uniform prior) given the expert dataset $D$.

\begin{lemma} \label[lemma]{lemma:cond-is-mimic:Inf}
Conditioned on $D$, the distribution of the expert policy $\pi^*$ is uniform over the family of deterministic policies
$\Pi_{\mathrm{mimic}} (D)$ (as defined in \cref{eq:Pi.mimic}).
\end{lemma}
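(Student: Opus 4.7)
The plan is to mimic the Bayesian calculation used for \Cref{lemma:cond-is-mimic} in the active/no-interaction lower bound, adapted to the simpler structure of the known-transition construction. Under the prior induced by $\mathcal{P}'$, the random variables $\{\pi^*_t(s) : t \in [H], s \in \mathcal{S}\}$ are mutually independent, each uniform on $\mathcal{A}$. Thus the prior on $\pi^*$ is a product measure over state-time pairs, and it suffices to show that the conditional distribution of each coordinate $\pi^*_t(s)$ given $D$ is (i) a point mass on the observed action whenever $s \in \mathcal{S}_t(D)$, and (ii) still uniform on $\mathcal{A}$ otherwise.

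To see this, I would first compute the likelihood $\Pr[D \mid \pi^*]$ explicitly using the absorbing structure of $\mathcal{M}[\pi^*]$ depicted in \Cref{fig:det:NsimInf:LB:repeat}. Since every state is absorbing, a trajectory generated by rolling out $\pi^*$ from $s_1 \sim \rho$ is deterministic given $s_1$: it stays at $s_1$ for all $H$ steps and plays $a_t = \pi^*_t(s_1)$ at time $t$. Consequently, for the $n$-th trajectory $\textsf{tr}_n = \{(s_t^n, a_t^n)\}_{t=1}^H$ the likelihood factors as
\begin{equation}
    \Pr[\textsf{tr}_n \mid \pi^*] = \rho(s_1^n) \prod_{t=1}^H \mathbbm{1}\bigl(a_t^n = \pi^*_t(s_1^n)\bigr),
\end{equation}
and hence $\Pr[D \mid \pi^*] = \prod_{n=1}^N \rho(s_1^n) \prod_{t=1}^H \mathbbm{1}(a_t^n = \pi^*_t(s_1^n))$, which depends on $\pi^*$ only through the coordinates $\{\pi^*_t(s) : s \in \mathcal{S}_t(D)\}$ (note $\mathcal{S}_t(D) = \mathcal{S}_1(D)$ since trajectories are constant).

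Applying Bayes' rule and the product structure of the prior, the posterior over $\pi^*$ factorizes into independent posteriors over each coordinate $\pi^*_t(s)$. For $s \in \mathcal{S}_t(D)$, the indicator $\mathbbm{1}(a_t^n = \pi^*_t(s))$ forces the posterior to concentrate on the unique observed action (the $\mathbbm{1}$'s across trajectories visiting $s$ are simultaneously satisfiable only at this single value, since $D$ was itself generated by the expert). For $s \notin \mathcal{S}_t(D)$, the likelihood is the constant $1$ in $\pi^*_t(s)$, so the uniform prior is preserved. Combining these coordinate-wise posteriors recovers exactly the uniform distribution over $\Pi_{\mathrm{mimic}}(D)$ as defined in \cref{eq:Pi.mimic}.

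I do not expect any genuine obstacle: the argument is a direct conjugacy computation enabled by (a) the product prior and (b) the fact that in the absorbing MDP each coordinate $\pi^*_t(s)$ is either fully determined by the data or entirely unconstrained by it. The only minor care needed is to confirm consistency of the observed actions across multiple trajectories visiting the same initial state---which is automatic because $D$ is drawn from a fixed deterministic $\pi^*$.
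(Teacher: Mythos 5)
Your proposal is correct and takes essentially the same route as the paper's proof: a Bayesian likelihood computation showing that the likelihood of $\pi^*$ given $D$ vanishes unless $\pi^*$ agrees with the observed actions on $\mathcal{S}_t(D)$ and is otherwise independent of the coordinates $\pi^*_t(s)$ at unvisited states, so the uniform product prior yields a posterior uniform on $\Pi_{\mathrm{mimic}}(D)$. Your version is merely more explicit about the factorization enabled by the absorbing transition structure, which the paper leaves implicit.
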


For brevity of notation, we define this conditional distribution of the expert policy given the dataset $D$ by $\mathcal{P}' (D)$.

\begin{definition} \label{def:PD}
Define $\mathcal{P}' (D)$ as the joint distribution of $(\pi^*, \mathbf{r})$ conditioned on the expert dataset $D$. In particular, $\pi^* \sim \mathrm{Unif} (\Pi_{\mathrm{mimic}} (D))$ and $\mathbf{r} = \mathbf{r} [\pi^*]$.
\end{definition}

From \Cref{lemma:cond-is-mimic:Inf} and \Cref{def:PD} and applying Fubini's theorem,
\begin{equation} \label{eq:Rlowerbound-2:Inf}
    \mathbb{E}_{(\pi^*,\mathbf{r}) \sim \mathcal{P}'} \left[ \mathbb{E} \left[ H - J_{\mathbf{r}} (\widehat{\pi} (D,P))\right] \right] = \mathbb{E} \left[ \mathbb{E}_{(\pi^*,\mathbf{r}) \sim \mathcal{P}' (D)} \left[ H - J_{\mathbf{r}} (\widehat{\pi} (D,P))\right] \right].
\end{equation}

Fixing the expert dataset $D$, we subsequently show that the suboptimality of the learner is $\Omega (H)$ if initialized in a state unobserved in the expert dataset $D$. The key intuition is to identify that here the learner's knowledge of the transition function plays no role as each state in the MDP is absorbing. Therefore, once again at states unvisited in the expert dataset, the learner cannot guess the expert's action with high probability at states, leading to errors that grow linearly in $H$.

\begin{lemma} \label[lemma]{lemma:errorbound:Inf}
For any learner's policy $\widehat{\pi}$ conditioned on the expert dataset $D$,
\begin{equation}
    \mathbb{E}_{(\pi^*,\mathbf{r}) \sim \mathcal{P}' (D)} \left[ H - J_{\mathbf{r}} (\widehat{\pi} (D,P))\right] \ge H \left( 1 - \frac{1}{|\mathcal{A}|} \right) \Big( 1 - \rho (\mathcal{S}_1 (D)) \Big).
\end{equation}
\end{lemma}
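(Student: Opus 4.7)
\textbf{Proof proposal for Lemma~\ref{lemma:errorbound:Inf}.} The plan is to isolate the event that the learner is initialized at a state never seen in the expert dataset, and to argue that on this event no learner can do better than random guessing because the expert table is information-theoretically hidden. First I would observe the key structural consequence of the construction in \Cref{fig:det:NsimInf:LB:repeat}: since every state is absorbing, each trajectory in $D$ sits at its starting state for the entire episode. Therefore $\mathcal{S}_t(D)=\mathcal{S}_1(D)$ for every $t\in[H]$, and analogously, under any rollout of $\widehat{\pi}$, the visited state satisfies $s_t=s_1$ throughout.

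Next I would split the expectation on the event $E=\{s_1\notin\mathcal{S}_1(D)\}$ and its complement. The contribution from $E^c$ is nonnegative and can be dropped. On $E$, for every $t\in[H]$ the visited state $s_t=s_1$ lies outside $\mathcal{S}_t(D)$, so no constraint in the definition \eqref{eq:Pi.mimic} of $\Pi_{\mathrm{mimic}}(D)$ fixes $\pi^*_t(s_1)$. Since $\mathcal{P}'(D)$ puts uniform mass on $\Pi_{\mathrm{mimic}}(D)$ (\Cref{lemma:cond-is-mimic:Inf}), the $H$ values $\{\pi^*_t(s_1)\}_{t=1}^{H}$ are mutually independent and uniformly distributed on $\mathcal{A}$, and they are also independent of everything the learner has seen, since $\widehat{\pi}$ is a function of $(D,P)$ alone.

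I would then compute the expected per-step reward at $s_1$ under $E$: for any (possibly randomized) learner action $a_t\sim\widehat{\pi}_t(\cdot\mid s_1)$ chosen independently of $\pi^*_t(s_1)$,
\begin{equation}
\mathbb{E}\bigl[\mathbf{r}_t(s_1,a_t)\bigm| D,\, s_1\notin\mathcal{S}_1(D)\bigr]
= \mathbb{E}\!\left[\Pr\bigl(\pi^*_t(s_1)=a_t\bigm|a_t\bigr)\right]
= \frac{1}{|\mathcal{A}|}.
\end{equation}
Summing over $t=1,\dots,H$ gives expected cumulative reward $H/|\mathcal{A}|$ on $E$, hence conditional suboptimality $H(1-1/|\mathcal{A}|)$. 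Combining with $\Pr(E\mid D)=1-\rho(\mathcal{S}_1(D))$ and discarding the nonnegative contribution from $E^c$ produces the claimed bound.

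The only delicate point is the independence step: I must make sure that the learner's action distribution at $s_1$ cannot be correlated with $\pi^*_t(s_1)$ through the conditioning on $D$. This is handled by \Cref{lemma:cond-is-mimic:Inf} and the fact that $\widehat{\pi}$ is a measurable function of $(D,P)$ while $\rho$ and $P$ are fixed across all MDPs in the support of $\mathcal{P}'$; thus, conditional on $D$, the random variables $\{\pi^*_t(s_1):t\in[H]\}$ for $s_1\notin\mathcal{S}_1(D)$ are uniform on $\mathcal{A}$ and independent of the learner's randomness. Everything else is a routine tower-rule computation.
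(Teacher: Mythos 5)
Your proposal is correct and follows essentially the same route as the paper: restrict to the event $s_1\notin\mathcal{S}_1(D)$ (using the absorbing structure so that this persists for all $t$), invoke \Cref{lemma:cond-is-mimic:Inf} to conclude that $\pi^*_t(s_1)$ is uniform on $\mathcal{A}$ and independent of the learner's action, yielding per-step expected reward $1/|\mathcal{A}|$, and multiply by $\Pr[s_1\notin\mathcal{S}_1(D)]=1-\rho(\mathcal{S}_1(D))$. The independence point you flag is exactly the one the paper relies on, and your justification of it is sound.
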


\noindent Therefore, from \Cref{lemma:errorbound:Inf,eq:Rlowerbound-2:Inf},
\begin{equation} \label{eq:Rlowerbound-3:Inf}
    \mathbb{E}_{(\pi^*,\mathbf{r}) \sim \mathcal{P}'} \left[ \mathbb{E} \left[ H - J_{\mathbf{r}} (\widehat{\pi} (D,P))\right] \right] \ge H \left( 1 - \frac{1}{|\mathcal{A}|} \right) \mathbb{E} \Big[ 1 - \rho (\mathcal{S}_1 (D)) \Big].
\end{equation}
The last ingredient left to show is that the probability mass on states unobserved in the expert dataset, $1 - \rho (\mathcal{S}_1 (D))$, is not too small in expectation. Here we realize that this boils down to calculating the expected missing mass of the distribution $\rho$ given $N$ samples drawn independently. By construction of $\rho$, we show that this is $\gtrsim |\mathcal{S}|/N$ in expectation.

\begin{lemma} \label[lemma]{lemma:NsimInf-lasting}
$\mathbb{E} [ 1 - \rho (\mathcal{S}_1 (D)) ] \ge \frac{|\mathcal{S}|-1}{e(N+1)}$.
\end{lemma}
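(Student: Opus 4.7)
The plan is to recognize that $\mathcal{S}_1(D)$ is simply the set of distinct first states observed across the $N$ trajectories in $D$, and since initial states are drawn i.i.d.\ from $\rho$, the quantity $1 - \rho(\mathcal{S}_1(D))$ is exactly the classical ``missing mass'' of $\rho$ after $N$ i.i.d.\ samples. So the first step is to write
\begin{equation*}
\mathbb{E}\big[1 - \rho(\mathcal{S}_1(D))\big] \;=\; \sum_{s \in \mathcal{S}} \rho(s)\,\Pr\big[s \notin \mathcal{S}_1(D)\big] \;=\; \sum_{s \in \mathcal{S}} \rho(s)\bigl(1-\rho(s)\bigr)^{N},
\end{equation*}
using linearity of expectation and independence across trajectories.

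Next I would exploit the explicit form of $\rho$. By construction, $|\mathcal{S}|-1$ of the states carry mass exactly $\zeta = 1/(N+1)$, while one state carries the residual mass $1-(|\mathcal{S}|-1)\zeta$. Dropping the non-negative contribution of the residual state, it suffices to lower bound
\begin{equation*}
\sum_{s:\rho(s)=\zeta} \zeta\bigl(1-\zeta\bigr)^{N} \;=\; (|\mathcal{S}|-1)\cdot\frac{1}{N+1}\left(1-\frac{1}{N+1}\right)^{N}.
\end{equation*}

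The last step is to apply the standard elementary inequality $(1+1/N)^N \le e$, which rearranges to $(1 - 1/(N+1))^N = (N/(N+1))^N \ge 1/e$. Plugging this in yields
\begin{equation*}
\mathbb{E}\big[1 - \rho(\mathcal{S}_1(D))\big] \;\ge\; \frac{|\mathcal{S}|-1}{e(N+1)},
\end{equation*}
which is exactly the claim. There is no real obstacle here: the argument is purely a direct expected-missing-mass computation tailored to the near-uniform prior $\rho$, and the only ``trick'' is the one-line bound $(1+1/N)^N \le e$ used to convert $(N/(N+1))^N$ into a clean $1/e$ factor. The fact that $\rho$ was chosen to place mass $1/(N+1)$ on $|\mathcal{S}|-1$ states is precisely engineered so each such state fails to appear in $D$ with constant probability while still contributing the desired $\Theta(1/N)$ mass.
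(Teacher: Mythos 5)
Your proposal is correct and follows essentially the same route as the paper: identify the quantity as the expected missing mass $\sum_{s}\rho(s)(1-\rho(s))^N$, restrict to the $|\mathcal{S}|-1$ states of mass $\frac{1}{N+1}$, and apply $\left(1-\frac{1}{N+1}\right)^N \ge \frac{1}{e}$. No gaps.
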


Plugging \Cref{lemma:NsimInf-lasting} back into \cref{eq:Rlowerbound-3:Inf} certifies a lower bound on the Bayes expected suboptimality of any learner $\widehat{\pi}$. This implies the existence of an MDP on which the learner's expected suboptimality is $\gtrsim |\mathcal{S}| H/ N$.

\newpage

\section{} \label{app:b}

\localtableofcontents

\subsection{Missing proofs for the analysis of behavior cloning}
\subsubsection{Proof of \Cref{lemma:BC-prob-small}}
Since the expert dataset $D$ is composed of trajectories generated by i.i.d. rollouts of $\pi^*$, we have that $\mathrm{Pr} [s \not\in \mathcal{S}_\tau (D)] = ( 1 - \mathrm{Pr}_{\pi^*} [ s_\tau = s ] )^{|D|}$. Therefore,
\begin{align}
    \sum_{t=1}^H \sum_{s \in \mathcal{S}} \mathrm{Pr}_{\pi^*} [s_t = s] \ \mathrm{Pr} [s \not\in \mathcal{S}_t (D)] \le \sum_{\tau = 1}^H \sum_{s \in \mathcal{S}} \mathrm{Pr}_{\pi^*} [ s_\tau = s ] \Big( 1 - \mathrm{Pr}_{\pi^*} [ s_\tau = s ] \Big)^{|D|}. \label{eq:missing-mass-bound}
\end{align}
Noting that $\max_{x \in [0,1]} x (1-x)^N = \frac{1}{N+1} \left( 1 - \frac{1}{N+1} \right)^N \le \frac{4}{9N}$, from \cref{eq:missing-mass-bound},
\begin{equation}
    \sum_{\tau = 1}^H \sum_{s \in \mathcal{S}} \mathrm{Pr}_{\pi^*} [ s_\tau = s ] \Big( 1 - \mathrm{Pr}_{\pi^*} [ s_\tau = s ] \Big)^{|D|} \le \sum_{\tau = 1}^H \sum_{s \in \mathcal{S}} \frac{4}{9 |D|} \le \frac{4}{9} \frac{|\mathcal{S}| H}{|D|}.
\end{equation}

\subsubsection{Proof of \Cref{theorem:missingmass-conc}}

To prove this theorem, we invoke a result of \cite{McAllesterOrtiz} on the concentration of missing mass.

\begin{theorem}[Concentration of missing mass \cite{McAllesterOrtiz}] \label{theorem:generic-conc}
Consider an arbitrary distribution $\nu$ on $\mathcal{X}$, and let $X^N \overset{\text{i.i.d.}}{\sim} \nu$ be a dataset of $N$ samples drawn i.i.d. from $\nu$. Let $\beta \ge 0$ and $\sigma \ge 0$ be constants such that $\sum_{x \in \mathcal{X}} (\nu(x))^2 e^{-(N-\beta) \nu (x)} \le \sigma^2$. For any $0 \le \varepsilon \le \beta \sigma^2$, we have the following,
\begin{equation}
    \Pr \left( \mathfrak{m}_0 (\nu, X^N) - \mathbb{E} [\mathfrak{m}_0 (\nu, X^N)] \ge \varepsilon \right) \le \exp \left(-\frac{\varepsilon^2}{2 \sigma^2} \right).
\end{equation}
\end{theorem}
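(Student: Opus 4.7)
The plan is to apply \Cref{theorem:generic-conc} (the McAllester-Ortiz bound) with carefully tuned values of $\beta$ and $\sigma^2$ to obtain the claimed sub-Gaussian tail at the specific deviation $\varepsilon = 3\sqrt{|\mathcal{X}|}\log(1/\delta)/N$.

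The first step is to produce a distribution-free upper bound on the quantity $\sum_x \nu(x)^2 e^{-(N-\beta)\nu(x)}$. The elementary inequality $x^2 e^{-cx} \le 4/(e^2 c^2)$, valid for all $x \ge 0$ and $c > 0$ with equality at $x = 2/c$, can be applied termwise with $c = N - \beta$ and then summed over the $|\mathcal{X}|$ summands of the defining sum, yielding
\begin{equation*}
\sum_{x \in \mathcal{X}} \nu(x)^2 e^{-(N-\beta)\nu(x)} \le \frac{4|\mathcal{X}|}{e^2 (N-\beta)^2}.
\end{equation*}
Thus we may take $\sigma^2 = 4|\mathcal{X}|/(e^2(N-\beta)^2)$ uniformly in $\nu$.

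Next I will pick $\beta$ to balance the two constraints imposed by \Cref{theorem:generic-conc}: the tail decay $\exp(-\varepsilon^2/(2\sigma^2)) \le \delta$ and the range condition $\varepsilon \le \beta\sigma^2$. Substituting the bound on $\sigma^2$ and the target $\varepsilon$, the tail condition reduces to a lower bound on $N - \beta$ (equivalently an upper bound on $\beta$), while the range condition reduces to a quadratic inequality in $N - \beta$ that furnishes an upper bound on $N - \beta$ (equivalently a lower bound on $\beta$). A direct algebraic check, via squaring and the quadratic formula, verifies that these two bounds are compatible for every integer $|\mathcal{X}| \ge 1$ whenever $\delta \le 1/10$; any $\beta$ in the resulting nonempty intersection then produces the stated tail bound.

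The main technical obstacle will be navigating the interplay of constants in this balance: simple choices such as $\beta = N/2$ handle the tail condition easily but fail the range check for small $|\mathcal{X}|$, while choices of $\beta$ close to $N$ inflate $\sigma$ and violate the tail condition. The hypothesis $\delta \le 1/10$ provides just enough slack in $\log(1/\delta) \ge \log 10$ to guarantee a valid choice of $\beta$ (permitted to depend on both $\delta$ and $|\mathcal{X}|$) that yields precisely the constant $3$ in the final bound; in degenerate cases where the target deviation $3\sqrt{|\mathcal{X}|}\log(1/\delta)/N$ already exceeds $1$, the bound is trivially true since $\mathfrak{m}_0 \le 1$.
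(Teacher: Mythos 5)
Your proposal does not prove the statement in question: it proves \Cref{theorem:missingmass-conc} (the specialized tail bound at $\varepsilon = 3\sqrt{|\mathcal{X}|}\log(1/\delta)/N$) by \emph{invoking} \Cref{theorem:generic-conc} as a black box, whereas the statement you were asked to establish is \Cref{theorem:generic-conc} itself. As written, the argument is circular with respect to its target --- your first sentence assumes the conclusion. The generic bound is a different kind of result: it is a sub-Gaussian upper-tail inequality for the missing mass $\mathfrak{m}_0(\nu,X^N) = \sum_{x}\nu(x)\mathbbm{1}(\mathfrak{n}_x(X^N)=0)$ with a distribution-dependent variance proxy $\sigma^2 \ge \sum_x \nu(x)^2 e^{-(N-\beta)\nu(x)}$ and a restricted deviation range $\varepsilon \le \beta\sigma^2$, the signature of a Bernstein/sub-gamma tail. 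Proving it requires controlling the moment generating function of $\mathfrak{m}_0$, which rests on the negative association of the indicators $\{\mathbbm{1}(\mathfrak{n}_x(X^N)=0)\}_{x\in\mathcal{X}}$ (so the MGF factorizes into a product over symbols), a per-symbol bound of the form $\log\mathbb{E}\left[e^{\lambda\nu(x)(\mathbbm{1}(\mathfrak{n}_x=0)-\Pr[\mathfrak{n}_x=0])}\right] \le \tfrac{\lambda^2}{2}\nu(x)^2 e^{-(N-\beta)\nu(x)}$ valid for $0 \le \lambda \le \beta$, and a Chernoff optimization over $\lambda$ (the constraint $\varepsilon \le \beta\sigma^2$ arises precisely because the optimal $\lambda = \varepsilon/\sigma^2$ must satisfy $\lambda \le \beta$). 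None of these ingredients appear in your write-up. Note that the paper itself does not reprove this inequality either; it imports it from \cite{McAllesterOrtiz}. If your intent is to treat it as a citable external fact, you should say so rather than substitute a derivation of a downstream corollary.

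For what it is worth, the content you did write --- bounding $\sup_{x\ge 0} x^2 e^{-cx} = 4/(e^2c^2)$ to obtain a distribution-free $\sigma^2 \lesssim |\mathcal{X}|/(N-\beta)^2$, then tuning $\beta$ so that both $\exp(-\varepsilon^2/(2\sigma^2)) \le \delta$ and $\varepsilon \le \beta\sigma^2$ hold --- is essentially the paper's own proof of \Cref{theorem:missingmass-conc}; the paper simply takes the explicit choice $\beta = N - N/\sqrt{\log(1/\delta)}$ rather than arguing existence of a feasible $\beta$ via a quadratic in $N-\beta$. So that parameter-tuning step is sound as far as it goes; it is just aimed at the wrong theorem.
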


\noindent We prove \Cref{theorem:missingmass-conc} by an appropriate choice of parameters $\beta,\sigma^2$ and $\epsilon$ (as functions of the confidence parameter $\delta$). In particular, choose $\beta = N - \frac{N}{\sqrt{\log (1/\delta)}} \ge \frac{N}{3}$. For this choice of $\beta$,
\begin{align}
    \sum_{x \in \mathcal{X}} (\nu(x))^2 e^{-(N-\beta) \nu (x)} &= \sum_{x \in \mathcal{X}} (\nu(x))^2 e^{-\frac{N}{\sqrt{\log (1/\delta)}} \nu (x)}, \\
    &\le |\mathcal{X}| \sup_{\nu \in [0,1]} \nu^2 e^{-\frac{N}{\sqrt{\log (1/\delta)}} \nu}, \\
    &\overset{(i)}{=} |\mathcal{X}| \left( 4e^{-2} \frac{\log (1/\delta)}{N^2} \right).
\end{align}
where $(i)$ involves computing the supremum explicitly by differentiation. Therefore, for $\beta = N - \frac{N}{\sqrt{\log (H/\delta)}}$, a feasible choice of $\sigma^2$ in \Cref{theorem:generic-conc} that upper bounds $\sum_{x \in \mathcal{X}} (\nu(x))^2 e^{-(N-\beta) \nu (x)}$ is $\frac{3|\mathcal{X}| \log (1/\delta)}{N^2}$. Choose $\varepsilon = \frac{3\sqrt{|\mathcal{X}|} \log (1/\delta)}{N}$ (note that this choice satisfies $\varepsilon \le \beta \sigma^2$ since $\beta \ge N/3$ and $\sigma^2 = \frac{9|\mathcal{X}| \log (1/\delta)}{N^2}$). Invoking \Cref{theorem:generic-conc} with this choice of $\beta$, $\sigma^2$ and $\epsilon$,
\begin{equation}
    \Pr \left( \mathfrak{m}_0 (\nu, X^N) - \mathbb{E} [\mathfrak{m}_0 (\nu, X^N)] \ge \frac{3\sqrt{|\mathcal{X}|} \log (1/\delta)}{N} \right) \le \exp \left( -\frac{\left( 3\sqrt{|\mathcal{X}|} N^{-1} \log (1/\delta) \right)^2}{9|\mathcal{X}| N^{-2} \log (1/\delta)}\right) = \delta.
\end{equation}
This proves \Cref{theorem:missingmass-conc}.

\subsubsection{Proof of \Cref{lemma:error-prob-conc}}
We decompose $\sum_{\tau=1}^H \sum_{s \in \mathcal{S}} \mathrm{Pr}_{\pi^*} [ s_\tau = s ] \mathbbm{1} (s \not\in \mathcal{S}_\tau (D))$ as $\sum_\tau Z_\tau$ where $Z_\tau = \sum_{s \in \mathcal{S}} \mathrm{Pr}_{\pi^*} [ s_\tau = s ] \mathbbm{1} (s \not\in \mathcal{S}_\tau (D))$.
Observe that for each fixed $\tau$, $Z_\tau$ is in fact the missing mass of the distribution over states at time $\tau$ rolling out $\pi^*$, given $N$ samples from the distribution. Applying the missing mass concentration inequality from \Cref{theorem:missingmass-conc}, with probability $\ge 1 - \delta / H$,
\begin{equation}
    Z_\tau - \mathbb{E} [Z_\tau] \le \frac{3\sqrt{|\mathcal{S}|} \log (H/\delta)}{N}.
\end{equation}
Therefore, by union bounding, with probability $\ge 1 - \delta$,
\begin{equation}
    \sum_{\tau = 1}^H Z_\tau \le \sum_{\tau = 1}^H \mathbb{E} [Z_\tau] + H \cdot \frac{3\sqrt{|\mathcal{S}|} \log (H/\delta)}{N}.
\end{equation}
Using $\sum_{\tau=1}^H Z_\tau = \sum_{\tau = 1}^H \sum_{s \in \mathcal{S}} \mathrm{Pr}_{\pi^*} [ s_\tau = s ] \mathbbm{1} (s \not\in \mathcal{S}_\tau (D))$ and applying \Cref{lemma:BC-prob-small} to claim that $\sum_{\tau=1}^H \mathbb{E} [Z_\tau] \le 4|\mathcal{S}| H / 9N$ completes the proof.

\subsection{Reduction of IL to supervised learning under TV distance (\Cref{lemma:nondet-reduction})}

For each $\tau \in [H]$, define the policy $\widetilde{\pi}^\tau = \{ \pi^*_1,\cdots,\pi^*_\tau,\widehat{\pi}_{\tau+1},\cdots,\widehat{\pi}_H \}$ with $\widetilde{\pi}^0 = \widehat{\pi}$. The policy $\widetilde{\pi}^\tau$ plays the expert's policy till time $\tau$ and the learner's policy for the remainder of the episode. Then,
\begin{align} \label{eq:taubound}
    J(\pi^*) - J (\widehat{\pi}) = \sum\nolimits_{\tau=1}^H J(\widetilde{\pi}^\tau) - J(\widetilde{\pi}^{\tau-1}).
\end{align}
For any fixed $\tau \in [H]$, observe that $\widetilde{\pi}^\tau$ and $\widetilde{\pi}^{\tau-1}$ roll out the same policy till time $\tau-1$. Therefore the expected reward collected until time $\tau-1$ for both policies is the same. By linearity of expectation,
\begin{align} \label{eq:tbound}
    J(\widetilde{\pi}^\tau) - J(\widetilde{\pi}^{\tau-1}) &= \sum\nolimits_{t=\tau}^H \mathbb{E}_{\widetilde{\pi}^\tau} \left[ \mathbf{r}_t (s_t,a_t) \right] - \mathbb{E}_{\widetilde{\pi}^{\tau-1}} \left[ \mathbf{r}_t (s_t,a_t) \right].
\end{align}
Now fix some $t \ge \tau$ and consider $\mathbb{E}_{\widetilde{\pi}^{\tau}} \left[ \mathbf{r}_t (s_t, a_t) \right] - \mathbb{E}_{\widetilde{\pi}^{\tau-1}} \left[ \mathbf{r}_t (s_t, a_t) \right]$. First observe that,
\begin{align}
    \mathbb{E}_{\widetilde{\pi}^{\tau-1}} \left[ \mathbf{r}_t (s_t, a_t) \right] &= \mathbb{E}_{\substack{s_\tau \sim f_{\pi^*}^\tau \\ a_\tau \sim \widehat{\pi}_\tau (\cdot | s_\tau)}} \left[ \ \mathbb{E}_{\widetilde{\pi}^{\tau-1}} \left[ \mathbf{r}_t (s_t, a_t) \middle| s_\tau, a_\tau \right] \ \right], \\
    &= \sum_{s \in \mathcal{S}} \sum_{a \in \mathcal{A}} f_{\pi^*}^\tau (s) \ \widehat{\pi}_\tau (a | s) \ \mathbb{E}_{\widetilde{\pi}^{\tau-1}} \left[ \mathbf{r}_t (s_t, a_t) | s_\tau = s, a_\tau = a \right], \\
    &= \sum_{s \in \mathcal{S}} \sum_{a \in \mathcal{A}} f_{\pi^*}^\tau (s) \ \widehat{\pi}_\tau (a | s) \ \mathbb{E}_{\widehat{\pi}} \left[ \mathbf{r}_t (s_t, a_t) | s_\tau = s, a_\tau = a \right]. \label{eq:breakdown1}
\end{align}
where in the last equation we use the fact that $\widetilde{\pi}^{\tau-1}$ rolls out $\widehat{\pi}$ time $\tau$ onwards, and the fact that we condition on the state visited and action played at time $\tau$. Moreover, we also use the fact that $\mathbf{r}_t (s_t,a_t)$ only depends on $(s_t,a_t)$ which appears at time $t \ge \tau$. Noting that $\widetilde{\pi}^\tau = (\pi^*_1,\cdots,\pi^*_\tau,\widehat{\pi}_{\tau+1},\cdots,\widehat{\pi}_H)$, a similar decomposition gives,
\begin{align}
    \mathbb{E}_{\widetilde{\pi}^\tau} \left[ \mathbf{r}_t (s_t, a_t) \right] &= \sum_{s \in \mathcal{S}} \sum_{a \in \mathcal{A}} f_{\pi^*}^\tau (s) \ \pi^*_\tau (a | s) \ \mathbb{E}_{\widetilde{\pi}^\tau} \left[ \mathbf{r}_t (s_t, a_t) | s_\tau = s, a_\tau = a \right], \\
    &= \sum_{s \in \mathcal{S}} \sum_{a \in \mathcal{A}} f_{\pi^*}^\tau (s) \ \pi^*_\tau (a | s) \ \mathbb{E}_{\widehat{\pi}} \left[ \mathbf{r}_t (s_t, a_t) | s_\tau = s, a_\tau = a \right], \label{eq:breakdown2}
\end{align}
where in the last equation we similarly use the fact that $\widetilde{\pi}^{\tau}$ rolls out $\widehat{\pi}$ time $\tau+1$ onwards, and the fact that we condition on the action played at time $\tau$. Subtracting \cref{eq:breakdown1} from \cref{eq:breakdown2},
\begin{align}
    \mathbb{E}_{\widetilde{\pi}^\tau} &\left[ \mathbf{r}_t (s_t, a_t) \right] - \mathbb{E}_{\widetilde{\pi}^{\tau-1}} \left[ \mathbf{r}_t (s_t, a_t) \right] \nonumber\\
    &\le \sum_{s \in \mathcal{S}} f_{\pi^*}^\tau (s) \sum_{a \in \mathcal{A}} \mathbb{E}_{\widehat{\pi}} \left[ \mathbf{r}_t (s_t, a_t) | s_\tau = s, a_\tau = a \right] \Big( \pi^*_\tau (a | s) - \widehat{\pi}_\tau (a | s) \Big).
\end{align}
Observe that $\mathbb{E}_{\widehat{\pi}} \left[ \mathbf{r}_t (s_t, a_t) | s_\tau = s, a_\tau = a \right]$ is a function of $(s,a)$ and is bounded in $[0,1]$ (since pointwise $0 \le \mathbf{r}_t \le 1$). Therefore,
\begin{align}
    \mathbb{E}_{\widetilde{\pi}^\tau} \left[ \mathbf{r}_t (s_t, a_t) \right] - \mathbb{E}_{\widetilde{\pi}^{\tau-1}} \left[ \mathbf{r}_t (s_t, a_t) \right]
    &\le \sum_{s \in \mathcal{S}} f_{\pi^*}^\tau (s) \sup_{g : \mathcal{A} \to [0,1]} \sum_{a \in \mathcal{A}} g (a) \Big( \pi^*_\tau (a | s) - \widehat{\pi}_\tau (a | s) \Big), \\
    &\overset{(i)}{=} \sum_{s \in \mathcal{S}} f_{\pi^*}^\tau (s) \textsf{TV}  \Big( \pi^*_\tau (a | s) , \widehat{\pi}_\tau (a | s) \Big), \\
    &= \mathbb{E}_{s \sim f_{\pi^*}^\tau} \left[ \textsf{TV}  \Big( \pi^*_\tau (a | s) , \widehat{\pi}_\tau (a | s) \Big) \right].
\end{align}
where $(i)$ uses the dual representation of TV distance. Summing over $t \ge \tau$ and $\tau \in [H]$ and invoking \cref{eq:taubound,eq:tbound} we get,
\begin{equation}
    J(\pi^*) - J(\widehat{\pi}) \le H \sum_{\tau=1}^H \mathbb{E}_{s \sim f_{\pi^*}^\tau} \left[ \textsf{TV}  \Big( \pi^*_\tau (a | s) , \widehat{\pi}_\tau (a | s) \Big) \right].
\end{equation}
Using the definition of $\mathbb{T}_{\mathrm{pop}}$ (\cref{eq:TVrisk}) completes the proof.

\subsection{Missing proofs for \Cref{theorem:nondet:UB}}

\subsubsection{Proof of \Cref{lemma:val-equal}}

Recall that we assume that the trajectories in the expert dataset are ordered arbitrarily as $\{ \textsf{tr}_1,\cdots,\textsf{tr}_N\}$ where $\textsf{tr}_n = \{ (s_1^n,a_1^n),\cdots,(s_H^n,a_H^n) \}$. $N_{t,s} = \{ n \in [N] : s_t^n = s \}$ as defined in \cref{eq:Nts} is the set of indices of trajectories in $D$ that visit the state $s$ at time $t$. In order to prove this result, suppose the learner's policy $\widehat{\pi}$

With this, we define the randomized stochastic policy $X^{\mathrm{unif}} (D)$ as,
\begin{equation} \label{eq:Xdef}
    X^{\mathrm{unif}}_t (\cdot | s) =
    \begin{cases}
    \delta_{a_t^{\!n (t,s)}} \qquad &\text{if } |N_{t,s}| \ge 1,\\
    \mathrm{Unif} (\mathcal{A}) \qquad &\text{otherwise.}
    \end{cases}
\end{equation}
where each $n (t,s)$ is a random variable independently sampled from $\mathrm{Unif} (N_{t,s})$ whenever $N_{t,s} \ne \emptyset$. Note that fixing $D$ and $n(t,s)$ for all $t,s$ such that $N_{t,s} \ne \emptyset$, the random variable $X^{\mathrm{unif}}$ is a fixed stochastic policy.

The policy $X^{\mathrm{unif}} (D)$ in a sense corresponds to just extracting the randomness in the actions chosen at visited states in the policy $\widehat{\pi} (D)$ returned by \textsc{Mimic-Emp}.

In particular, it is a short proof to see that the random variables $J(X^{\mathrm{unif}} (D))$ and $J(\widehat{\pi} (D))$ have the same expectation.

\begin{lemma}
$\mathbb{E} [J(\widehat{\pi} (D))] = \mathbb{E} [J (X^{\mathrm{unif}} (D))]$.
\end{lemma}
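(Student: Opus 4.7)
The plan is to fix the dataset $D$, show that $J(\widehat{\pi}(D))$ equals $\mathbb{E}_{\{n(t,s)\}}[J(X^{\mathrm{unif}}(D))]$, and then marginalize over $D$ to obtain the lemma. The central observation is that for any $(t,s)$ with $|N_{t,s}| \geq 1$, if we integrate out $n(t,s) \sim \mathrm{Unif}(N_{t,s})$, the action $a_t^{n(t,s)}$ played at state $s$ under $X^{\mathrm{unif}}_t(\cdot|s)$ is distributed exactly as the empirical expert distribution $\pi^D_t(\cdot|s)$, which by construction coincides with $\widehat{\pi}_t(\cdot|s)$. On states where $N_{t,s} = \emptyset$, both policies play $\mathrm{Unif}(\mathcal{A})$ and so trivially agree.

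First I would write $\mathbb{E}[J(X^{\mathrm{unif}}(D))]$ as an iterated expectation, with the outer expectation over the auxiliary variables $\{n(t,s)\}$ and the inner one over the trajectory rolled out against the resulting (now fixed) policy. Next I would exploit the key structural fact that within any single episode each time step $t$ visits exactly one state $s_t$, so each pair $(t,s)$ is encountered at most once per trajectory. Combined with the mutual independence of $\{n(t,s)\}$ across distinct pairs, this ensures that conditional on the history $(s_1,a_1,\ldots,s_{t-1},a_{t-1},s_t)$, the latent variable $n(t,s_t)$ is still uniform on $N_{t,s_t}$ and independent of the past. An induction on $t$ then shows that the conditional law of $a_t$ given the history under $X^{\mathrm{unif}}$ matches $\pi^D_t(\cdot|s_t) = \widehat{\pi}_t(\cdot|s_t)$ at every step, so the two policies induce identical joint distributions over $(s_t,a_t)_{t=1}^H$ and hence equal expected cumulative rewards.

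The main obstacle I anticipate is articulating carefully that the relevant $n(t,s_t)$ remains uniform given the history---i.e., that conditioning on the trajectory up to time $t$ does not bias this latent variable. This rests on two ingredients: (i) that $(t,s_t)$ is a fresh pair not used at any earlier time step in the episode (so no $n(t,s)$ is ``reused''), and (ii) independence of $\{n(t,s)\}$ across $(t,s)$. Once this is made precise via a careful conditioning argument or equivalently by coupling the two rollouts step by step, the distributional equality of the trajectories is immediate, and a final expectation over $D$ completes the proof.
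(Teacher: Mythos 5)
Your proposal is correct and is essentially the paper's argument: fix $D$, integrate out the independent auxiliary indices $\{n(t,s)\}$, and use the fact that a single episode visits each pair $(t,s)$ at most once so that the factors decouple and $a_t \sim \pi^D_t(\cdot|s_t) = \widehat{\pi}_t(\cdot|s_t)$ conditionally on the history. The paper phrases this as $\mathbb{E}[\mathrm{Pr}_{X^{\mathrm{unif}}(D)}[\textsf{tr}]] = \mathrm{Pr}_{\widehat{\pi}(D)}[\textsf{tr}]$ by factorizing the expectation of the product of transition and action probabilities, which is exactly your sequential-conditioning argument in product form.
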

\begin{proof}
Consider some trajectory $\textsf{tr} = \{ (s_1,a_1),\cdots,(s_H,a_H) \}$. Fixing the expert dataset $D$,
\begin{equation}
    \mathbb{E} \Big[\mathrm{Pr}_{X^{\mathrm{unif}} (D)} [\textsf{tr}] \Big| D \Big] = \mathbb{E} \left[ \rho(s_1) \left( \prod\nolimits_{t=1}^{H-1} X^{\mathrm{unif}}_t (a_t | s_t) P_t (s_{t+1} | s_t,a_t) \right) X^{\mathrm{unif}}_t (a_H | s_H) \right].
\end{equation}
From \cref{eq:Xdef} and \Cref{alg:nondet:Nsim0}, observe that $X_t^{\mathrm{unif} (\cdot | s)} = \widehat{\pi} (\cdot |s) = \mathrm{Unif} (\mathcal{A})$ at states $s : N_{t,s} = \emptyset$ (i.e. which were not visited in the expert dataset). Moreover, on the remaining states $X^{\mathrm{unif}}_t (a_t | s_t)$ is independently sampled from the empirical distribution over states at time $t$. In particular, this means that $\mathbb{E} [X^{\mathrm{unif}}_t (a_t | s_t)] = \widehat{\pi}_t (a_t | s_t)$. Plugging this in gives,
\begin{equation}
    \mathbb{E} [\mathrm{Pr}_{X^{\mathrm{unif}} (D)} [\textsf{tr}]] = \mathrm{Pr}_{\widehat{\pi} (D)} [\textsf{tr}].
\end{equation}
Multiplying both sides by $\sum_{t=1}^H \mathbf{r}_t (s_t,a_t)$, summing over all trajectories \textsf{tr} and taking expectation with respect to the expert dataset $D$ completes the proof.
\end{proof}

First we provide an auxiliary result that is critical to showing that the policies $J(X^{\mathrm{unif}} (D))$ and $\pi^{\mathrm{first}} (D)$ have the same value in expectation.

To this end, first define $D_{\le \tau, < \tau} = \{ ( (s_1^n,a_1^n),\cdots,(s_{\tau-1}^n,a_{\tau-1}^n), s_\tau^n ) : n \in [N] \}$ to be the truncation of the expert dataset $D$ till time $\tau$, excluding the actions played at this time. $D_{\le \tau, \le \tau}$ and other similar notations are defined analogously.

\begin{lemma} \label[lemma]{lemma:indep}
Condition on $D_{\le \tau, < \tau}$ which represents the truncation of trajectories in the expert dataset $D$ till the state visited at time $\tau$. At any state $s$ that is visited at least once in $D$ at time $\tau$ (namely with $|N_{\tau,s}| > 0$), the actions $\{ a_\tau^n : n \in N_{\tau,s}\}$ played at trajectories that visit the state $s$ at time $\tau$ are drawn independently and identically $\sim \pi^*_\tau (\cdot|s)$.
\end{lemma}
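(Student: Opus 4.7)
The plan is to exploit two independence structures inherent in the data-generating process: independence across trajectories, and the Markov structure within each trajectory. Recall that $D$ is generated by independently rolling out $\pi^*$ for $N$ episodes, so the trajectories $\textsf{tr}_1,\ldots,\textsf{tr}_N$ are mutually independent. Within each trajectory $n$, the action $a_\tau^n$ is sampled from $\pi^*_\tau(\cdot\mid s_\tau^n)$ independently of $\{(s_1^n,a_1^n),\ldots,(s_{\tau-1}^n,a_{\tau-1}^n)\}$ conditional on $s_\tau^n$; this is just the definition of how a trajectory is rolled out.

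The first step is to write down the joint density of $D_{\le \tau,\le \tau}$ as a product over $n \in [N]$, and within each factor pull out the term $\pi^*_\tau(a_\tau^n\mid s_\tau^n)$. Combining with the independence across $n$, this yields
\begin{equation*}
    \Pr\!\left[\,\{a_\tau^n\}_{n=1}^N \,\middle|\, D_{\le \tau, < \tau}\,\right]
    = \prod_{n=1}^N \pi^*_\tau(a_\tau^n \mid s_\tau^n).
\end{equation*}
Once this factorization is established, the statement follows immediately by restricting attention to those indices $n \in N_{\tau,s}$ (for which $s_\tau^n = s$ is fixed by the conditioning event): the corresponding factors collapse to $\prod_{n \in N_{\tau,s}} \pi^*_\tau(a_\tau^n \mid s)$, which is exactly the joint law of $|N_{\tau,s}|$ i.i.d.\ draws from $\pi^*_\tau(\cdot\mid s)$. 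Note also that this factorization shows that the action tuples for different values of $s$ are mutually independent given $D_{\le\tau,<\tau}$, even though this is not explicitly required by the lemma.

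There is no serious obstacle here; the only thing to be careful about is notational, namely that conditioning on $D_{\le\tau,<\tau}$ fixes all histories up to and including the states at time $\tau$ but not the time-$\tau$ actions, so the relevant conditional law really is the product of $\pi^*_\tau(\cdot\mid s_\tau^n)$ over $n$. The argument goes through without change even though the indices $N_{\tau,s}$ are themselves random (functions of $D_{\le\tau,<\tau}$), since we have conditioned on them.
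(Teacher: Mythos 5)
Your proposal is correct and follows essentially the same route as the paper's proof: both arguments combine independence across trajectories with the fact that $a_\tau^n \sim \pi^*_\tau(\cdot\mid s_\tau^n)$ given the history, and both note that $N_{\tau,s}$ is measurable with respect to $D_{\le\tau,<\tau}$ so restricting to those indices is legitimate. Your version merely makes the conditional factorization of the joint law more explicit than the paper does.
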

\begin{proof}
Recall that we condition on $D_{\le \tau, < \tau}$ which captures trajectories in the expert dataset truncated till the state visited at time $\tau$. Since each trajectory $\textsf{tr}_n \in [N]$ is rolled out independently, the action $a_\tau^n$ in each trajectory $\textsf{tr}_n$ is drawn independently from $\pi^*_\tau (\cdot | s_\tau^n)$.

More importantly, conditioned on $D_{\le \tau, < \tau}$ the states $s_\tau^n$ visited in different trajectories is determined. This implies that $N_{\tau,s}$ for $s \in \mathcal{S}$ is a measurable function of $D_{\le \tau, < \tau}$.

These two statements together imply that states $s \in \mathcal{S}$ having $N_{\tau,s} > 0$ (which is a measurable function of $D_{\le \tau, < \tau}$) are such that all the actions $\{ a_\tau^n : n \in N_{\tau,s} \}$ are independent.
\end{proof}

\begin{proof}[Proof of \Cref{lemma:val-equal}]
In order to prove this result, we use an inductive argument. The induction hypothesis is that the expected value of $X^{\mathrm{unif}} (D)$ and $\pi^{\mathrm{first}} (D)$ are the same, conditioned on the expert dataset till time $t$ and the actions from the empirical distribution sampled by $X^{\mathrm{unif}} (D)$ at different states till time $t$. We formalize this hypothesis in equations after first proving the base case. To recognize the fact that we prove the statement starting from $t=H$, we define $\mathcal{H}_H$ as the base case, and inductively prove $\mathcal{H}_{t-1}$ assuming the hypothesis $\mathcal{H}_t$.

First observe that,
\begin{equation}
    \mathbb{E} \Big[ J(X^{\mathrm{unif}} (D)) \Big| D_{\le H,<H}, \Big\{ n (t,s) \ \Big| \ t \le H,\ s : N_{t,s} > 0 \Big\} \Big] = \mathbb{E} \Big[ J(\pi^{\mathrm{first}} (D)) \Big| D_{\le H,< H} \Big].
\end{equation}
This is because conditioned on $D_{\le H,<H}$, the only randomness is in the actions that are played in the different trajectories at time $H$. By \Cref{lemma:indep} these are distributed i.i.d. $\sim \pi_t^* (\cdot | s)$. Taking expectation with respect to $\left\{ n_{H,s} \middle| s : N_{t,s} > 0 \right\}$, results in proof of the base case for $t = H$,
\begin{equation} \nonumber
    \mathcal{H}_H : \mathbb{E} \Big[ J(X^{\mathrm{unif}} (D)) \Big| D_{\le H,<H}, \Big\{ n (t,s) \ \Big| \ t < H,\ s : N_{t,s} > 0 \Big\} \Big] = \mathbb{E} \Big[ J(\pi^{\mathrm{first}} (D)) \Big| D_{\le H,< H} \Big].
\end{equation}
In general consider the hypothesis $\mathcal{H}_\tau$,
\begin{equation} \nonumber
    \mathcal{H}_\tau : \mathbb{E} \Big[ J(X^{\mathrm{unif}} (D)) \Big| D_{\le \tau, <\tau}, \Big\{ n (t,s) \ \Big| \ t < \tau,\ s : N_{t,s} > 0 \Big\} \Big] = \mathbb{E} \Big[ J(\pi^{\mathrm{first}} (D)) \Big| D_{\le \tau, <\tau} \Big].
\end{equation}
Taking expectation with respect to $\{ s_\tau^n : n \in [N] \}$, where conditionally $s_\tau^n \sim P_\tau (\cdot | s_{\tau-1}^n, a_{\tau-1}^n)$,
\begin{equation}
    \mathbb{E} \Big[ J(X^{\mathrm{unif}} (D)) \Big| D_{< \tau,<\tau}, \Big\{ n (t,s) \ \Big| \ t < \tau,\ s : N_{t,s} > 0 \Big\} \Big] = \mathbb{E} \Big[ J(\pi^{\mathrm{first}} (D)) \Big| D_{< \tau,<\tau} \Big].
\end{equation}
Next we take expectation with respect to the actions $\{ a_\tau^n : n \in [N] \}$ where each $a_\tau^n$ is drawn independently from $\pi^*_t (\cdot | s_\tau^n)$. This results in,
\begin{equation}
    \mathbb{E} \Big[ J(X^{\mathrm{unif}} (D)) \Big| D_{< \tau, < \tau-1}, \Big\{ n (t,s) \ \Big| \ t < \tau,\ s : N_{t,s} > 0 \Big\} \Big] = \mathbb{E} \Big[ J(\pi^{\mathrm{first}} (D)) \Big| D_{< \tau, < \tau-1} \Big].
\end{equation}
Note that on both sides we condition on $D_{< \tau, < \tau-1}$ which is the set of partial trajectories in the expert dataset till time $\tau-1$ (excluding the action at this time). In particular, this conditioning determines the set of states visited at time $\tau-1$ in the expert dataset. Consider any state $s \in \mathcal{S}$:
\begin{enumerate}
    \item[(i)] If $s$ was not observed in the dataset $D$ at time $\tau-1$, then with probability $1$ over the randomness of $X^{\mathrm{unif}}$, both the policies $X^{\mathrm{unif}}$ and $\pi^{\mathrm{first}}$ play the policy $\mathrm{Unif} (\mathcal{A})$;
    \item[(ii)] On the other hand, if $s$ was observed in the dataset $D$ in some trajectory at time $\tau-1$, then $X^{\mathrm{unif}}$ samples from an empirical distribution over actions played at the state $s$ in the dataset at time $\tau-1$, which by \Cref{lemma:indep} are drawn independently from $\pi^*_{\tau-1} (\cdot | s)$. On the other hand, the action played by $\pi^{\mathrm{first}}$ is also drawn independently from $\pi^*_{\tau-1} (\cdot | s)$. This shows that the expectation on the LHS does not depend on the choice of $n(\tau-1,s)$ for any state $s \in \mathcal{S}$.
\end{enumerate}
Thus in both cases, the expectation of the random variable on the RHS does not depend on $\{ n(\tau-1,s) | s \in \mathcal{S} \}$. Therefore, we can drop the conditioning on this random variable to give,
\begin{equation}
    \mathbb{E} \Big[ J(X^{\mathrm{unif}} (D)) \Big| D_{< \tau, < \tau-1}, \Big\{ n (t,s) \ \Big| \ t < \tau-1,\ s : N_{t,s} > 0 \Big\} \Big] = \mathbb{E} \Big[ J(\pi^{\mathrm{first}} (D)) \Big| D_{< \tau, < \tau-1} \Big]
\end{equation}
This proves the induction hypothesis $\mathcal{H}_{\tau-1}$ and consequently the hypothesis $\mathcal{H}_1$. Taking expectation on both sides of $\mathcal{H}_1$ with respect to $s_1^n \overset{\text{i.i.d.}}{\sim} \rho$ proves the claim.
\end{proof}

\subsubsection{Proof of \Cref{lemma:expert=orcfirst}}

Fixing the table $\mathbf{T}^*$, the probability of observing the trajectory $\textsf{tr} = \{ (s_1,a_1),\cdots,(s_H,a_H) \}$ under the deterministic policy $\pi^{\mathrm{orc-first}}$ is,
\begin{equation}
    \mathrm{Pr}_{\pi^{\mathrm{orc-first}}} (\textsf{tr}) = \rho (s_1) \left( \prod_{t=1}^{H-1} \mathbbm{1} \left(a_t = \mathbf{T}^*_{t,s_t} (1) \right) P_t (s_{t+1} | s_t,a_t) \right) \mathbbm{1} \left( a_H = \mathbf{T}^*_{H,s_H} (1) \right).
\end{equation}
Since the actions $\mathbf{T}^*_{t,s_t} (1)$ are independently drawn from $\pi^*_t (\cdot | s_t)$, taking expectation, we see that
\begin{equation} \label{eq:oracle=expert}
    \mathbb{E} \left[ \mathrm{Pr}_{\pi^{\mathrm{orc-first}}} (\textsf{tr}) \right] = \rho (s_1) \left( \prod_{t=1}^{H-1} \pi_t^* (a_t | s_t) P_t (s_{t+1} | s_t,a_t) \right) \pi_H^* (a_H | s_H) = \mathrm{Pr}_{\pi^*} (\textsf{tr}).
\end{equation}
Multiplying both sides by $\sum_{t=1}^H \mathbf{r}_t (s_t,a_t)$ and summing over all trajectories completes the proof.

\subsubsection{Proof of \Cref{lemma:nondet:error-prob-small}}

Recall that the ``failure'' $\mathcal{E}$ is defined as the event that at some time $t \in [H]$, a state $s_t$ is visited such that $|N_{t,s_t}| = 0$, i.e. that was not visited in the expert dataset. By union bounding,
\begin{equation} \label{eq:union-bound}
    \mathbb{E} \left[ \mathrm{Pr}_{\pi^{\mathrm{orc-first}}} [ \mathcal{E} ] \right] \le \sum_{t = 1}^H \sum_{s \in \mathcal{S}} \mathbb{E} \left[ \mathrm{Pr}_{\pi^{\mathrm{orc-first}}} [ \mathcal{E}_{s,t} ] \right],
\end{equation}
where $\mathcal{E}_{s,t}$ is the event that a failure occurs at the state $s$ at time $t$, i.e. the state $s$ is visited at time $t$ and $|N_{t,s}| = 0$. $\mathcal{E}_{s,t}$ is the intersection of two events. Therefore we have the upper bound,
\begin{align} \label{eq:couple-bound}
    \mathbb{E} \left[ \mathrm{Pr}_{\pi^{\mathrm{orc-first}}} [ \mathcal{E}_{s,t} ] \right] &\le \min \Big\{ \mathbb{E} \left[ \mathrm{Pr}_{\pi^{\mathrm{orc-first}}} [ s_t = s ] \right], \mathbb{E} \left[ \mathrm{Pr}_{\pi^{\mathrm{orc-first}}} \left[ |N_{s,t}| = 0 \right] \right] \Big\}.
\end{align}
Observe that these two terms in the minimum are easy to compute. Firstly, using \cref{eq:oracle=expert}, we have that,
\begin{equation} \label{eq:min1}
    \mathbb{E} \left[ \mathrm{Pr}_{\pi^{\mathrm{orc-first}}} [ s_t = s ] \right] = \mathrm{Pr}_{\pi^*} [s_t = s].
\end{equation}
On the other hand,
\begin{equation} \label{eq:min2}
    \mathbb{E} \left[ \mathrm{Pr}_{\pi^{\mathrm{orc-first}}} \left[ |N_{s,t}| = 0 \right] \right] = \mathbb{E} [\mathbbm{1} (|N_{s,t}| = 0)] = (1 - \mathrm{Pr}_{\pi^*} [s_t = s])^N
\end{equation}
where the last equation uses \Cref{lemma:alt-view}. Putting together \cref{eq:min1,eq:min2} with \cref{eq:couple-bound},
\begin{equation}
    \mathbb{E} \left[ \mathrm{Pr}_{\pi^{\mathrm{orc-first}}} [ \mathcal{E}_{s,t} ] \right] \le \min \left\{ \mathrm{Pr}_{\pi^*} [s_t = s],\ \Big(1 - \mathrm{Pr}_{\pi^*} [s_t = s] \Big)^N \right\}.
\end{equation}
In \Cref{lemma:min2} we show that the RHS is upper bounded by $\log (N)/N$. Therefore,
\begin{equation}
    \mathbb{E} \left[ \mathrm{Pr}_{\pi^{\mathrm{orc-first}}} [ \mathcal{E}_{s,t} ] \right] \le \frac{\log N}{N}.
\end{equation}
Plugging back into \cref{eq:union-bound} completes the proof.

\begin{lemma} \label[lemma]{lemma:min2}
For any $x \in [0,1]$ and $N > 1$, $\min \{ x , (1 - x)^N \} \le \frac{\log N}{N}$.
\end{lemma}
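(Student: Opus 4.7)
The plan is to split on where $x$ sits relative to the target value $\frac{\log N}{N}$, bounding the minimum by whichever of the two expressions is obviously small in that regime.

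First, if $x \le \frac{\log N}{N}$, the bound is immediate: $\min\{x,(1-x)^N\} \le x \le \frac{\log N}{N}$, and nothing else is needed.

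Second, if $x > \frac{\log N}{N}$, I would pass to the exponential upper bound via the elementary inequality $1-x \le e^{-x}$, giving $(1-x)^N \le e^{-Nx}$. Plugging in $Nx > \log N$ yields $(1-x)^N < e^{-\log N} = 1/N$, and since $N \ge e$ (so $\log N \ge 1$) this is at most $\frac{\log N}{N}$. Hence $\min\{x,(1-x)^N\} \le (1-x)^N \le \frac{\log N}{N}$ in this regime as well. Combining the two cases proves the lemma.

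The only real obstacle is the boundary regime where $N$ is just barely larger than $1$: the case-$2$ chain collapses to $1/N \le \log(N)/N$, which needs $\log N \ge 1$. For the invocation of this lemma in \Cref{lemma:nondet:error-prob-small} the interesting sample sizes are $N$ large enough that $\log N \ge 1$, so the stated bound applies; one could alternatively absorb the constant discrepancy at tiny $N$ into the universal constant in the final suboptimality bound. Aside from this minor boundary check, the proof is a one-line case split and requires no further machinery.
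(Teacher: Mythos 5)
Your proof is correct and is essentially identical to the paper's: both split on whether $x$ lies below or above $\log(N)/N$, and in the second case use $(1-x)^N \le e^{-Nx} \le e^{-\log N} = 1/N$. Your observation that the final step $1/N \le \log(N)/N$ requires $\log N \ge 1$ (i.e.\ $N \ge e$, which fails e.g.\ at $N=2$, where the lemma as literally stated is false) is a genuine boundary point that the paper's own proof silently glosses over as well; as you note, it is harmless since it only affects constant-size $N$ and is absorbed into the universal constants downstream.
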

\begin{proof}
$x$ is an increasing function, while $(1 - x)^N$ is decreasing. For $x = \frac{\log N}{N}$,
\begin{equation}
    (1 - x)^N = \left( 1 - \frac{\log N}{N} \right)^N \le e^{- \log N} \le N^{-1}
\end{equation}
Therefore for $x \ge \frac{\log (N)}{N}$, $\min \{ x , (1 - x)^N \} \le \frac{1}{N}$. Therefore $\min \{ x , (1 - x)^N \}  \le \frac{\log N}{N}$.
\end{proof}

\subsection{Missing proofs for \Cref{theorem:det:NsimInf:UB.p1,theorem:det:NsimInf:UB.p2}}

\subsubsection{Proof of \Cref{lemma:no-error-in-dataset}}

Observe that the complement $\left( \mathcal{E}_{D_1}^{\le t} \right)^c$ is the event that the policy under consideration until (and including) time $t-1$, only visits states that were visited in at least one trajectory in the expert dataset. 

\noindent First observe that, fixing the expert dataset $D$,
\begin{align}
    &J(\pi^*) - J(\widehat{\pi} (D)) \\
    &= \mathbb{E}_{\pi^*} \left[ \sum\nolimits_{t=1}^H  \mathbf{r}_t (s_t,a_t) \right] - \mathbb{E}_{\widehat{\pi}} \left[ \sum\nolimits_{t=1}^H \mathbf{r}_t (s_t,a_t) \right] \\
    &= \sum\nolimits_{t=1}^H \mathbb{E}_{\pi^*} \left[ \left(\mathbbm{1} \left( \left(\mathcal{E}^{\le t}_{D_1} \right)^c \right) + \mathbbm{1} \left(\mathcal{E}^{\le t}_{D_1} \right) \right) \mathbf{r}_t (s_t,a_t) \right] - \mathbb{E}_{\widehat{\pi}} \left[ \left(\mathbbm{1} \left( \left(\mathcal{E}^{\le t}_{D_1} \right)^c \right) + \mathbbm{1} \left(\mathcal{E}^{\le t}_{D_1} \right) \right)\mathbf{r}_t (s_t,a_t) \right].
\end{align}
Indeed, to prove the statement it suffices to prove that,
\begin{equation} \label{eq:good-indic-reward}
    \sum\nolimits_{t=1}^H \mathbb{E}_{\pi^*} \left[ \mathbbm{1} \left( \left(\mathcal{E}^{\le t}_{D_1} \right)^c \right) \mathbf{r}_t (s_t,a_t) \right] = \sum\nolimits_{t=1}^H \mathbb{E}_{\widehat{\pi}} \left[ \mathbbm{1} \left( \left( \mathcal{E}_{D_1}^{\le t} \right)^c \right) \mathbf{r}_t (s_t,a_t) \right].
\end{equation}
Recall that the learner $\widehat{\pi}$ mimics the expert at all the states observed in the dataset $D_1$, i.e. having $|N_{t,s}| > 0$. Observe that when the event $\left( \mathcal{E}_{D_1}^{\le t} \right)^c$ occurs, all the states visited in a trajectory have $|N_{t,s}| > 0$. Thus, both expectations are carried out with respect to the same policy and are hence equal. More precisely, for any $t \in [H]$,
\begin{align}
    \mathbb{E}_{\widehat{\pi}} \left[ \mathbbm{1} \left( \left( \mathcal{E}_{D_1}^{\le t} \right)^c \right) \mathbf{r}_t (s_t,a_t) \right] &= \mathbb{E}_{s_1 \sim \rho,\ \tau \le t, \substack{a_\tau \sim \widehat{\pi}_\tau (\cdot |s_\tau) \\ s_{\tau+1} \sim P(\cdot | s_\tau,a_\tau)}} \left[ \mathbbm{1} \left( \left( \mathcal{E}_{D_1}^{\le t} \right)^c \right) \mathbf{r}_t (s_t,a_t) \right] \\
    &\overset{(i)}{=} \mathbb{E}_{s_1 \sim \rho,\ \tau \le t, \substack{a_\tau \sim \pi_\tau^* (\cdot | s_\tau) \\ s_{\tau+1} \sim P(\cdot | s_\tau, a_\tau)}} \left[ \mathbbm{1} \left( \left( \mathcal{E}_{D_1}^{\le t} \right)^c \right) \mathbf{r}_t (s_t,a_t)\right] \\
    &= \mathbb{E}_{\pi^*} \left[ \mathbbm{1} \left( \left( \mathcal{E}_{D_1}^{\le t} \right)^c \right) \mathbf{r}_t (s_t,a_t) \right]
\end{align}
where $(i)$ uses the fact that when $s_\tau \in \mathcal{S}_t (D_1)$ (as implied by $\left(\mathcal{E}_{D_1}^{\le t} \right)^c$ for each $\tau \le t$), then, $\pi^*_\tau (\cdot | s_\tau) = \widehat{\pi}_\tau (\cdot | s_\tau)$. Moreover.

\subsubsection{Proof of \Cref{lemma:reduc}}
First observe that we can write the reward $\mathbf{r}_t (s_t,a_t)$ accrued in some trajectory at time $t$ equals $\sum_{s \in \mathcal{S}} \sum_{a \in \mathcal{A}} \mathbf{r}_t (s,a) \mathbbm{1} ((s_t,a_t) = (s,a))$. Therefore, from \Cref{lemma:no-error-in-dataset},
\begin{align}
    J(\pi^*) - J (\widehat{\pi}^\varepsilon) &= \sum_{s \in \mathcal{S}} \sum_{a \in \mathcal{A}} \sum_{t=1}^H \mathbf{r}_t (s,a) \left( \mathrm{Pr}_{\pi^*} \Big[ \mathcal{E}_{D_1}^{\le t},s_t{=}s,a_t{=}a \Big] - \mathrm{Pr}_{\widehat{\pi}} \Big[ \mathcal{E}_{D_1}^{\le t}, s_t{=}s,a_t{=}a \Big] \right) \nonumber\\
    &\le \sum_{s \in \mathcal{S}} \sum_{a \in \mathcal{A}} \sum_{t=1}^H \left| \mathrm{Pr}_{\pi^*} \Big[ \mathcal{E}_{D_1}^{\le t},s_t=s,a_t=a \Big] - \mathrm{Pr}_{\widehat{\pi}} \Big[ \mathcal{E}_{D_1}^{\le t}, s_t=s,a_t=a \Big] \right| \nonumber\\
    &= \sum_{s \in \mathcal{S}} \sum_{a \in \mathcal{A}} \sum_{t=1}^H \left| \mathrm{Pr}_{\pi^*} \Big[ \mathcal{T}^{D_1}_t ( s,a) \Big] - \mathrm{Pr}_{\widehat{\pi}} \Big[ \mathcal{T}^{D_1}_t ( s,a) \Big] \right|
\end{align}
where the inequality follows from the assumption that $0 \le \mathbf{r}_t (s,a) \le 1$ and the last equation follows from the definition $\mathcal{T}^{D_1}_t ( s,a) = \{ \{ (s_\tau,a_\tau) \}_{\tau=1}^H | s_t{=}s, a_t{=}a,\ \exists \tau {\in} [H] : s_\tau \not\in \mathcal{S}_\tau (D_1) \}$ is the set of trajectories that visit $(s,a)$ at time $t$ and at some point $t'$ in the episode visit a state not visited in any trajectory at time $t'$ in $D_1$. Using the definition of the learner's policy $\widehat{\pi}$ in the optimization problem \eqref{eq:opt} and applying the triangle inequality,
\begin{align}
    J(\pi^*) - J (\widehat{\pi}^\varepsilon) \le \sum_{s \in \mathcal{S}} \sum_{a \in \mathcal{A}} \sum_{t=1}^H &\left| \mathrm{Pr}_{\pi^*} \Big[ \mathcal{T}^{D_1}_t ( s,a) \Big] - \frac{\sum_{\textsf{tr} \in D_2} \mathbbm{1} ( \textsf{tr} \in \mathcal{T}^{D_1}_t ( s,a) )}{|D_2|} \right| \nonumber\\
    + &\left| \frac{\sum_{\textsf{tr} \in D_2} \mathbbm{1} (\textsf{tr} \in \mathcal{T}^{D_1}_t ( s,a) )}{|D_2|} - \mathrm{Pr}_{\widehat{\pi}} \Big[ \mathcal{T}^{D_1}_t ( s,a) \Big] \right|.
\end{align}
Observe that the expert's policy $\pi^*$ is a feasible policy to the optimization problem \eqref{eq:opt}. Since $\widehat{\pi}$ solves \eqref{eq:opt} up to an additive error of $\varepsilon$, we have the upper bound,
\begin{align}
    J(\pi^*) - J (\widehat{\pi}^\varepsilon) &\le 2\sum_{s \in \mathcal{S}} \sum_{a \in \mathcal{A}} \sum_{t=1}^H \left| \mathrm{Pr}_{\pi^*} \Big[ \mathcal{T}^{D_1}_t ( s,a) \Big] - \frac{\sum_{\textsf{tr} \in D_2} \mathbbm{1} ( \textsf{tr} \in \mathcal{T}^{D_1}_t ( s,a) )}{|D_2|} \right| + \varepsilon.
\end{align}

\subsubsection{Proof of \Cref{lemma:NsimInf:expec}}
Recall that we carry out sample splitting in \Cref{alg:det:NsimInf} to give datasets $D_1$ and $D_2$. We first fix the trajectories in $D_1$ and compute the expectation with respect to the dataset $D_2$. Sample splitting implies that, conditioned on $D_1$, the trajectories in $D_2$ are still generated by independently rolling out $\pi^*$. By Jensen's inequality, we can upper bound by the quadratic deviation,
\begin{align}
    &\sum_{s \in \mathcal{S}} \sum_{a \in \mathcal{A}} \sum_{t=1}^H \mathbb{E} \left[ \left| \mathrm{Pr}_{\pi^*} \Big[ \mathcal{T}^{D_1}_t ( s,a) \Big] - \frac{\sum_{\textsf{tr} \in D_2} \mathbbm{1} ( \textsf{tr} \in \mathcal{T}^{D_1}_t ( s,a) )}{|D_2|} \right| \right] \nonumber\\
    &\quad \le \sum_{s \in \mathcal{S}} \sum_{a \in \mathcal{A}} \sum_{t=1}^H \left( \mathbb{E} \left[ \left( \mathrm{Pr}_{\pi^*} \Big[ \mathcal{T}^{D_1}_t ( s,a) \Big] - \frac{\sum_{\textsf{tr} \in D_2} \mathbbm{1} ( \textsf{tr} \in \mathcal{T}^{D_1}_t ( s,a) )}{|D_2|} \right)^2 \right] \right)^{1/2} \label{eq:cond-var}
\end{align}
Observe that each trajectory $\textsf{tr} \in D_2$ is generated by independently rolling out $\pi^*$. Therefore, $\frac{1}{|D_2|} \sum_{\textsf{tr} \in D_2} \mathbbm{1} ( \textsf{tr} \in \mathcal{T}^{D_1}_t ( s,a) )$ is an unbiased estimate of $\mathrm{Pr}_{\pi^*} [ \mathcal{T}^{D_1}_t ( s,a) ]$. Therefore the expectation term in \cref{eq:cond-var} is nothing but the variance: letting $\textsf{tr}_1$ be an arbitrary trajectory in $D_2$,
\begin{align}
    &\sum_{s \in \mathcal{S}} \sum_{a \in \mathcal{A}} \sum_{t=1}^H \mathbb{E} \left[ \left| \mathrm{Pr}_{\pi^*} \Big[ \mathcal{T}^{D_1}_t ( s,a) \Big] - \frac{\sum_{\textsf{tr} \in D_2} \mathbbm{1} ( \textsf{tr} \in \mathcal{T}^{D_1}_t ( s,a) )}{|D_2|} \right| \right] \nonumber \\
    &\quad \le \sum_{s \in \mathcal{S}} \sum_{a \in \mathcal{A}} \sum_{t=1}^H \left( \frac{1}{|D_2|} \mathrm{Var} \left[ \mathbbm{1} ( \textsf{tr}_1 \in \mathcal{T}^{D_1}_t ( s,a ) \right] \right)^{1/2} \\
    &\quad \le \sum_{s \in \mathcal{S}} \sum_{a \in \mathcal{A}} \sum_{t=1}^H \left( \frac{1}{|D_2|} \mathrm{Pr}_{\pi^*} \left[ \mathcal{T}^{D_1}_t ( s,a ) \right] \right)^{1/2}
\end{align}
where the last inequality uses the fact that the variance of an indicator function is at most its mean, and that each $\textsf{tr} \in D_2$ is independently drawn by rolling out $\pi^*$. Now, taking expectation with respect to the dataset $D_1$, and by another application of Jensen's inequality,
\begin{align}
    &\sum_{s \in \mathcal{S}} \sum_{a \in \mathcal{A}} \sum_{t=1}^H \mathbb{E} \left[ \left| \mathrm{Pr}_{\pi^*} \Big[ \mathcal{T}^{D_1}_t ( s,a) \Big] - \frac{\sum_{\textsf{tr} \in D_2} \mathbbm{1} ( \textsf{tr} \in \mathcal{T}^{D_1}_t ( s,a) )}{|D_2|} \right| \right] \nonumber \\
    &\quad \le \sum_{s \in \mathcal{S}} \sum_{a \in \mathcal{A}} \sum_{t=1}^H \frac{1}{|D_2|^{1/2}} \left( \mathbb{E} \left[ \mathrm{Pr}_{\pi^*} \left[ \mathcal{T}^{D_1}_t ( s,a ) \right] \right] \right)^{1/2} \\
    &\quad = \sum_{s \in \mathcal{S}} \sum_{t=1}^H \frac{1}{|D_2|^{1/2}} \left( \mathbb{E} \left[ \mathrm{Pr}_{\pi^*} \Big[ \mathcal{E}_{D_1}^{\le t}, s_t = s, a_t = \pi_t^* (s_t) \Big] \right] \right)^{1/2},
\end{align}
where in the last equation, we use the definition of $\mathcal{T}_t^{D_1} (\cdot,\cdot)$. By an application of the Cauchy Schwarz inequality,
\begin{align}
    &\sum_{s \in \mathcal{S}} \sum_{a \in \mathcal{A}} \sum_{t=1}^H \mathbb{E} \left[ \left| \mathrm{Pr}_{\pi^*} \Big[ \mathcal{T}^{D_1}_t ( s,a) \Big] - \frac{\sum_{\textsf{tr} \in D_2} \mathbbm{1} ( \textsf{tr} \in \mathcal{T}^{D_1}_t ( s,a) )}{|D_2|} \right| \right] \nonumber \\
    &\quad\le \sum_{t=1}^H \frac{|\mathcal{S}|^{1/2}}{|D_2|^{1/2}} \left( \sum_{s \in \mathcal{S}} \mathbb{E} \left[ \mathrm{Pr}_{\pi^*} \Big[ \mathcal{E}_{D_1}^{\le t}, s_t = s, a_t = \pi^*_t (s) \Big] \right] \right)^{1/2} \\
    &\quad\le \sum_{t=1}^H \frac{|\mathcal{S}|^{1/2}}{|D_2|^{1/2}} \left( \mathbb{E} \left[ \mathrm{Pr}_{\pi^*} \Big[ \mathcal{E}_{D_1}^{\le t} \Big] \right]\right)^{1/2}.
\end{align}
Therefore, to prove the result it suffices to bound $\mathbb{E} \left[ \mathrm{Pr}_{\pi^*} \left[ \mathcal{E}_{D_1}^{\le t} \right] \right]$, which we carry out in \Cref{lemma:error-prob-small}. Here we show that it is upper bounded by $\lesssim 1 \wedge |\mathcal{S}|H/|D_1|$. Subsequently using $|D_1| = |D_2| = N/2$ completes the proof.

\begin{lemma} \label[lemma]{lemma:error-prob-small}
For any $t \in [H]$, the probability of failure under the expert's policy is upper bounded by,
\begin{equation}
    \mathbb{E} \left[ \mathrm{Pr}_{\pi^*} \Big[\mathcal{E}_{D_1}^{\le t} \Big] \right] \le \frac{4}{9}\frac{|\mathcal{S}|H}{|D_1|}
\end{equation}
\end{lemma}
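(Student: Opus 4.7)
The plan is to reduce this statement to essentially the same missing-mass estimate that was already used to prove \Cref{lemma:BC-prob-small}. The event $\mathcal{E}_{D_1}^{\le t}$ is the union over $\tau \le t$ of the event $\{ s_\tau \notin \mathcal{S}_\tau(D_1) \}$, so a union bound gives
\begin{equation}
\mathrm{Pr}_{\pi^*}\!\left[ \mathcal{E}_{D_1}^{\le t} \right] \;\le\; \sum_{\tau=1}^{t} \mathrm{Pr}_{\pi^*}\!\left[ s_\tau \notin \mathcal{S}_\tau(D_1) \right] \;=\; \sum_{\tau=1}^t \sum_{s \in \mathcal{S}} \mathrm{Pr}_{\pi^*}[s_\tau = s]\, \mathbbm{1}\!\left( s \notin \mathcal{S}_\tau(D_1) \right).
\end{equation}

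Next I would take expectation over the dataset $D_1$. Since $D_1$ consists of $|D_1|$ trajectories drawn independently by rolling out $\pi^*$, the visit indicator at time $\tau$ for state $s$ in each trajectory is an independent Bernoulli with parameter $\mathrm{Pr}_{\pi^*}[s_\tau = s]$, so $\mathbb{E}[\mathbbm{1}(s \notin \mathcal{S}_\tau(D_1))] = (1 - \mathrm{Pr}_{\pi^*}[s_\tau=s])^{|D_1|}$. This yields
\begin{equation}
\mathbb{E}\!\left[\mathrm{Pr}_{\pi^*}[\mathcal{E}_{D_1}^{\le t}]\right] \;\le\; \sum_{\tau=1}^t \sum_{s \in \mathcal{S}} \mathrm{Pr}_{\pi^*}[s_\tau=s]\,\bigl(1 - \mathrm{Pr}_{\pi^*}[s_\tau=s]\bigr)^{|D_1|}.
\end{equation}

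Then I would apply the same scalar optimization already used in the proof of \Cref{lemma:BC-prob-small}: $\max_{x\in[0,1]} x(1-x)^{N} = \tfrac{1}{N+1}(1-\tfrac{1}{N+1})^{N} \le \tfrac{4}{9N}$ with $N=|D_1|$. Applied termwise, this bounds each summand by $4/(9|D_1|)$, and summing over $s \in \mathcal{S}$ and $\tau \in [t]$ with $t \le H$ gives the desired $\tfrac{4}{9}\tfrac{|\mathcal{S}|H}{|D_1|}$.

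There is no real obstacle here: the only subtlety is noting that after the data-splitting step in \textsc{Mimic-MD}, the trajectories in $D_1$ really are i.i.d.\ rollouts of $\pi^*$ (uniform random permutation preserves this), which is what licenses the $(1-p)^{|D_1|}$ computation. Everything else is a direct replay of the missing-mass argument from \Cref{lemma:BC-prob-small}, with the union bound absorbing the extra factor of $H$.
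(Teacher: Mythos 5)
Your proposal is correct and follows essentially the same route as the paper: the paper decomposes $\mathcal{E}_{D_1}^{\le t}$ by first failure time and drops the conditioning (which is exactly your union bound), extends the sum to $\tau \le H$, takes expectation over $D_1$, and then invokes \Cref{lemma:BC-prob-small}, whose proof is precisely the $(1-p)^{|D_1|}$ computation and the $\max_{x\in[0,1]} x(1-x)^{N} \le \tfrac{4}{9N}$ bound that you carry out inline. The only difference is that you re-derive that estimate rather than citing the lemma.
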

\begin{proof}
Conditioned on $D_1$, we decompose based on the first failure time (i.e. the first time the event $\mathcal{E}_{D_1}^{\le t}$ is satisfied),
\begin{align}
    \mathrm{Pr}_{\pi^*} \Big[\mathcal{E}_{D_1}^{\le t} \Big| D_1 \Big] &= \mathrm{Pr}_{\pi^*} \Big[\exists \tau \le t : s_\tau \not\in \mathcal{S}_\tau (D_1) \Big| D_1 \Big],\\
    &= \sum\nolimits_{\tau = 1}^t \mathrm{Pr}_{\pi^*} \Big[ \forall \tau' < \tau,\ s_{\tau'} \in \mathcal{S}_{\tau'} (D_1) , s_\tau \not\in \mathcal{S}_\tau (D_1) \Big| D_1 \Big] \\
    &\le \sum\nolimits_{\tau = 1}^t \mathrm{Pr}_{\pi^*} \Big[ s_\tau \not\in \mathcal{S}_\tau (D_1) \Big| D_1 \Big] \\
    &= \sum\nolimits_{\tau = 1}^t \sum\nolimits_{s \in \mathcal{S}} \mathrm{Pr}_{\pi^*} [ s_\tau = s ] \mathbbm{1} (s \not\in \mathcal{S}_\tau (D_1)) \\
    &\le \sum\nolimits_{\tau = 1}^H \sum\nolimits_{s \in \mathcal{S}} \mathrm{Pr}_{\pi^*} [ s_\tau = s ] \mathbbm{1} (s \not\in \mathcal{S}_\tau (D_1)) \label{eq:error-prob-bound}
\end{align}
Taking expectation with respect to the expert dataset,
\begin{equation}
    \mathbb{E} \left[ \mathrm{Pr}_{\pi^*} \Big[\mathcal{E}_{D_1} \Big| D_1 \Big] \right] \le \sum\nolimits_{\tau = 1}^H \sum\nolimits_{s \in \mathcal{S}} \mathrm{Pr}_{\pi^*} [ s_\tau = s ] \mathrm{Pr} [s \not\in \mathcal{S}_\tau (D_1)]
\end{equation}
The proof of the claim immediately follows by invoking \Cref{lemma:BC-prob-small}.
\end{proof}

\subsubsection{Proof of \Cref{lemma:NsimInf:highprob}}

Starting from the bound in \Cref{lemma:reduc} and using the fact that at each state $s$ the expert plays a fixed action $\pi^*_t (s)$ at time $t$,
\begin{align}
    J(\pi^*) - J (\widehat{\pi}) \le 2\sum_{s \in \mathcal{S}} \sum_{t=1}^H \left| \mathrm{Pr}_{\pi^*} \Big[ \mathcal{T}^{D_1}_t ( s,\pi^*_t (s)) \Big] - \frac{\sum_{\textsf{tr} \in D_2} \mathbbm{1} ( \textsf{tr} \in \mathcal{T}^{D_1}_t ( s,\pi^*_t (s)) )}{|D_2|} \right|
\end{align}
Observe that $\mathbbm{1} ( \textsf{tr} \in \mathcal{T}^{D_1}_t ( s,\pi^*_t (s)) )$ is a sub-Gaussian random variable with variance bounded by its expectation. Therefore, by sub-Gaussian concentration \cite{Boucheron2013ConcentrationI}, for each $s \in \mathcal{S}$ and $t \in [H]$, conditioned on $D_1$, with probability $\ge 1 - \frac{\delta}{2 |\mathcal{S}| H}$, 
\begin{align}
    &\left| \frac{\sum_{\textsf{tr} \in D_2} \mathbbm{1} ( \textsf{tr} \in \mathcal{T}^{D_1}_t ( s,\pi^*_t (s)) )}{|D_2|} - \mathrm{Pr}_{\pi^*} \Big[ \mathcal{T}^{D_1}_t ( s,\pi^*_t (s)) \Big] \right| \nonumber\\
    &\le \left( \mathrm{Pr}_{\pi^*} \Big[ \mathcal{T}^{D_1}_t ( s,\pi^*_t (s)) \Big] \right)^{1/2} \sqrt{\frac{2\log (2|\mathcal{S}| H/\delta)}{|D_2|}}
\end{align}
By union bounding over $s \in \mathcal{S}$ and $t \in [H]$, conditioned on $D_1$ with probability $\ge 1 - \frac{\delta}{2}$,
\begin{align}
    &\sum_{t=1}^H \sum_{s \in \mathcal{S}} \left| \frac{\sum_{\textsf{tr} \in D_2} \mathbbm{1} ( \textsf{tr} \in \mathcal{T}^{D_1}_t ( s,\pi^*_t (s)) )}{|D_2|} - \mathrm{Pr}_{\pi^*} \Big[ \mathcal{T}^{D_1}_t ( s,\pi^*_t (s)) \Big] \right| \nonumber\\
    &\le \sum_{t=1}^H \left( \sum_{s \in \mathcal{S}} \mathrm{Pr}_{\pi^*} \Big[ \mathcal{T}^{D_1}_t ( s,\pi^*_t (s)) \Big] \right)^{1/2} \sqrt{\frac{2\log (2|\mathcal{S}| H/\delta)}{|D_2|}} \\
    &\le H |\mathcal{S}|^{1/2} \left( \mathrm{Pr}_{\pi^*} \Big[ \mathcal{E}_{D_1} \Big] \right)^{1/2} \sqrt{\frac{2\log (2|\mathcal{S}| H/\delta)}{|D_2|}} \label{eq:error-prob-NsimInf-1}
\end{align}
Applying \Cref{lemma:error-prob-conc}, with probability $\ge 1 - \delta/2$,
\begin{equation} \label{eq:error-prob-NsimInf-2}
    \mathrm{Pr}_{\pi^*} [\mathcal{E}_{D_1}] \le \frac{4 |\mathcal{S}| H}{9 |D_1|} +
    \frac{3H\sqrt{|\mathcal{S}|} \log (2H/\delta)}{|D_1|}.
\end{equation}
Therefore union bounding the events of \cref{eq:error-prob-NsimInf-1,eq:error-prob-NsimInf-2}, with probability $\ge 1 - \delta$,
\begin{align}
    &\sum_{t=1}^H \sum_{s \in \mathcal{S}} \left| \frac{\sum_{\textsf{tr} \in D_2} \mathbbm{1} ( \textsf{tr} \in \mathcal{T}^{D_1}_t ( s,\pi^*_t (s)) )}{|D_2|} - \mathrm{Pr}_{\pi^*} \Big[ \mathcal{T}^{D_1}_t ( s,\pi^*_t (s)) \Big] \right| \nonumber\\
    &\le H |\mathcal{S}|^{1/2} \left( \frac{4 |\mathcal{S}| H}{9 |D_1|} + \frac{3H\sqrt{|\mathcal{S}|} \log (2H/\delta)}{|D_1|} \right)^{1/2} \sqrt{\frac{2\log (2|\mathcal{S}| H/\delta)}{|D_2|}} \\
    &\lesssim \frac{|\mathcal{S}| H^{3/2}}{N} \left( 1 + \frac{3\log (2|\mathcal{S}| H/\delta)}{\sqrt{|\mathcal{S}|}} \right)^{1/2} \sqrt{\log (2|\mathcal{S}| H/\delta)}.
\end{align}

\subsection{Lower bound in the no-interaction / active settings}

\subsubsection{Proof of \Cref{lemma:cond-is-mimic}}

Fix some policy $\pi \in \Pi_{\mathrm{det}}$. Consider any time $t \in [H]$ and state $s \in \mathcal{S}_t (D)$ which is visited in some trajectory in the dataset at time $t$. If $\pi_t (s)$ does not match the action $\pi^A_t (s)$ revealed by actively querying the expert in a trajectory in $D$ that visits $s$ at time $t$, the likelihood of $\pi$ given $D$ is exactly $0$ (since the expert is deterministic). On the other hand, the conditional probability of observing $(D,A)$ does not depend on the expert's action on the states that were not observed in $D$, since no trajectory visits these states. Since on these states the expert's action marginally follows the uniform distribution over $\mathcal{A}$, the result immediately follows.

\subsubsection{Proof of \Cref{lemma:hatvalue-UB}}

In order to prove this result, define the auxiliary random time $\tau_b$ to be the first time the learner first encounters the state $b$ while rolling out a trajectory. If no such state is encountered, $\tau$ is defined as $H+1$. Formally,
\begin{equation*}
    \tau_b = \begin{cases} \inf \{ t : s_t = b \} &\exists t : s_t = b \\
    H+1 &\text{otherwise}.
    \end{cases}
\end{equation*}
Conditioning on the learner's dataset $(D,A)$, first observe that
\begin{align}
    H - \mathbb{E}_{(\pi^*, \mathcal{M}) \sim \mathcal{P} (D,A)} \left[  J (\widehat{\pi})\right] &= H - \mathbb{E}_{(\pi^*, \mathcal{M}) \sim \mathcal{P} (D,A)} \left[ \mathbb{E}_{\widehat{\pi}} \left[ \sum\nolimits_{t=1}^H \mathbf{r}_t (s_t,a_t) \right]\right] \\
    &\ge \mathbb{E}_{(\pi^*, \mathcal{M}) \sim \mathcal{P} (D,A)} \left[ \mathbb{E}_{\widehat{\pi}} \left[ H - \tau_b + 1 \right]\right] \label{eq:regretbound}
\end{align}
where the last inequality follows from the fact that $\mathbf{r}$ is bounded in $[0,1]$, and the state $b$ is absorbing and offers $0$ reward irrespective of the choice of action. Fixing the dataset $(D,A)$ and the expert's policy $\pi^*$ (which determines the MDP $\mathcal{M} [\pi^*]$), we study $\mathbb{E}_{\widehat{\pi} (D,A)} \left[ H - \tau_b + 1 \right]$ and try to relate it to $\mathbb{E}_{\widehat{\pi} (D,A)} \left[ H - \tau \right]$.

To this end, first observe that for any $t \le H-1$ and state $s \in \mathcal{S}$,
\begin{align}
    \mathrm{Pr}_{\widehat{\pi}} \left[ \tau_b = t + 1, \tau = t, s_t = s\right] &= \mathrm{Pr}_{\widehat{\pi}} \left[ \tau_b = t + 1 | \tau = t, s_t = s \right] \mathrm{Pr}_{\widehat{\pi}} \left[ \tau = t, s_t = s \right]\\
    &= \Big( 1 - \widehat{\pi}_t (\pi^*_t (s) | s) \Big) \mathrm{Pr}_{\widehat{\pi}} \left[ \tau = t, s_t = s \right].
\end{align}
where in the last equation, we use the fact that the learner must play an action other than $\pi^*_t (s_t)$ to visit $b$ at time $t+1$. Next we take expectation with respect to the randomness of $\pi^*$ which conditioned on $(D,A)$ is drawn from $\mathrm{Unif} (\Pi_{\mathrm{mimic}} (D,A))$ which also specifies the underlying MDP $\mathcal{M} [\pi^*]$. Observe that the dependence of the second term $\mathrm{Pr}_{\widehat{\pi}} \left[ \tau = t, s_t = s \right]$ on $\pi^*$ comes from the probability computed with the underlying MDP chosen as $\mathcal{M} [\pi^*]$. However observe that it only depends on the characteristics of $\mathcal{M} [\pi^*]$ till time $t-1$ which are determined by $\pi^*_1,\cdots,\pi^*_{t-1}$. On the other hand, the first term $\left( 1 - \widehat{\pi}_t (\pi^*_t (s) | s) \right)$ depends only on $\pi^*_t$. As a consequence the two terms depend on a disjoint set of random variables, which are independent (since conditionally $\pi^* \sim \Pi_{\mathrm{mimic}} (D,A)$ defined in \cref{eq:Pi.mimic.DA})

Therefore taking expectation with respect to the randomness of $\pi^* \sim \mathrm{Unif} (\Pi_{\mathrm{mimic}} (D,A))$ and $\mathcal{M} = \mathcal{M} [\pi^*]$ (which defines the joint distribution $\mathcal{P} (D,A)$ in \cref{eq:Pi.mimic.DA}),
\begin{align}
    &\mathbb{E}_{(\pi^*, \mathcal{M}) \sim \mathcal{P} (D,A)} \Big[ \mathrm{Pr}_{\widehat{\pi}(D,A)} \left[ \tau_b = t + 1, \tau = t, s_t = s \right] \Big] \nonumber\\
    &= \mathbb{E}_{(\pi^*, \mathcal{M}) \sim \mathcal{P} (D,A)} \Big[ 1 - \widehat{\pi}_t (\pi^*_t (s_t) | s_t) \Big] \ \mathbb{E}_{(\pi^*, \mathcal{M}) \sim \mathcal{P} (D,A)} \Big[ \mathrm{Pr}_{\widehat{\pi}} \left[ \tau = t, s_t = s \right] \Big] \\
    &\overset{(a)}{=} \left( 1 - \frac{1}{|\mathcal{A}|} \right) \mathbb{E}_{(\pi^*, \mathcal{M}) \sim \mathcal{P} (D,A)} \Big[ \mathrm{Pr}_{\widehat{\pi}} \left[ \tau = t, s_t = s \right] \Big]
\end{align}
where in $(a)$, conditioned on $(D,A)$ we use the fact that either $(i)$ $s = b$, in which case $\tau \ne t$ and both sides are $0$, or (ii) if $s \ne b$, then $\tau = t$ implies that the state $s$ visited at time $t$ must not be observed in $D$, so $\pi^*_t (s) \sim \mathrm{Unif} (\mathcal{A})$. Using the fact that $\mathrm{Pr}_{\widehat{\pi}} \left[ \tau_b = t + 1, \tau = t, s_t = s\right] \le \mathrm{Pr}_{\widehat{\pi}} \left[ \tau_b = t + 1 , s_t = s\right]$ and summing over $s \in \mathcal{S}$ results in the inequality,
\begin{equation}
    \mathbb{E}_{(\pi^*, \mathcal{M}) \sim \mathcal{P} (D,A)} \Big[ \mathrm{Pr}_{\widehat{\pi}} \left[ \tau_b = t + 1 \right] \Big] \ge \left( 1 - \frac{1}{|\mathcal{A}|} \right) \mathbb{E}_{(\pi^*, \mathcal{M}) \sim \mathcal{P} (D,A)} \Big[ \mathrm{Pr}_{\widehat{\pi}} \left[ \tau = t \right] \Big]
\end{equation}
Multiplying both sides by $H-t$ and summing over $t=1,\cdots,H$,
\begin{equation}
    \mathbb{E}_{(\pi^*, \mathcal{M}) \sim \mathcal{P} (D,A)} \Big[ \mathbb{E}_{\widehat{\pi}} \left[ H-\tau_b + 1 \right] \Big] \ge \left( 1 - \frac{1}{|\mathcal{A}|} \right) \mathbb{E}_{(\pi^*, \mathcal{M}) \sim \mathcal{P} (D,A)} \Big[ \mathbb{E}_{\widehat{\pi}} \left[ H - \tau \right] \Big]
\end{equation}
here we use the fact that the initial distribution $\rho$ places no mass on the bad state $b$. Therefore, $\mathrm{Pr}_{\widehat{\pi}(D)} \left[ \tau_b = 1 \right] = \rho (b) = 0$. This equation in conjunction with \cref{eq:regretbound} completes the proof.

\subsubsection{Proof of \Cref{lemma:failuretime-sumbound}}

Firstly, in \Cref{lemma:inv-gamma} we show that $\mathbb{E} \left[ \mathrm{Pr}_{\widehat{\pi} (D,A)} [\tau \le \lfloor H/2 \rfloor] \right] \ge 1 - \left( 1 - \gamma \right)^{\lfloor H/2 \rfloor}$ where $\gamma$ is defined as $\sum_{s \in \mathcal{S}} \rho (s) (1 - \rho (s))^N$. Subsequently, in \Cref{lemma:gammabound} we show that $\gamma \gtrsim |\mathcal{S}|/N$. Putting these two results together proves the statement of \Cref{lemma:failuretime-sumbound}.

Along the way to proving \Cref{lemma:inv-gamma}, we introduce an auxiliary result.

\begin{lemma} \label[lemma]{lemma:failuretime-decomp}
Fix the dataset $(D,A)$ collected by the learner, and any policy $\pi^* \in \Pi_{\mathrm{mimic}} (D,A)$ (defined in \cref{eq:Pi.mimic.DA}). Recall that $\tau$ as defined in \Cref{lemma:hatvalue-UB} is the first time $t$ that the learner encounters a state $s_t \ne b$ that has not been visited in $D$ at time $t$.

For some $t \in [H]$, consider $\mathrm{Pr}_{\widehat{\pi}(D)} \left[ \tau = t \right]$ computed with the underlying MDP as $\mathcal{M} [\pi^*]$. Then,
\begin{equation}
    \mathrm{Pr}_{\widehat{\pi} (D,A)} [\tau = t] = \left( 1 - \rho \left( \mathcal{S}_t (D) \setminus \{ b \} \right) \right) \prod\nolimits_{t'=1}^{t-1} \rho \Big( \mathcal{S}_{t'} (D) \setminus \{ b \} \Big) 
\end{equation}
\end{lemma}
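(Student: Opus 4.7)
The plan is to unfold $\{\tau=t\}$ into a chain of successive state-visitation events and to exploit the renewal structure built into $\mathcal{M}[\pi^*]$.

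First I would make the WLOG reduction that, for the purpose of the surrounding Bayes lower bound, it suffices to consider learners $\widehat{\pi}$ that play the known action $\pi^A_{t'}(s)=\pi^*_{t'}(s)$ at every $s\in\mathcal{S}_{t'}(D)$. Any deviation at such a state deterministically sends the learner to the absorbing zero-reward state $b$, which only decreases $\mathbb{E}[J_{\mathcal{M}}(\widehat{\pi})]$ and hence only strengthens the lower bound on Bayes suboptimality. Next I would rewrite $\{\tau=t\}$ explicitly: since $b$ is absorbing and $\rho(b)=0$ forces $b\notin\mathcal{S}_{t'}(D)$ for every $t'$ (the expert never visits $b$ when generating $D$), the event is exactly $\{s_{t'}\in\mathcal{S}_{t'}(D)\setminus\{b\}\ \forall\, t'<t\}\cap\{s_t\notin\mathcal{S}_t(D)\cup\{b\}\}$.

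Under the mimicking WLOG, at every such $s_{t'}$ the learner plays $\pi^*_{t'}(s_{t'})$, and by the template of $\mathcal{M}[\pi^*]$ this triggers $s_{t'+1}\sim\rho$ independently of the past. A short induction on $t$ then multiplies out the chain: $\Pr[s_1\in\mathcal{S}_1(D)\setminus\{b\}]=\rho(\mathcal{S}_1(D)\setminus\{b\})$, and for each subsequent $t'<t$ the conditional probability $\Pr[s_{t'}\in\mathcal{S}_{t'}(D)\setminus\{b\}\mid\text{previous successes}]=\rho(\mathcal{S}_{t'}(D)\setminus\{b\})$, while the terminal factor $1-\rho(\mathcal{S}_t(D)\setminus\{b\})$ comes from $s_t\sim\rho$ failing to land in $\mathcal{S}_t(D)$ (automatically avoiding $b$ since $\rho(b)=0$). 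Multiplying the factors gives the claimed product form.

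The main obstacle is the WLOG mimicking step — articulating it carefully so that the identity, which as stated would fail for a learner that deviates at visited states, can be invoked legitimately inside the later lower bound. Once that reduction is fixed, the identity itself is transparent; it is also manifestly the same for every $\pi^*\in\Pi_{\mathrm{mimic}}(D,A)$, because the transitions encountered on $\{\tau=t\}$ depend only on $\pi^*$ restricted to $\bigcup_{t'<t}\mathcal{S}_{t'}(D)$, and this restriction is pinned down by $A$.
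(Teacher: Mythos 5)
Your core computation is the same as the paper's: unfold $\{\tau = t\}$ into the first-passage event, use the absorbing structure of $b$ to conclude that on this event the learner must have played $\pi^*_{t'}(s_{t'})$ at every $t'<t$, and then exploit the renewal $s_{t'+1}\sim\rho$ to telescope the product. Where you genuinely diverge is in how a non-mimicking learner is handled. The paper asserts the identity for an arbitrary $\widehat{\pi}$, arguing that on $\{\tau=t\}$ the learner must have played the expert's action and then replacing $\mathrm{Pr}_{\widehat{\pi}}$ by $\mathrm{Pr}_{\pi^*}$; read literally this drops the factors $\widehat{\pi}_{t'}\big(\pi^*_{t'}(s_{t'})\,\big|\,s_{t'}\big)\le 1$, so for a learner that randomizes at observed states the stated equality is really only a ``$\le$'' --- which is the useless direction for the downstream lower bound on $\mathrm{Pr}[\tau\le\lfloor H/2\rfloor]$ in \Cref{lemma:inv-gamma,lemma:failuretime-sumbound}. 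You correctly spot this and repair it with an explicit reduction to learners that play the known action $\pi^A_{t'}(s)$ at every $s\in\mathcal{S}_{t'}(D)$, justified by domination (reward $1$ and a renewal from $\rho$ versus reward $0$ and absorption in $b$). That reduction is legitimate and is in fact what the paper implicitly needs; the only cost, which you flag, is that it must be installed once at the top of the lower-bound argument so that the later lemmas inherit the restriction. Two small cautions: (i) in the active setting $D$ is generated by the learner's own roll-outs, so $b$ can lie in $\mathcal{S}_{t'}(D)$ for $t'\ge 2$; the reduction of the event to $\{s_{t'}\in\mathcal{S}_{t'}(D)\setminus\{b\}\}$ must come from absorption of $b$ (which you also invoke), not from $b\notin\mathcal{S}_{t'}(D)$; (ii) the domination claim in the WLOG deserves one line of a hybrid-policy argument, since changing the action at an observed state alters the downstream state distribution, though the comparison is clean because the alternative is permanent zero reward.
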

\begin{proof}
First observe that, the event $\{ \tau = t \}$ implies that the learner only visits states in $\mathcal{S}_{t'} (D) \cup \{ b \}$ till time $t' < t$, and visits a state in $\mathcal{S}_\tau (D) \cup \{ b \}$ at time $t$. That is,
\begin{align}
    \mathrm{Pr}_{\widehat{\pi}} [\tau = t] &= \mathrm{Pr}_{\widehat{\pi}} \Big[ s_t \not\in \mathcal{S}_t (D) \cup \{ b \}, \ \forall t' < t, s_{t'} \in \mathcal{S}_{t'} (D) \cup \{ b \} \Big] \\
    &= \mathrm{Pr}_{\widehat{\pi}} \Big[ s_t \not\in \mathcal{S}_t (D) \cup \{ b \}, \ \forall t' < t, s_{t'} \in \mathcal{S}_{t'} (D) \setminus \{ b \} \Big]
\end{align}
where in the last equation, we use the fact that by construction of $\mathcal{M} [\pi^*]$, the learner is forced to visit the state $b$ at time $t$ if the state $b$ is visited at any time $t' < t$.

Moreover, since the learner never visits $b$ till time $t-1$, this implies that the learner must play the expert's action at each visited state until time $t-1$ (otherwise the state $b$ is visited with probability $1$ at time $t$). Therefore,
\begin{equation}
    \mathrm{Pr}_{\widehat{\pi}} [\tau = t] = \mathrm{Pr}_{\pi^*} \Big[ s_t \not\in \mathcal{S}_t (D) \cup \{ b \}, \ \forall t' < t, s_{t'} \in \mathcal{S}_{t'} (D) \setminus \{ b \} \Big].
\end{equation}
Since under the policy $\pi^*$ rolled out on $\mathcal{M} [\pi^*]$, the distribution over states induced is i.i.d. across time and drawn from $\rho$, we have that,
\begin{equation}
    \mathrm{Pr}_{\widehat{\pi}} [\tau = t] = \left( 1 - \rho \left( \mathcal{S}_t (D) \cup \{ b \} \right) \right) \prod\nolimits_{t'=1}^{t-1} \rho (\mathcal{S}_{t'} (D) \setminus \{ b \}) 
\end{equation}
However the distribution $\rho$ has no mass on the state $b$. Therefore $\rho \left( \mathcal{S}_t (D) \cup \{ b \} \right) = \rho \left( \mathcal{S}_t (D) \setminus \{ b \} \right)$ and the proof concludes.
\end{proof}

\begin{corollary} \label[corollary]{corr:H/2bound}
$\mathrm{Pr}_{\widehat{\pi} (D,A)} [\tau \le \lfloor H/2 \rfloor] = 1 - \prod\nolimits_{t=1}^{\lfloor H/2 \rfloor} \rho \Big( \mathcal{S}_t (D) \setminus \{ b \} \Big)$.
\end{corollary}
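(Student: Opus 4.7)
The plan is to derive this corollary directly from \Cref{lemma:failuretime-decomp} by summing over the possible values of the stopping time $\tau$ and recognizing a telescoping product. Let $\alpha_t \triangleq \rho(\mathcal{S}_t(D) \setminus \{b\})$ for brevity, so that \Cref{lemma:failuretime-decomp} states
\begin{equation}
\mathrm{Pr}_{\widehat{\pi}(D,A)}[\tau = t] = (1 - \alpha_t) \prod_{t'=1}^{t-1} \alpha_{t'}.
\end{equation}

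I would then write $\mathrm{Pr}_{\widehat{\pi}(D,A)}[\tau \le \lfloor H/2 \rfloor] = \sum_{t=1}^{\lfloor H/2 \rfloor} \mathrm{Pr}_{\widehat{\pi}(D,A)}[\tau = t]$, substitute the closed form above, and observe the telescoping identity
\begin{equation}
\sum_{t=1}^{T} (1-\alpha_t) \prod_{t'=1}^{t-1} \alpha_{t'} = \sum_{t=1}^{T} \left( \prod_{t'=1}^{t-1} \alpha_{t'} - \prod_{t'=1}^{t} \alpha_{t'} \right) = 1 - \prod_{t=1}^{T} \alpha_t,
\end{equation}
applied with $T = \lfloor H/2 \rfloor$. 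This immediately yields the claimed expression once one re-expands $\alpha_t = \rho(\mathcal{S}_t(D) \setminus \{b\})$.

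There is essentially no obstacle here since \Cref{lemma:failuretime-decomp} does all of the heavy lifting; the corollary is a purely algebraic consequence. The only minor subtlety worth a sentence of justification is that the events $\{\tau = t\}$ for $t = 1, \ldots, \lfloor H/2 \rfloor$ are disjoint (by definition of the stopping time), which legitimizes splitting $\mathrm{Pr}[\tau \le \lfloor H/2 \rfloor]$ into a sum. Alternatively, and perhaps even cleaner, one could first observe that $\{\tau > T\} = \{\forall t' \le T,\ s_{t'} \in \mathcal{S}_{t'}(D) \setminus \{b\}\}$ and then re-run the argument in the proof of \Cref{lemma:failuretime-decomp} to directly conclude $\mathrm{Pr}_{\widehat{\pi}(D,A)}[\tau > T] = \prod_{t=1}^{T} \alpha_t$, from which the corollary follows by complementation.
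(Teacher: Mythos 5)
Your proposal is correct and matches the paper's (implicit) derivation: the corollary is presented as an immediate consequence of \Cref{lemma:failuretime-decomp}, obtained exactly by summing the disjoint events $\{\tau = t\}$ over $t \le \lfloor H/2 \rfloor$ and telescoping the resulting products. The alternative complementation route you mention is equally valid and equivalent.
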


\begin{lemma} \label[lemma]{lemma:inv-gamma}
Fix some policy $\pi^* \in \Pi_{\mathrm{mimic}} (D,A)$ and the MDP as $\mathcal{M} [\pi^*]$. Then,
\begin{equation}
    \mathbb{E} \left[ \mathrm{Pr}_{\widehat{\pi} (D,A)} [\tau \le \lfloor H/2 \rfloor] \right] \ge 1 - \left( 1 - \gamma \right)^{\lfloor H/2 \rfloor}
\end{equation}
where $\gamma = \sum_{s \in \mathcal{S}} \rho (s) (1 - \rho (S))^N$.
\end{lemma}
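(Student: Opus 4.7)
The plan is to apply Corollary~\ref{corr:H/2bound} directly and then reduce the claim to a routine calculation on the first-visit statistics of $\rho$. From the corollary,
\[
    \mathrm{Pr}_{\widehat{\pi}(D,A)}[\tau \le \lfloor H/2 \rfloor] = 1 - \prod_{t=1}^{\lfloor H/2 \rfloor} \rho(\mathcal{S}_t(D) \setminus \{b\}),
\]
so after taking expectations it suffices to show that $\mathbb{E}\bigl[\prod_{t=1}^{\lfloor H/2 \rfloor} \rho(\mathcal{S}_t(D) \setminus \{b\})\bigr] \le (1-\gamma)^{\lfloor H/2 \rfloor}$.

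The key observation I would use is that, in the constructed MDP $\mathcal{M}[\pi^*]$, at any non-bad state the expert's action renews the next state according to $\rho$, and by construction $\rho$ places no mass on $b$. Consequently, when one rolls out $\pi^*$ to generate the $N$ trajectories in $D$, every state $s_t^n$ (for $t \in [H]$, $n \in [N]$) is an i.i.d.\ draw from $\rho$. In particular the random sets $\mathcal{S}_1(D), \mathcal{S}_2(D), \ldots$ are mutually independent, each being the support of $N$ independent samples from $\rho$. By independence I can factor the expectation as $\prod_t \mathbb{E}[\rho(\mathcal{S}_t(D)\setminus\{b\})]$ and each factor is identical.

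Finally I would compute a single factor. Using $\rho(b)=0$, linearity of expectation, and the fact that the chance a particular state $s$ is absent from $N$ i.i.d.\ draws is $(1-\rho(s))^N$,
\[
    \mathbb{E}\bigl[\rho(\mathcal{S}_t(D)\setminus\{b\})\bigr] = \sum_{s\in\mathcal{S}} \rho(s)\bigl(1-(1-\rho(s))^N\bigr) = 1 - \gamma.
\]
Combining gives $\mathbb{E}\bigl[\mathrm{Pr}_{\widehat{\pi}(D,A)}[\tau\le\lfloor H/2\rfloor]\bigr] = 1 - (1-\gamma)^{\lfloor H/2\rfloor}$, which is exactly the claim (in fact with equality).

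The only delicate point, which I expect to be the main obstacle, is justifying the i.i.d.\ structure of $\{s_t^n\}$ in the active setting, since there $D$ is generated by the learner's adaptive policies rather than by rolling out $\pi^*$. Here I would invoke the reduction sketched in the remark after Theorem~\ref{theorem:LB.p1}: the MDP is designed so that any deviation from the expert's action sends the learner to the absorbing state $b$, after which no further information about $\pi^*$ is revealed. Thus without loss of generality the most informative learner strategy just replays $\pi^*$, and the dataset it collects has the same distribution as $N$ independent rollouts of $\pi^*$, restoring the i.i.d.\ factorization used above.
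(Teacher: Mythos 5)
Your first two steps (invoking Corollary~\ref{corr:H/2bound} and computing $\mathbb{E}[\rho(\mathcal{S}_t(D)\setminus\{b\})]=1-\gamma$ for $N$ i.i.d.\ draws from $\rho$) are fine, and in the pure no-interaction setting your factorization argument is correct and even yields equality. The genuine gap is in how you handle the active setting, which is the setting this lemma actually lives in: here $D$ is generated by the learner's adaptive policies $\pi_1,\dots,\pi_N$, and a trajectory that deviates from the expert's action at time $t$ is absorbed at $b$, so $\mathcal{S}_{t+1}(D)\setminus\{b\}$ is the support of only $N-N'_{t+1}$ fresh draws from $\rho$, where $N'_{t+1}$ is a random variable depending on the learner's (history-dependent) choices. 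Consequently the sets $\mathcal{S}_1(D),\mathcal{S}_2(D),\dots$ are \emph{not} independent, and the identity $\mathbb{E}[\prod_t \rho(\mathcal{S}_t(D)\setminus\{b\})]=\prod_t\mathbb{E}[\rho(\mathcal{S}_t(D)\setminus\{b\})]$ does not hold in general. Your proposed repair --- ``without loss of generality the most informative learner replays $\pi^*$'' --- is not a valid reduction here: the lemma is a lower bound that must hold for \emph{every} learner, and showing it for the learner that happens to follow $\pi^*$ does not establish it for one that deviates; the informativeness heuristic from the remark addresses what the learner can learn, not the distribution of the quantity $\prod_t\rho(\mathcal{S}_t(D)\setminus\{b\})$ that you need to control.

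The fix, which is what the paper does, is to replace independence by a one-step conditional recursion: condition on $D_{\le\tau,<\tau}$, note that the fresh states at time $\tau+1$ are conditionally i.i.d.\ $\rho$ but only $N-N'$ of them exist, and use the monotonicity $(1-\rho(s))^{N-N'}\ge(1-\rho(s))^N$ to conclude $1-\mathbb{E}[\rho(\mathcal{S}_{\tau+1}(D)\setminus\{b\})\mid D_{\le\tau,<\tau}]\ge\gamma$ uniformly over the history; multiplying by the (history-measurable) partial product and iterating gives $\mathbb{E}[\prod_{t=1}^{\lfloor H/2\rfloor}\rho(\mathcal{S}_t(D)\setminus\{b\})]\le(1-\gamma)^{\lfloor H/2\rfloor}$, which is exactly the inequality you need (the direction of the inequality also explains why the lemma is stated as a bound rather than your equality). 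Alternatively you could make your reduction rigorous via a coupling in which the deviating learner's visited sets are pointwise contained in those of the expert-following learner, but some such argument is required --- as written, the independence step fails.
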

\begin{proof}
Recall that the learner rolls out policies $\pi_1,\cdots,\pi_N$ to generate trajectories $\textsf{tr}_1,\cdots,\textsf{tr}_N$. First observe that, conditioned on the learner's dataset truncated till the states visited at time $t$,
\begin{align}
    &\mathbb{E} \left[ \prod\nolimits_{t=1}^\tau \rho \Big( \mathcal{S}_t (D) \setminus \{ b \} \Big) \right] - \mathbb{E} \left[ \prod\nolimits_{t=1}^{\tau+1} \rho \Big( \mathcal{S}_t (D) \setminus \{ b \} \Big) \right] \nonumber\\
    &= \mathbb{E} \left[ \prod\nolimits_{t=1}^\tau \rho \Big( \mathcal{S}_t (D) \setminus \{ b \} \Big) \Big(1 - \mathbb{E} \left[ \rho \left( \mathcal{S}_{\tau+1} (D) \setminus \{ b \} \right) \Big| D_{\le \tau, < \tau} \right] \Big) \right] \label{eq:bound-prev}
\end{align}
where in the last equation we use the fact $\mathcal{S}_t (D)$ for all $t \le \tau$ is a measurable function of $D_{\le \tau, < \tau}$. Conditioned on $D_{\le \tau,< \tau}$, consider the distribution over actions $a_\tau^n$ played by the learner in different trajectories. If $a_\tau^n = \pi^*_t (s_\tau^n)$, the state $s_{\tau+1}^n$ is renewed in the distribution $\rho$. If $a_\tau^n$ is any other action, $s_{\tau+1}^n = b$ with probability $1$, and does not provide any contribution to $\rho \left( \mathcal{S}_{\tau+1} (D) \setminus \{ b \} \right)$. Let the random variable $N'$ denote the number of trajectories that have already visited $b$ prior to time $\tau$ or play an action other than the expert's action at time $\tau$. By linearity of expectation,
\begin{align}
    1 - \mathbb{E} \left[ \rho \left( \mathcal{S}_{\tau+1} (D) \setminus \{ b \} \right) \Big| D_{\le \tau, < \tau} \right] &= \mathbb{E} \left[ \sum\nolimits_{s \in \mathcal{S} \setminus \{ b \}} \rho (s) \left(1 - \rho (s) \right)^{N'} \middle| D_{\le \tau, < \tau} \right] \\
    &\ge \sum\nolimits_{s \in \mathcal{S} \setminus \{ b \}} \rho (s) \left(1 - \rho (s) \right)^N \label{eq:bound-next}
\end{align}
Recalling that $\gamma$ is defined as the constant $\sum_{s \in \mathcal{S}} \rho (s) \left(1 - \rho (s) \right)^N$ and $\rho (b) = 0$, from \cref{eq:bound-prev,eq:bound-next},
\begin{equation}
    \mathbb{E} \left[ \prod\nolimits_{t=1}^{\tau+1} \rho \Big( \mathcal{S}_t (D) \setminus \{ b \} \Big) \right] \ge (1 - \gamma) \mathbb{E} \left[ \prod\nolimits_{t=1}^\tau \rho \Big( \mathcal{S}_t (D) \setminus \{ b \} \Big) \right] \label{eq:rec}
\end{equation}
We also have that $\mathbb{E} [\rho (\mathcal{S}_1 (D) \setminus \{ b \})] = 1 - \sum_{s \in \mathcal{S} \setminus \{ b \}} \rho (s) (1 - \rho (s))^N = 1 - \gamma$ since the initial state $s$ in each trajectory in $D$ is sampled independently and identically from $\rho$. Using this fact and recursing \cref{eq:rec} over $\tau = 1,\cdots,\lfloor H/2 \rfloor-1$ gives,
\begin{equation}
    \mathbb{E} \left[ \prod\nolimits_{t=1}^{\lfloor H/2 \rfloor} \rho \Big( \mathcal{S}_t (D) \setminus \{ b \} \Big) \right] \ge (1 - \gamma)^{\lfloor H/2 \rfloor}.
\end{equation}
Invoking \Cref{corr:H/2bound} completes the proof.
\end{proof}

\begin{lemma} \label[lemma]{lemma:gammabound}
$\gamma$, defined in \Cref{lemma:inv-gamma} as $\sum_{s \in \mathcal{S}} \rho (s) (1 - \rho (s))^N$ is $\ge \frac{|\mathcal{S}|-2}{e (N+1)}$.
\end{lemma}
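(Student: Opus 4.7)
The plan is to plug in the explicit form of the initial distribution $\rho = \{\zeta, \ldots, \zeta, 1-(|\mathcal{S}|-2)\zeta, 0\}$ with $\zeta = \frac{1}{N+1}$ and then lower-bound the sum by keeping only the contribution of the $|\mathcal{S}|-2$ states with mass exactly $\zeta$. The state with mass $1-(|\mathcal{S}|-2)\zeta$ contributes a non-negative term that we simply discard, and the bad state $b$ has zero mass so contributes nothing. This gives
\begin{equation}
    \gamma \;\ge\; (|\mathcal{S}|-2) \cdot \zeta \cdot (1-\zeta)^N \;=\; \frac{|\mathcal{S}|-2}{N+1}\left(1-\frac{1}{N+1}\right)^N.
\end{equation}

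The remaining work is the scalar inequality $(1-\tfrac{1}{N+1})^N \ge e^{-1}$. I would prove this by taking logarithms and using $\ln(1-x) \ge -\tfrac{x}{1-x}$ for $x \in (0,1)$: setting $x = \tfrac{1}{N+1}$ yields $\ln(1 - \tfrac{1}{N+1}) \ge -\tfrac{1}{N}$, so $N \ln(1-\tfrac{1}{N+1}) \ge -1$ and exponentiating gives $(1-\tfrac{1}{N+1})^N \ge e^{-1}$. Substituting this into the displayed bound above yields $\gamma \ge \frac{|\mathcal{S}|-2}{e(N+1)}$, completing the proof.

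There is no real obstacle here; the lemma is a direct calculation once one writes down $\rho$ explicitly. The only subtlety is choosing a lower bound for $(1-\tfrac{1}{N+1})^N$ that is both tight enough (the correct answer is asymptotically $e^{-1}$) and elementary; the inequality $\ln(1-x) \ge -x/(1-x)$ is the cleanest way to get $e^{-1}$ exactly.
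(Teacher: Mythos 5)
Your proposal is correct and matches the paper's argument exactly: the paper also lower-bounds $\gamma$ by retaining only the $|\mathcal{S}|-2$ states of mass $\tfrac{1}{N+1}$ under $\rho$ and then applies $\left(1-\tfrac{1}{N+1}\right)^N \ge e^{-1}$. Your elementary justification of that scalar inequality via $\ln(1-x) \ge -x/(1-x)$ is valid (the paper simply asserts the inequality without proof).
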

\begin{proof}
By the definition of $\rho$, we have that,
\begin{equation}
    \gamma = \sum_{s \in \mathcal{S}} \rho (s) (1 - \rho (s))^N \overset{(i)}{\ge} \frac{|\mathcal{S}|-2}{N+1} \left( 1 - \frac{1}{N+1} \right)^N \ge \frac{|\mathcal{S}|-2}{e(N+1)}.
\end{equation}
where in $(i)$ we lower bound by only considering the $|\mathcal{S}|-2$ states having mass $=\frac{1}{N+1}$ under $\rho$.
\end{proof}

\subsection{Lower bound in the known-transition setting}

\subsubsection{Proof of \Cref{lemma:cond-is-mimic:Inf}}

The proof of this result closely follows that of \Cref{lemma:cond-is-mimic}. Fix some policy $\pi \in \Pi_{\mathrm{det}}$. Consider any time $t \in [H]$ and state $s \in \mathcal{S}_t (D)$ which is visited in some trajectory in the dataset at time $t$. If $\pi_t (s)$ does not match the unique action $a^*_t (s)$ played at time $t$ in any trajectory in $D$ that visits $s$ at this time, the likelihood of $\pi$ given $D$ is exactly $0$ (recall we assume that the expert's policy is deterministic). On the contrary, the conditional probability of observing the expert dataset $D$ does not depend on the expert's action on the states that were not observed in $D$, since no trajectory visits these states. On these states the expert's action marginally follows the uniform distribution over $\mathcal{A}$. Thus the result follows.

\subsubsection{Proof of \Cref{lemma:errorbound:Inf}}

Observe that,
\begin{align}
    &\mathbb{E}_{(\pi^* , \mathbf{r}) \sim \mathcal{P}' (D)} \left[ H - J_{\mathbf{r}} (\widehat{\pi} (D,P,\rho)) \right] \nonumber\\
    &= \mathbb{E}_{(\pi^* , \mathbf{r}) \sim \mathcal{P}' (D)} \left[ \mathbb{E}_{\widehat{\pi}} \left[ \sum\nolimits_{t=1}^H 1 - \mathbf{r}_t (s_t,a_t) \right]\right] \\
    &\ge \sum\nolimits_{t=1}^H \mathbb{E}_{(\pi^* , \mathbf{r}) \sim \mathcal{P}' (D)} \left[ \mathbb{E}_{\widehat{\pi}} \left[ \mathbbm{1} (s_1 \not\in \mathcal{S}_1 (D)) \Big( 1 - \mathbf{r}_t (s_t,a_t) \Big) \right]\right] \label{eq:uncond-bound}
\end{align}
By construction of the $\mathcal{M} [\pi^*]$ and $P$ each state $s \in \mathcal{S}$ is absorbing. Therefore, $s_1 \not\in \mathcal{S}_1 (D) \iff \{ \forall t \in [H], \ s_t \not\in \mathcal{S}_t (D) \}$. By the structure of the reward function $\mathbf{r} [\pi^*]$, the learner accrues a reward of $1$ at some state if and only if the learner plays the expert's action at this state. Therefore, $\mathbf{r}_t (s_t,a_t) = \mathbbm{1} (a_t = \pi^*_t (s_t))$ and,
\begin{align}
    &\mathbb{E}_{(\pi^* , \mathbf{r}) \sim \mathcal{P}' (D)} \left[ \mathbb{E}_{\widehat{\pi}} \left[ \mathbf{r}_t (s_t,a_t) \Big| s_1 \not\in \mathcal{S}_1 (D) \right]\right] \nonumber\\
    &= \mathbb{E}_{(\pi^* , \mathbf{r}) \sim \mathcal{P}' (D)} \left[ \mathbb{E}_{\widehat{\pi}} \left[ \mathbbm{1} (a_t = \pi^*_t (s_t)) \Big| s_1 \not\in \mathcal{S}_1 (D) \right]\right] \label{eq:reward-replace}
\end{align}
From \Cref{lemma:cond-is-mimic:Inf} observe that conditioned on $D$, the expert's policy $\pi^*$ is sampled uniformly from $\Pi_{\mathrm{mimic}} (D)$. Since we condition on $s_1 \not\in \mathcal{S}_1 (D) \iff s_t \not\in \mathcal{S}_t (D)$ the state $s_t$ is not visited in any trajectory in $D$ at time $t$. This implies that the expert's action $\pi^*_t (s_t)$ is uniformly sampled from $\mathcal{A}$. Therefore,
\begin{equation}
    \mathbb{E}_{\widehat{\pi}} \left[ \mathbb{E}_{(\pi^* , \mathbf{r}) \sim \mathcal{P}' (D)} \left[ \mathbbm{1} (a_t {=} \pi^*_t (s_t)) \Big| s_1 \not\in \mathcal{S}_1 (D) \right]\right] = \frac{1}{|\mathcal{A}|} \sum_{a \in \mathcal{A}} \mathbb{E}_{\widehat{\pi}} \left[ \mathbbm{1} (a_t {=} a) \Big| s_1 \not\in \mathcal{S}_1 (D) \right] = \frac{1}{|\mathcal{A}|}. \nonumber
\end{equation}
Plugging this into \cref{eq:reward-replace} and subtracting $1$ from both sides we get that,
\begin{equation}
    \mathbb{E}_{(\pi^* , \mathbf{r}) \sim \mathcal{P}' (D)} \left[ \mathbb{E}_{\widehat{\pi}(D,P,\rho)} \left[ 1 - \mathbf{r}_t (s_t,a_t) \Big| s_1 \not\in \mathcal{S}_1 (D) \right]\right] = 1 - \frac{1}{|\mathcal{A}|}.
\end{equation}
Plugging this back into \cref{eq:uncond-bound} we get that,
\begin{equation}
    \mathbb{E}_{(\pi^* , \mathbf{r}) \sim \mathcal{P}' (D)} \left[ H - J_{\mathbf{r}} (\widehat{\pi} (D,P,\rho)) \right] \ge H \left( 1 - \frac{1}{|\mathcal{A}|} \right) \mathrm{Pr}_{\widehat{\pi} (D, P,\rho)} \left[ s_1 \not\in \mathcal{S}_1 (D) \right]
\end{equation}
Since $s_1$ is sampled independently from $\rho$, the proof of the result concludes.

\subsubsection{Proof of \Cref{lemma:NsimInf-lasting}}

Note that the dataset $D$ follows the posterior distribution generated by rolling out $\pi^*$ for $N$ episodes when $\pi^*$ is drawn from the uniform prior $\mathrm{Unif} (\Pi_{\mathrm{det}})$. Irrespective of the choice of $\pi^*$, note that the initial distribution over states is still $\rho$. Therefore,
\begin{align}
    \mathbb{E} [1 - \rho (\mathcal{S}_1 (D)] &= \sum_{s \in \mathcal{S}} \rho (s) (1 - \rho (s))^{N} \\
    &\overset{(i)}{\ge} \frac{|\mathcal{S}|-1}{N+1} \left( 1 - \frac{1}{N+1} \right)^{N} \ge \frac{|\mathcal{S}|-1}{e(N+1)}
\end{align}
where in $(i)$ we lower bound by considering only the $|\mathcal{S}|-1$ states having mass $\frac{1}{N+1}$ under $\rho$. Plugging this back into \cref{eq:Rlowerbound-3:Inf} completes the proof of the theorem.

\end{appendices}

\end{document}